\newcommand{\ind}[1]{\mathbbm{1}\lbrace #1 \rbrace}
\newcommand{\piref}{\pi_{\mathrm{b}}}
\newcommand{\M}{\mathcal{M}}
\renewcommand{\S}{\mathcal{S}}
\newcommand{\A}{\mathcal{A}}
\renewcommand{\P}{\mathbb{P}}
\newcommand{\Ppi}{\mathbb{P}^{\pi}}
\renewcommand{\Q}{\mathbb{Q}}   
\newcommand{\Qpi}{\mathbb{Q}^{\pi}}  
\newcommand{\q}{q}
\renewcommand{\L}{\mathcal{L}}
\newcommand{\D}{\mathcal{D}}
\newcommand{\Err}{\mathcal{E}}
\renewcommand{\F}{\mathcal{F}}
\newcommand{\X}{\mathcal{X}}
\newcommand{\Y}{\mathcal{Y}}
\newcommand{\bx}{\bm{x}}
\newcommand{\by}{\bm{y}}
\newcommand{\bphi}{\bm{\phi}}
\newcommand{\btheta}{\bm{\theta}}
\newcommand{\bmu}{\bm{\mu}}
\newcommand{\bomega}{\bm{\omega}}
\newcommand{\bmupi}{\bmu^{\pi}}
\newcommand{\hbphi}{\hat{\bm{\phi}}}
\newcommand{\hbmu}{\hat{\bm{\mu}}}
\newcommand{\hbmupi}{\hat{\bmu}^{\pi}}
\newcommand{\hP}{\hat{\mathbb{P}}}
\newcommand{\hPpi}{\hat{\mathbb{P}}^{\pi}}
\newcommand{\hQpi}{\hat{\mathbb{Q}}^{\pi}}
\newcommand{\sbphi}{\bm{\phi}^{\star}}
\newcommand{\sbmupi}{\bm{\mu}^{\pi,\star}}
\newcommand{\fs}{f^{\star}}
\newcommand{\CP}{C_{\mathbb{P}}}
\newcommand{\Cpi}{C_{\infty}^{\pi}}
\newcommand{\Ccov}{C_{\mathrm{cov}}}
\newcommand{\Creg}{C_{\mathrm{reg}}}
\newcommand{\Div}{\mathsf{D}}
\newcommand{\replearn}{\textsc{RepLearn}\xspace}
\newcommand{\rhoP}{\rho_{\vphantom{\hP} \P}}
\newcommand{\tF}{\widetilde{\mathcal{F}}}
\newcommand{\tD}{\widetilde{\mathcal{D}}}
\newcommand{\hf}{\hat{f}}
\renewcommand{\Pr}{\mathrm{Pr}}
\newcommand{\Pneg}{P_{\mathrm{neg}}}
\newcommand{\Cd}{C_{d}}
\newcommand{\reg}{\mathrm{reg}}
  \resizebox{\linewidth}{!}{\(
    \BODY
  \)}
  \resizebox{\linewidth}{!}{\(
  \begin{aligned}
    \BODY
  \end{aligned}
  \)}
\def\section{\@startsection{section}{1}%
  {\z@}%
  {-3.0ex plus -0.5ex minus -0.5ex}%
  {1.5ex plus 0.5ex}%
  {\large\bfseries\raggedright}%
}
\def\subsection{\@startsection{subsection}{2}%
  {\z@}%
  {-2.4ex plus -0.4ex minus -0.4ex}%
  {1.2ex plus 0.4ex}%
  {\normalsize\bfseries\raggedright}%
}
\def\subsubsection{\@startsection{subsubsection}{3}%
  {\z@}%
  {-2.4ex plus -0.4ex minus -0.4ex}%
  {1.2ex plus 0.4ex}%
  {\normalsize\bfseries\raggedright}%
}
\def\paragraph{\@startsection{paragraph}{4}%
  {\z@}%
  {2pt \@plus 1pt \@minus 1pt}%
  {-1em}%
  {\normalsize\bfseries}%
}
\newcommand{\algname}{\textsc{SpectralDICE}\xspace}
\begin{document}
  \allowdisplaybreaks[4]
  \setlength{\abovedisplayskip}{6pt minus 2pt}
  \setlength{\abovedisplayshortskip}{6pt minus 2pt}
  \setlength{\belowdisplayskip}{6pt minus 2pt}
  \setlength{\belowdisplayshortskip}{6pt minus 2pt}
  \setlength{\jot}{1pt}  
  \setlength{\floatsep}{1ex}
  \setlength{\textfloatsep}{1ex}

  \setlength{\medskipamount}{3.0pt plus 1.0pt minus 1.0pt}


  \title{Primal-Dual Spectral Representation for Off-policy Evaluation}
  \author[1]{Yang Hu$^*$} 
  \author[2]{Tianyi Chen\thanks{The first two authors contribute equally to this paper.}} 
  \author[1]{Na Li}
  \author[2]{Kai Wang}
  \author[2]{Bo Dai\thanks{Emails: Yang Hu (yanghu@g.harvard.edu), Tianyi Chen (tchen667@gatech.edu), Na Li (nali@seas.harvard.edu), Kai Wang (kwang692@gatech.edu), Bo Dai (bodai@cc.gatech.edu)}}
  \affil[1]{School of Engineering and Applied Sciences, Harvard University}
  \affil[2]{School of Computational Science and Engineering, Georgia Institute of Technology}
  \date{}

  \maketitle


  \begin{abstract}
  Off-policy evaluation (OPE) is one of the most fundamental problems in reinforcement learning (RL) to estimate the expected long-term payoff of a given target policy with \emph{only} experiences from another behavior policy that is potentially unknown. The distribution correction estimation (DICE) family of estimators have advanced the state of the art in OPE by breaking the \emph{curse of horizon}. 
  However, the major bottleneck of applying DICE estimators lies in the difficulty of solving the saddle-point optimization involved, especially with neural network implementations. In this paper, we tackle this challenge by establishing a \emph{linear representation} of value function and stationary distribution correction ratio, \ie, primal and dual variables in the DICE framework, using the spectral decomposition of the transition operator. Such primal-dual representation not only bypasses the non-convex non-concave optimization in vanilla DICE, therefore enabling an computational efficient algorithm, but also paves the way for more efficient utilization of historical data.
  We highlight that our algorithm, \algname, is the first to leverage the linear representation of primal-dual variables that is both computation and sample efficient, the performance of which is supported by a rigorous theoretical sample complexity guarantee and a thorough empirical evaluation on various benchmarks.
\end{abstract}
  
\section{Introduction}\label{sec:1-introduction}

The past decade has witnessed the ubiquitous success of reinforcement learning (RL) across various domains. 
Despite the original rationale that RL agents should learn a reward-maximizing policy from continuous interactions with the environment, there also exist a wide range of applicational scenarios where \emph{online} interaction with the environment may be expensive, inefficient, risky, unethical, and/or even infeasible, examples of which include robotics \citep{kalashnikov2018scalable, kahn2018composable}, autonomous driving \citep{shi2021offline, fang2022offline}, healthcare \citep{jagannatha2018towards, gottesman2018evaluating}, education \citep{mandel2014offline, slim2021markov}, dialogue systems \citep{jaques2019way, jiang2021towards} and recommendation systems \citep{li2011unbiased, chen2019top}. 
These application scenarios motivate the study of \emph{offline} RL, where the learning agent only has access to historical data collected by a separate behavior policy.

Off-policy evaluation (OPE) is one of the most fundamental problems in offline RL that aims at estimating the expected cumulative reward of a given target policy using only historical data collected by a different, potentially unknown behavior policy.
In the past decade, various off-policy performance estimators have been proposed \citep{hanna2019importance, xie2019towards, jiang2016doubly, foster2021offline}. However, these estimators generally suffer from the \emph{curse of horizon}~\citep{liu2018breaking}---step-wise variances accumulate in a multiplicative way, resulting in prohibitively high trajectory variances and thus unreliable estimators.
The recently proposed Distribution Correction Estimation (DICE) family of estimators have advanced the state of the art in OPE, leveraging the primal-dual formulation of policy evaluation for a saddle-point optimization approach that directly estimates the stationary distribution correction ratio, and hence breaking the curse of horizon \citep{nachum2019dualdice, nachum2019algaedice}.

Nevertheless, as systems scale up in terms of the size of state-action spaces, the saddle-point optimization in the formulation of DICE estimators become increasingly challenging to solve. Such \emph{curse of dimensionality} is common for RL methods in general, and people have been working to alleviate the computational burden by exploiting function approximators.
However, many known function approximators require additional assumptions to ensure computational and statistical properties~\citep{gordon1995stable,jiang2017contextual,chen2019information, zhan2022offline, che2024target}, which may not be easily statisfiable in practice. Moreover, the induced optimization upon function approximators may be difficult to solve~\citep{boyan1994generalization,baird1995residual,tsitsiklis1996analysis}.
In particular, under a generic neural network parametrization, computing the DICE estimator~\citep{nachum2020reinforcement} requires solving non-convex non-concave saddle-point optimizations, which is known to be NP-hard in theory and also yields unstable performance in practice, and is therefore regarded as intractable.  

This dilemma brings up a very natural question:
\begin{adjustwidth}{4em}{4em}
\begin{center}
  \itshape Can we design an OPE algorithm that is both \textbf{efficient} and \textbf{practical}?
\end{center}
\end{adjustwidth} 
By ``efficient'' we mean its statistical complexity avoids an exponential dependence on both the length of history and the dimension of state-action spaces, \ie, eliminating both \emph{curse of horizon} and \emph{curse of dimensionality}. 
By ``practical'' we mean the algorithm is free from unstable saddle-point optimizations and can be easily implemented and applied in practical settings. 


In this paper, we provide an \emph{affirmative} answer to this question by revealing a novel linear structure encapsulating both $Q$-functions and distribution correction ratios via a spectral representation of the transition operator, which has many nice properties to enable efficient representation learning and off-policy evaluation.

\paragraph{Contributions.} Specifically, the contributions of this paper can be summarized as follows:
\begin{itemize}
  \item We propose a novel \emph{primal-dual spectral representation} of the state-action transition operator, which makes both the $Q$-function and the stationary distribution correction ratio (\ie, the primal and dual variables in DICE) linearly representable in the primal/dual feature spaces, and thus enhances the tractability of the corresponding DICE estimator.

  \item We design \algname, an off-policy evaluation algorithm based on our primal-dual spectral representation, which bypasses the non-convex non-concave saddle-point optimization in vanilla DICE with generic neural network parameterization, and also makes efficient use of historical data. As far as we are concerned, our algorithm is the first to leverage the linear representation of both primal and dual variables that is \emph{computation and sample efficient}.

  \item The performance of the \algname~algorithm is justified both theoretically with a rigorous sample complexity guarantee and empirically by a thorough evaluation on various RL benchmarks. 
\end{itemize}

\vspace{-2pt}
\subsection{Related Work}
\vspace{-2pt}

\paragraph{Off-Policy Evaluation (OPE).} Off-policy evaluation has long been an active field of RL research. In the case where the behavior policy is known, various off-policy performance estimators have been proposed, including direct method (DM) estimators \citep{antos2008learning, le2019batch}, importance sampling (IS) estimators \citep{hanna2019importance, xie2019towards}, doubly-robust (DR) estimators \citep{dudik2011doubly, jiang2016doubly, foster2021offline} and other mixed-type estimators \citep{thomas2016data, kallus2020double, katdare2023marginalized}, which generally suffer from the \emph{curse of dimension}. 
In an effort to settle this issue, there is also abundant literature on estimating the correction ratio of the stationary distribution \citep{liu2018breaking, uehara2020minimax}, among which the distribution correction estimation (DICE) family of estimators are the state of the art that leverage a novel primal-dual formulation of OPE to eliminate the curse of horizon, and in the meantime, allow unknown behavior policies \citep{nachum2019dualdice, nachum2019algaedice, yang2022offline, zhang2020gendice,nachum2020reinforcement}.
However, as discussed above, the induced saddle-point optimization becomes unstable with neural networks, impeding the practical application of DICE estimators.

\paragraph{Spectral Representation in MDPs.}  
Spectral decomposition of the transition kernel is known to induce a linear structure of $Q$-functions, which enables the design of provably efficient algorithms assuming known (primal) spectral feature maps~\citep{jin2020provably, yang2020reinforcement, ren2022spectral}. 
These algorithms break the curse of dimensionality in the sense that their computation or sample complexity is independent of the size of the state-action space, but rather, only depends polynomially on the feature space dimension, the intrinsic dimension of the problem.


With the growing interest in spectral structures of MDPs, representation learning for RL has recently attracted much theory-oriented attention in the online setting~\citep{agarwal2020flambe, uehara2021representation}. 
Practical representation-based online RL algorithms have been designed via kernel techniques~\citep{ren2022free, ren2023stochastic}, latent variable models~\citep{ren2022latent,zhang2023provable}, contrastive learning~\citep{qiu2022contrastive, zhang2022making}, and diffusion score matching~\citep{shribak2024diffusion}. 
Recently, a unified representation learning framework is proposed from a novel viewpoint that leverages the spectral decomposition of the transition operator~\citep{ren2022spectral}.

Spectral representations have also been exploited in the offline setting~\citep{uehara2021pessimistic, ni2021learning, chang2022learning}, where the temporal difference algorithm is applied in the linear space induced by the primal spectral feature for estimating $Q$-functions.
The linear structure of the occupancy measure induced by the dual spectral feature is recently utilized in~\citet{huang2023reinforcement}, which leads to an offline RL algorithm for stationary density ratio estimation. 
Although the algorithm is theoretically sound, the stationary density ratio breaks the linearity in occupancy, and hence the algorithm is not computationally efficient. As far as we know, there is no such offline RL algorithm that efficiently utilizes both primal and dual representations. 
  \section{Preliminaries}\label{sec:2-settings}

\paragraph{Notations.} Denote by $\norm{\cdot}_p$ the $p$-norm of vectors or the $L^p$-norm of functionals, and by $\angl{\bm{x}, \bm{y}} = \bm{x}^{\top} \bm{y}$ the Euclidean inner product of vectors $\bm{x}$ and $\bm{y}$. Denote by $\hE[d^{\D}]{\cdot}$ the empirically approximated expectation using samples from dataset $\D \sim d^{\D}$. Denote by $\Delta(S)$ the set of distributions over set $S$, the element of which shall be regarded as densities whenever feasible. Denote the indicator function by $\ind{\cdot}$. Write $[n] := \set{1,\ldots,n}$ for $n \in \mathbb{Z}_{+}$. Regard $f(n) \lesssim g(n)$ as $f(n) = O(g(n))$.

\paragraph{Markov Decision Processes (MDPs).} We consider an \emph{infinite-horizon} discounted Markov decision process (MDP) $\M = (\S, \A, \P, r, \mu_0, \gamma)$, where $\S$ is the (possibly infinite) state space, $\A$ is the (possibly infinite) action space; $\P: \S \times \A \to \Delta(\S)$ is the transition kernel, $r: \S \times \A \to [0,1]$ is the reward function; $\mu_0 \in \Delta(\S)$ is the initial state distribution, and $\gamma \in (0,1)$ is the reward discount factor, so that the discounted cumulative reward can be defined as $\sum_{t=0}^{\infty} \gamma^t r_t$.
We consider \emph{stationary Markovian policies} $\varPi := \set{ \pi: \S \to \Delta(\A) }$ that admit an action distribution depending on the current state only. Given any policy $\pi \in \varPi$, let $\Es[\pi, \P]{\cdot}$ denote the expectation over the trajectory governed by $\pi$ and $\P$ (possibly under prescribed initial conditions).
Let $d_{\P}^{\pi}(\cdot, \cdot) \in \Delta(\S \times \A)$ denote the \emph{(stationary) state-action occupancy measure} under policy $\pi$, \ie, the normalized discounted probability of visiting $(s,a)$ in a trajectory induced by policy $\pi$, defined by
\begin{equation*}
  d_{\P}^{\pi}(s,a) = (1-\gamma) \E[\pi, \P]{ \sum_{t=0}^{\infty} \gamma^t \ind{s_t = s, a_t = a} }.
\end{equation*}
Similarly, let $d_{\P}^{\pi}(\cdot) \in \Delta(\S)$ denote the \emph{state occupancy measure} subject to the relation $d_{\P}^{\pi}(s,a) = d_{\P}^{\pi}(s) \pi(a|s)$. 
Further, define the state/state-action value functions (a.k.a. $V$- and $Q$-functions) as follows: 
\begin{align*}
  V_{\P}^{\pi}(s) &:= \E[\pi,\P]{\sum_{t=0}^{\infty} \gamma^t r(s_t, a_t) \;\middle|\; s_0 = s}, \\
  Q_{\P}^{\pi}(s,a) &:= \E[\pi,\P]{\sum_{t=0}^{\infty} \gamma^t r(s_t, a_t) \;\middle|\; s_0 = s,~ a_0 = a}.
\end{align*}
In this way, the value of policy $\pi$ in $\M$ is defined by
\begin{equation}\label{eq:policy_value}
  \rhoP(\pi) := (1-\gamma) \E[s \sim \mu_0]{V_{\P}^{\pi}(s)} \nonumber\\
  = (1-\gamma) \E[\begin{subarray}{l} s \sim \mu_0,\\ a \sim \pi(\cdot | s) \end{subarray}]{Q_{\P}^{\pi}(s,a)},
\end{equation}
where the factor $(1-\gamma)$ is introduced for normalization. We omit the subscript $\P$ when the context is clear.

\begin{remark}
  In order to better illustrate how the proposed method works in MDPs with continuous state-action spaces, we abuse the notation a bit to regard $\P$, $\pi$ and $d^{\pi}$ as \emph{densities}. Parallel results for the discrete case can be analogously derived without difficulties.
\end{remark}

\paragraph{The Primal-Dual Characterization of \titlemath{\rho(\pi)}.} Distribution Correction Estimation (DICE) \citep{nachum2020reinforcement} is a primal-dual-based method that evaluates the value of a given target policy $\pi$ in the offline setting, using the linear programming~(LP) formulation of policy values~\citep{puterman2014markov}. Specifically, it is known that we can equivalently characterize $\rho(\pi)$ defined in \eqref{eq:policy_value} by the \emph{primal LP}:
\begin{align}\label{eq:LP_formulation-primal}
  \min_{Q(\cdot, \cdot)}\quad& \E[\begin{subarray}{l} s \sim \mu_0,\\ a \sim \pi(\cdot | s) \end{subarray}]{Q(s,a)}, \nonumber\\
  \textrm{s.t.}\quad& Q(s,a) \geq r(s,a) + \gamma \E[\begin{subarray}{l} s' \sim \P(\cdot | s,a),\\ a' \sim \pi(\cdot | s') \end{subarray}]{Q(s',a')},~ \forall (s,a) \in \S \times \A.
\end{align}
Further, it can be shown that strong duality holds in \eqref{eq:LP_formulation-primal}, with Lagrangian multipliers exactly the state-action occupancy measures $d^{\pi}(\cdot,\cdot)$. We can therefore characterize $\rho(\pi)$ by the following \emph{primal-dual LP}:
\begin{equation}\label{eq:LP_formulation-primal_dual}
  \min_{Q(\cdot, \cdot)} \max_{d(\cdot,\cdot)}\quad (1-\gamma) \E[\begin{subarray}{l} s \sim \mu_0,\\ a \sim \pi(\cdot | s) \end{subarray}]{Q(s,a)} + \E[(s,a) \sim d^\pi(\cdot,\cdot)]{ r(s,a) + \gamma \E[\begin{subarray}{l} s' \sim \P(\cdot | s,a),\\ a' \sim \pi(\cdot | s') \end{subarray}]{Q(s',a')} }. 
\end{equation}
We highlight that this primal-dual LP formulation is favored in the offline RL setting in that historical experiences can be utilized to empirically approximate the expectations in \eqref{eq:LP_formulation-primal_dual} after some simple change-of-variables. In particular, for any measurable function $f(s,a)$, the importance sampling (IS) estimator for the expected value of $f(s,a)$ is given by 
\begin{equation}\label{eq:IS_change_of_variable}
    \E[(s,a) \sim d^{\pi}]{f(s,a)} = \E[(s,a) \sim d^{\piref}]{\frac{d^{\pi}(s,a)}{d^{\D}(s,a)} \cdot f(s,a)},
\end{equation}
where $\zeta(s,a) := \frac{d^{\pi}(s,a)}{d^{\D}(s,a)}$ is known as the \emph{stationary distribution correction ratio} for dataset $\D \sim d^{\D}$. 

The DICE family estimators~\citep{nachum2019dualdice, zhang2022efficient, dai2020coindice} is designed by plugging the IS expectation estimator \eqref{eq:IS_change_of_variable} into \eqref{eq:LP_formulation-primal_dual}, such that the stationary distribution correction ratio $\zeta(\cdot,\cdot)$ is parameterized along with the $Q$-function to formulate an optimization, with various regularization available~\citep{yang2020off}. It is evident that the DICE family estimators are applicable to the offline RL setting with unknown behavior policy. 

\paragraph{Spectral Representation.} We can always perform spectral decomposition of the dynamic operator to obtain a spectral representation of \emph{any} MDP \citep{ren2022spectral}. In particular, \emph{low-rank MDPs} refer to such MDPs with intrinsic finite-rank spectral representation structures that enable scalable RL algorithms, and are thus of theoretical interest \citep{yao2014pseudo, jin2020provably}. Formally, $\M$ is said to be a \emph{low-rank MDP} if there exists a \emph{primal} feature map $\bphi: \S \times \A \to \R^d$ and \emph{dual} features $\tilde{\bmu}: \S \to \R^d$, $\btheta_r \in \R^d$, such that $\P(s' | s,a) = \angl{\bphi(s,a), \tilde{\bmu}(s')}$, $r(s,a) = \angl{\bphi(s,a), \btheta_r}$, for any $s,s' \in \S$, $a \in \A$. Here both the primal feature $\bphi$ and the dual features $\tilde{\bmu}$, $\btheta_r$ are assumed to be
unknown, and thus must be learned from data~\citep{agarwal2020flambe,uehara2021representation}. 

Unfortunately, it is revealed in \citet{zhang2022making, ren2022spectral} that learning the features of a low-rank MDP is difficult from the unnormalized density fitting point of view.
To settle this tractability issue, the above papers propose a reparameterization of the dual feature as $\tilde{\bmu}(\cdot) = \q(\cdot) \bmu(\cdot)$, where we introduce an auxiliary distribution $\q(\cdot) \in \Delta(\S)$ that will be specified later. Therefore, we will stick to the following spectral decomposition of the transition kernel in this paper:
\begin{equation}\label{eq:spectral_representation_P}
  \P(s' | s,a) = \angl{\bphi(s,a), \q(s') \bmu(s')},~ \forall s \in \S, a \in \A, s' \in \S.
\end{equation}
Under such reparameterization, it has been shown that the spectral representaton can be learned efficiently.

Additionally, we also assume $\mu_0$ to be linearly representable in the dual feature space.

\begin{assumption}[initial representation]\label{assum:initial_distribution}
  There exists $\bomega_0 \in \R^d$, such that $\mu_0(s) = \q(s) \angl{\bmu(s), \bomega_0}$, $\forall s \in \S$.
\end{assumption}

\paragraph{Off-Policy Evaluation (OPE).} We consider a setting where we are given $\D = \set{(s_i,a_i,s'_i) \mid i \in [N]}$, an offline dataset of $N$ historical transitions, sampled by certain \emph{behavior policy} $\piref$ that could be unknown. The objective is to estimate the expected cumulative rewards $\rho(\pi)$ of a different \emph{target policy} $\pi$.

For satisfactory performance, it is important that the behavior policy provides sufficient data coverage for the frequent transitions experienced by policy $\pi$. Specifically, we assume the occupancy ratio between $\pi$ and $\piref$ satisfies the following regularity assumption.

\begin{assumption}[concentratability]\label{assum:sufficient_sampling}
  $\frac{d^{\pi}(s,a)}{d^{\piref}(s,a)} \leq \Cpi$, $\forall s \in \S, a \in \A$.
\end{assumption}

We point out that the concentratability assumption is standard in offline RL literature \citep{munos2008finite, chen2019information}, and is also implicitly enforced in recent work like \citet{huang2023reinforcement} (see Definition 1 therein). We are aware that the coefficient $\Cpi$ can potentially be translated into different feature-related constants \citep{uehara2021representation}, which does not change the asymptotics of sample complexity, yet only adds to the technical complexity. For clarity, we will stick to the simple \Cref{assum:sufficient_sampling} in this paper.
  \section{\algname: OPE using Primal-Dual Spectral Representation}\label{sec:3-algorithm}

In this section, we first introduce a novel linear representation for the stationary distribution correction ratio using the \emph{dual} spectral feature of transition kernel. We highlight that this linear structure, together with the known linear representation of $Q$-functions, helps to bypass the non-convex non-concave optimization required in the computation of DICE estimators, and also enables efficient utilization of historical data sampled by unknown behavior policies. Based on the above ideas, we present \algname, the proposed off-policy evaluation (OPE) algorithm using our primal-dual spectral representation.

\subsection{Primal-Dual Spectral Representation}\label{sec:3-algorithm-1-representation}

We start by specifying the primal-dual spectral representation used in \algname. At first glance, it may seem natural to directly learn the spectral representation of $\P$ as defined in \eqref{eq:spectral_representation_P}. However, it turns out that this naive approach includes the target policy $\pi$ in the linear representation of $d^{\pi}(\cdot, \cdot)$, which in turn induces a complicated representation for the stationary distribution correction ratio $\zeta(\cdot,\cdot)$~\citep{huang2023reinforcement}, and thus, leads to an intractable optimization \eqref{eq:LP_formulation-primal_dual} for the computation of the DICE estimator.  

The above challenge inspires us to properly reparameterize the spectral decomposition \eqref{eq:spectral_representation_P}. Specifically, since we only work with a fixed target policy $\pi$ for off-policy evaluation, we shall consider the following alternative representation of the state-action transition kernel $\Ppi(s',a' | s,a) := \P(s' | s,a) \pi(a' | s')$:
\begin{equation}\label{eq:spectral_representation_P_pi}
  \Ppi(s',a' | s,a) = \angl[\Big]{\bphi(s,a), \q(s') \piref(a' | s') \underbrace{ \tfrac{\pi(a' | s')}{\piref(a' | s')} \bmu(s') }_{ \bmupi(s',a') } }.
\end{equation}
Note that \Cref{assum:sufficient_sampling} guarantees a non-zero denominator when the nominator is non-zero. We refer to \eqref{eq:spectral_representation_P_pi} as the \emph{primal-dual spectral representation} of the state-action) transition kernel $\Ppi$, where $\bphi(\cdot, \cdot)$ and $\bmupi(\cdot, \cdot)$ are still called \emph{primal} and \emph{dual} spectral features, respectively. The superscript $\pi$ of the dual spectral feature emphasizes its dependence on the target policy.

The primal-dual spectral representation has several nice properties. In particular, we can show that the $Q$-function $Q^{\pi}(s,a)$, the state-action occupancy measure $d^{\pi}(s,a)$, and the stationary distribution correction ratio $\zeta(s,a)$ can all be represented in linear forms using the primal/dual features, as summarized below.

\begin{lemma}\label{thm:linear_representation_Q_d}
  With primal-dual spectral representation \eqref{eq:spectral_representation_P_pi}, the $Q$-function $Q^{\pi}(\cdot, \cdot)$ is linearly representable in the primal feature space with cofactor $\btheta_Q^{\pi} \in \R^d$:
  \begin{equation}\label{eq:linear_q}
    Q^{\pi}(s,a) = \angl{\bphi(s,a), \btheta_Q^{\pi}},~ \forall s \in S, a \in \A.
  \end{equation}
  Further, under \Cref{assum:initial_distribution}, the state-action occupancy measure $d^{\pi}(\cdot, \cdot)$ is also linearly representable in the dual feature space with cofactor $\bomega_d^{\pi} \in \R^d$:
  \begin{equation*}
    d^{\pi}(s,a) = \q(s) \piref(a | s) \angl{\bmupi(s,a), \bomega_d^{\pi}},~ \forall s \in S, a \in \A.
  \end{equation*}
  Specifically, when the auxiliary distribution $q(\cdot)$ is selected as the state-occupancy measure $d^{\piref}(\cdot)$ of the behavior policy $\piref$, the stationary distribution correction ratio can also be linearly represented as:
  \begin{equation}\label{eq:linear_zeta}
    \zeta(s,a) = \frac{d^{\pi}(s,a)}{\q(s) \piref(a | s)}
    = \angl{\bmupi(s, a), \bomega_d^{\pi}}. 
  \end{equation}
\end{lemma}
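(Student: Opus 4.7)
The plan is to handle the three claims in sequence, in each case reducing to a Bellman-type recursion for the quantity of interest and then reading off linearity by direct substitution of the primal-dual spectral decomposition~\eqref{eq:spectral_representation_P_pi}.

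For the $Q$-function representation~\eqref{eq:linear_q}, I would start from the policy Bellman equation
\[
  Q^{\pi}(s,a) = r(s,a) + \gamma \int \Ppi(s',a' \mid s,a)\, Q^{\pi}(s',a')\, ds'\, da'.
\]
Under the low-rank MDP setup adopted throughout the paper, $r(s,a) = \angl{\bphi(s,a), \btheta_r}$. Substituting~\eqref{eq:spectral_representation_P_pi} into the integral factors $\bphi(s,a)$ out, so that the right-hand side equals $\bigl\langle \bphi(s,a),\, \btheta_r + \gamma \int \q(s')\piref(a' \mid s')\, \bmupi(s',a')\, Q^{\pi}(s',a')\, ds'\, da' \bigr\rangle$. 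Hence the Bellman operator sends every bounded function into the $d$-dimensional subspace spanned by the components of $\bphi$; iterating it from any starting point and appealing to $\gamma$-contraction, the unique fixed point $Q^{\pi}$ must lie in this closed subspace, which yields $\btheta_Q^{\pi} = \btheta_r + \gamma \int \q(s')\piref(a' \mid s')\bmupi(s',a')\, Q^{\pi}(s',a')\, ds'\, da'$.

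For the occupancy representation, the natural starting point is the (forward) Bellman-flow equation
\[
  d^{\pi}(s',a') = (1-\gamma)\mu_0(s')\pi(a' \mid s') + \gamma \int \Ppi(s',a' \mid s,a)\, d^{\pi}(s,a)\, ds\, da.
\]
Using \Cref{assum:initial_distribution} together with the identity $\pi(a' \mid s')\bmu(s') = \piref(a' \mid s')\bmupi(s',a')$, the initial term becomes $(1-\gamma)\q(s')\piref(a' \mid s')\angl{\bmupi(s',a'), \bomega_0}$, and substituting~\eqref{eq:spectral_representation_P_pi} into the integral term factors out the same $\q(s')\piref(a' \mid s')\bmupi(s',a')$ prefactor. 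Unrolling the flow equation as $d^{\pi} = (1-\gamma)\sum_{t \ge 0} \gamma^t P_t^{\pi}$, where $P_t^{\pi}$ is the $t$-step state-action marginal under $(\mu_0,\pi)$, a short induction on $t$ shows \emph{every} term has the form $\q(s)\piref(a \mid s)\angl{\bmupi(s,a), v_t}$ with $v_t \in \R^d$; the series converges in $L^1$ since $\|d^{\pi}\|_1 = 1$, and the resulting cofactor satisfies $\bomega_d^{\pi} = (1-\gamma)\bomega_0 + \gamma \int \bphi(s,a)\, d^{\pi}(s,a)\, ds\, da \in \R^d$.

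Item~\eqref{eq:linear_zeta} then follows in a single line: with $\q = d^{\piref}$ the state occupancy of the behavior policy, $\q(s)\piref(a \mid s) = d^{\piref}(s,a)$, and dividing the representation from step two by $d^{\piref}(s,a)$ cancels this prefactor, leaving exactly $\angl{\bmupi(s,a), \bomega_d^{\pi}}$. The step I expect to be most delicate is not the algebra itself but justifying that $\bomega_d^{\pi}$ is a bona fide $\R^d$-vector (rather than only the composite object $\q\,\piref\,\angl{\bmupi,\cdot}$ being well-defined): this calls for a modest boundedness hypothesis on $\bphi$ so that the integral defining $\bomega_d^{\pi}$ converges, together with careful handling of null sets where $\piref(a \mid s) = 0$, since $\bmupi$ is only defined on $\{\piref > 0\}$. \Cref{assum:sufficient_sampling} is what guarantees those null sets are also $d^{\pi}$-null, so the representations for $d^{\pi}$ and $\zeta$ are unambiguously identified wherever the downstream DICE estimator actually evaluates them.
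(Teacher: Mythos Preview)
Your proposal is correct and follows essentially the same route as the paper: both arguments substitute the spectral decomposition into the Bellman equation for $Q^{\pi}$ and the forward flow equation for $d^{\pi}$, exploit the identity $\pi(a\mid s)\bmu(s)=\piref(a\mid s)\bmupi(s,a)$, and then read off $\zeta$ by division. The only cosmetic difference is that the paper expresses $\btheta_Q^{\pi}$ via $V^{\pi}$ and the original $\bmu$ rather than $Q^{\pi}$ and $\bmupi$ (these are equivalent after integrating out $a'$), and your added contraction/induction remarks and the discussion of null sets supply rigor the paper leaves implicit.
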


\begin{proof}[Proof]
  Note that the original dual feature in \eqref{eq:spectral_representation_P} can be restored by $\bmu(s') = \frac{\piref(a' | s')}{\pi(a' | s')} \bmupi(s',a')$ for any $a' \in \A$. Then by Bellman recursive equation we have:
  \begin{align*}
    Q^{\pi}(s,a) &= \angl{\bphi(s,a), \btheta_r} + \gamma \int V^{\pi}(s') \angl{\bphi(s,a), \q(s') \bmu(s')} \diff s' \\
    &= \angl*{\bphi(s,a), \underbrace{\btheta_r + \gamma \int V^{\pi}(s') \q(s') \bmu(s') \diff s'}_{\btheta_Q^{\pi}}}.
  \end{align*}
  Similarly, by the recursive property of $d^{\pi}$ we have:
  \begin{align*}
    d^{\pi}(s,a)
    ={}& (1-\gamma) \mu_0(s) \pi(a | s) + \gamma \int d^{\pi}(\tilde{s}, \tilde{a}) \Ppi(s,a | \tilde{s},\tilde{a}) \diff \tilde{s} \diff \tilde{a} \\
    ={}& (1-\gamma) \q(s) \angl[\big]{\piref(a | s) \bmupi(s,a), \bomega_0} + \gamma \angl*{ \q(s) \piref(a | s) \bmupi(s,a), \int d^{\pi}(\tilde{s}, \tilde{a}) \bphi(\tilde{s},\tilde{a}) \diff \tilde{s} \diff \tilde{a} } \\
    ={}& \biggl\langle \q(s) \piref(a | s) \bmupi(s,a), \underbrace{ (1-\gamma) \bomega_0 + \gamma \int d^{\pi}(\tilde{s}, \tilde{a}) \bphi(\tilde{s},\tilde{a}) \diff \tilde{s} \diff \tilde{a} }_{ \bomega_d^{\pi} } \biggr\rangle,
  \end{align*}
  where we use the initial representation (\Cref{assum:initial_distribution}) and the fact that $\pi(a | s) \bmu(s) = \piref(a | s) \bmupi(s,a)$. The representation of $\zeta(\cdot, \cdot)$ is hence a direct corollary since $q(s) \piref(a | s) = d^{\piref}(s,a)$ when $q(\cdot) = d^{\piref}(\cdot)$.
\end{proof}

Then, using the linear spectral representations of $Q$ and $\zeta$ in~\eqref{eq:linear_q} and~\eqref{eq:linear_zeta}, we shall equivalently formulate the DICE estimator as follows.

\begin{corollary}\label{thm:LP_formulation-representation}
  With primal-dual spectral representation \eqref{eq:spectral_representation_P_pi} where $q(\cdot) \equiv d^{\piref}(\cdot)$, under \Cref{assum:initial_distribution},
  \begin{align}\label{eq:LP_formulation-representation}
    \rhoP(\pi) &= \min_{\btheta_Q} \max_{\bomega_d} \biggl\lbrace (1-\gamma) \E[\begin{subarray}{l} s \sim \mu_0,\\ a \sim \pi(\cdot | s) \end{subarray}]{\bphi(s,a)^{\top} \btheta_Q} \\
    &\hspace{6em} + \mathbb{E}_{\begin{subarray}{l} s \sim d^{\piref}(\cdot),~ a \sim \piref(a | s),\\ s' \sim \P(\cdot | s,a),~ a' \sim \pi(\cdot | s') \end{subarray}} \Bigl[ \prn[\big]{\bmupi(s,a)^{\top} \bomega_d} \prn[\big]{r(s,a) + \gamma \bphi(s',a')^{\top} \btheta_Q - \bphi(s,a)^{\top} \btheta_Q} \Bigr] \biggr\rbrace. \nonumber
  \end{align}
\end{corollary}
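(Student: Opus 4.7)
The plan is to start from the primal–dual LP \eqref{eq:LP_formulation-primal_dual} (in the standard form that contains the Bellman residual $r + \gamma Q(s',a') - Q(s,a)$ inside the $d^\pi$-expectation), then (i) convert the occupancy expectation to a behavior-policy expectation via the importance-sampling identity \eqref{eq:IS_change_of_variable}, (ii) plug in the linear spectral parameterizations from \Cref{thm:linear_representation_Q_d}, and (iii) verify that the resulting min–max over the finite-dimensional parameters $(\btheta_Q,\bomega_d)$ still equals $\rho_{\P}(\pi)$.

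First I would rewrite the Lagrangian of \eqref{eq:LP_formulation-primal_dual} as
\begin{equation*}
  L(Q,\zeta) = (1-\gamma)\,\E[s\sim\mu_0,\,a\sim\pi]{Q(s,a)} + \E[\begin{subarray}{l} s\sim d^{\piref},\,a\sim\piref(\cdot|s)\\ s'\sim\P(\cdot|s,a),\,a'\sim\pi(\cdot|s')\end{subarray}]{\zeta(s,a)\bigl(r(s,a) + \gamma Q(s',a') - Q(s,a)\bigr)},
\end{equation*}
using $\zeta(s,a) = d^{\pi}(s,a)/d^{\piref}(s,a)$ and \eqref{eq:IS_change_of_variable}. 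Substituting $Q(s,a) = \bphi(s,a)^{\top}\btheta_Q$ and $\zeta(s,a) = \bmupi(s,a)^{\top}\bomega_d$ (which is legitimate after the choice $q \equiv d^{\piref}$, as guaranteed by \Cref{thm:linear_representation_Q_d} together with \Cref{assum:initial_distribution}) produces exactly the RHS of \eqref{eq:LP_formulation-representation}. It remains to show that $\min_{\btheta_Q}\max_{\bomega_d} L = \rho_{\P}(\pi)$.

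For the upper bound, I would use the ``primal-optimal'' choice $\btheta_Q = \btheta_Q^{\pi}$ from \Cref{thm:linear_representation_Q_d}, which realises $Q^{\pi}$ exactly; then the Bellman equation forces the Bellman residual to vanish pointwise, so $L(\btheta_Q^{\pi},\bomega_d) = (1-\gamma)\E[\mu_0,\pi]{Q^{\pi}} = \rho_{\P}(\pi)$ for every $\bomega_d$, giving $\min\max L \le \rho_{\P}(\pi)$. For the matching lower bound I would use the ``dual-optimal'' choice $\bomega_d = \bomega_d^{\pi}$, which realises the true $\zeta$ exactly; the inner expectation then collapses to $\E[(s,a)\sim d^{\pi},\,(s',a')\sim\Ppi]{r(s,a) + \gamma Q(s',a') - Q(s,a)}$, and applying the Bellman flow identity for $d^{\pi}$ (namely $\E[d^{\pi}]{Q} = (1-\gamma)\E[\mu_0,\pi]{Q} + \gamma\E[d^{\pi},\Ppi]{Q(s',a')}$) makes the $\btheta_Q$-dependent part cancel, leaving $\E[d^{\pi}]{r} = \rho_{\P}(\pi)$ independent of $\btheta_Q$. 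Hence $\max_{\bomega_d} L(\btheta_Q,\cdot) \ge \rho_{\P}(\pi)$ for every $\btheta_Q$, giving $\min\max L \ge \rho_{\P}(\pi)$.

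The only delicate point—and the main obstacle—is the last paragraph: a priori, restricting both the primal and dual variables to $d$-dimensional linear subspaces could shrink the min–max value or destroy the saddle structure. The argument above sidesteps this by exhibiting, within each restricted subspace, a ``universally $\rho_{\P}(\pi)$-valued'' parameter ($\btheta_Q^{\pi}$ on the primal side via exact Bellman consistency, $\bomega_d^{\pi}$ on the dual side via the Bellman flow). The underlying reason this works is precisely the exact linear representability guaranteed by \Cref{thm:linear_representation_Q_d}; I would therefore emphasise in the write-up that \Cref{assum:initial_distribution} and the choice $q \equiv d^{\piref}$ are essential, since without them $\zeta$ would not be linear in $\bmupi$ and the dual-side argument would fail.
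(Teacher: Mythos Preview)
Your approach is essentially the paper's: start from the primal--dual LP \eqref{eq:LP_formulation-primal_dual}, apply the IS change of variable \eqref{eq:IS_change_of_variable} to pass to a $d^{\piref}$-expectation, and then substitute the linear representations of $Q$ and $\zeta$ from \Cref{thm:linear_representation_Q_d}; the paper simply writes this as a chain of equalities, whereas you add the explicit upper/lower bound via $\btheta_Q^{\pi}$ and $\bomega_d^{\pi}$ to justify that restricting to the linear subspaces preserves the saddle value, which is a welcome bit of extra rigor. One small correction: at $\btheta_Q=\btheta_Q^{\pi}$ the sampled Bellman residual $r(s,a)+\gamma\,\bphi(s',a')^{\top}\btheta_Q^{\pi}-\bphi(s,a)^{\top}\btheta_Q^{\pi}$ does \emph{not} vanish pointwise in $(s',a')$, only its conditional expectation given $(s,a)$ does---but since the multiplier $\bmupi(s,a)^{\top}\bomega_d$ is $(s,a)$-measurable, the product still has zero expectation and your upper-bound argument goes through unchanged.
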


The proof of \Cref{thm:LP_formulation-representation} is deferred to \Cref{sec:apdx-representation-proofs} due to limited space. We highlight that our new DICE formulation \eqref{eq:LP_formulation-representation} bears several benefits:
\begin{itemize}
  \item \textbf{Offline data compatible.} The estimator is favorable for OPE since the expectation over the $(s,a,s')$ transition pair can be effectively approximated by samples from the offline dataset $\D$, as long as the auxiliary distribution $\q(\cdot)$ is selected as the state occupancy measure $d^{\piref}$ of the behavior policy $\piref$ such that $\Prob{(s,a,s') \in \D} = q(s) \piref(a | s) \P(s' | s,a)$.
  
  \item \textbf{Optimization tractable.} Given (learned) $\bphi(s,a)$ and $\bmupi(s,a)$, the saddle-point optimization in \eqref{eq:LP_formulation-representation} is convex-concave with respect to both $\theta_Q$ and $\omega_d$, which perfectly bypasses the optimization difficulty in vanilla DICE estimators with neural-network-parameterized $Q^{\pi}(\cdot,\cdot)$ and $\zeta(\cdot,\cdot)$. Meanwhile, compared to the counterpart obtained by directly applying the naive spectral representation \eqref{eq:spectral_representation_P} (details of which can be found in \Cref{sec:apdx-representation-naive_failure}), the proposed estimator \eqref{eq:LP_formulation-representation} is tractable in that it is free of the policy ratio $\frac{\pi(a | s)}{\piref(a | s)}$ that is unknown.
\end{itemize}
From now on, we will always regard $q(\cdot) \equiv d^{\piref}(\cdot)$ for the aforementioned nice properties to hold.

\subsection{Spectral Representation Learning}

In the last section, we have elaborated on how to perform OPE using off-policy data given a primal-dual spectral representation. Now it only suffices to specify how to learn such a representation, which we regard as an abstract subroutine $(\hbphi, \hbmupi) \gets \replearn(\F, \D, \pi)$. Here $\F$ denotes the collection of candidate representations. We highlight that our algorithm works with any representation learning method that has a bounded learning error, without any further requirements on the learning mechanism. Given a range of spectral representation learning methods available in literature~\citep{zhang2022making, ren2022spectral, ren2022latent, shribak2024diffusion}, for the sake of clarity we only consider a few candidates here, while other methods may also be applicable:
\begin{enumerate}
  \item \textbf{Ordinary Least Squares (OLS).} Inspired by \citet{ren2022spectral}, an OLS objective can be constructed as follows. Denote by $\Qpi(s',a',s,a) := d^{\piref}(s,a) \Ppi(s',a' | s,a)$ the joint distribution of state-action transitions under behavior policy $\piref$, based on which we plug in \eqref{eq:spectral_representation_P_pi} to obtain
  \begin{equation*}
    \frac{\Qpi(s',a',s,a)}{\sqrt{d^{\piref}(s,a) d^{\piref}(s',a')}} = \sqrt{d^{\piref}(s,a) d^{\piref}(s',a')} \bphi(s,a)^{\top} \bmupi(s',a'),
  \end{equation*}
  which further induces the following OLS objective:
  \begin{equation*}
    \min_{(\hbphi, \hbmupi) \in \F}  \int \Biggl( \frac{\Qpi(s',a',s,a)}{\sqrt{d^{\piref}(s,a) d^{\piref}(s',a')}} - \sqrt{d^{\piref}(s,a) d^{\piref}(s',a')} \hbphi(s,a)^{\top} \hbmupi(s',a') \Biggr)^2 \diff s \diff a \diff s' \diff a'
  \end{equation*}
  Therefore, $(\hbphi, \hbmupi)$ can be learned by solving~\citep{ren2022spectral,haochen2021provable}:
  \begin{equation*}
    \min_{(\hbphi, \hbmupi) \in \F} \Bigl\lbrace \hE[(s,a) \sim d^{\piref}, (\tilde{s}', \tilde{a}') \sim d^{\piref}]{ \prn[\big]{ \hbphi(s,a)^{\top} \hbmupi(\tilde{s}', \tilde{a}') }^2} -2 \hE[(s,a) \sim d^{\piref}, (s',a') \sim \Ppi(\cdot,\cdot | s, a)]{\hbphi(s,a)^{\top} \hbmupi(s',a')} \Bigr\rbrace,
  \end{equation*}
  where the last term becomes a constant after expansion and is thus omitted. For practical implementation, we can use stochastic gradient descent to solve the above stochastic optimization problem.

  \item \textbf{Noise-Contrastive Estimation (NCE).} NCE is a widely used method for contrastive representation learning in RL \citep{zhang2022making, qiu2022contrastive}. To learn $(\hbphi, \hbmupi)$, we consider a binary contrastive learning objective \citep{qiu2022contrastive}:
  \begin{equationfit}
    \min_{(\hbphi, \hbmupi) \in \F} \widehat{E}_{(s,a) \sim d^{\piref}} \biggl[ \hE[(s',a') \sim \Ppi(\cdot,\cdot | s,a)]{\log \prn[\Big]{ 1 + \tfrac{1}{\hbphi(s,a)^{\top} \hbmupi(s',a')} } } + \hE[(s',a') \sim \Pneg]{ \log \prn[\Big]{1 + \hbphi(s,a)^{\top} \hbmupi(s',a')} } \biggr],
  \end{equationfit}
  where $\Pneg$ is a negative sampling distribution.
\end{enumerate}
Details of these representation learning methods along with their learning errors can be found in \Cref{sec:apdx-RepLearn}.

\subsection{\algname}

With the two key components specified above, now we are ready to state \algname, the proposed offline policy evaluation (OPE) algorithm using spectral representations, as displayed in \Cref{alg:main}.

\begin{algorithm*}[tb]
  \caption{\textbf{\algname}: \textbf{DI}stribution \textbf{C}orrection \textbf{E}stimation with \textbf{Spectral} Representation}\label{alg:main}
  \begin{algorithmic}[1]
    \Require Target policy $\pi$, off-policy dataset $\D$, function family $\F$.
    
    \State Learn a spectral representation $(\hbphi, \hbmupi) \gets \textbf{\replearn}(\F, \D, \pi)$.
  
    \State Plug in the spectral representation $(\hbphi, \hbmupi)$ to compute the following DICE estimator:
    \vspace{-3mm}
    \begin{align}\label{eq:alg-spectral_dice_estimator}
      \textstyle
      \hat{\rho}(\pi) = \min_{\btheta_Q} \max_{\bomega_d}& \bigg\lbrace (1-\gamma) \hE[\begin{subarray}{l} s \sim \mu_0,\\ a \sim \pi(\cdot | s) \end{subarray}]{\hbphi(s,a)^{\top} \btheta_Q} \nonumber\\
      &\quad + \hE[\begin{subarray}{c} (s,a,s') \sim \D,\\ a' \sim \pi(\cdot | s') \end{subarray}]{ \prn[\big]{\hbmu^{\pi}(s,a)^{\top} \bomega_d} \prn[\big]{r(s,a) + \gamma \hbphi(s',a')^{\top} \btheta_Q - \hbphi(s,a)^{\top} \btheta_Q}} \bigg\rbrace.
    \end{align}
    \vspace{-5mm}
    
    \State \Return $\hat{\rho}(\pi)$
  \end{algorithmic}
\end{algorithm*}

Specifically, given a policy $\pi$, assuming access to an offline dataset $(s,a,s') \sim \D$ sampled by the behavior policy $\piref$, we follow a two-step algorithm to evaluate the target policy $\pi$ in an off-policy manner:
\begin{enumerate}
  \item \textbf{Representation learning.} We may choose any representation learning method that comes with a bounded learning error as the \replearn subroutine, and the overall sample complexity will depend on this choice (see \Cref{sec:4-analysis}).

  \item \textbf{DICE-based policy evaluation.} With the learned representation $(\hbphi, \hbmupi)$, we use the primal-dual DICE estimator \eqref{eq:alg-spectral_dice_estimator} to estimate the value of the target policy $\pi$. Note that the data distribution $d^{\D}(s,a,s') = d^{\piref}(s) \piref(a | s) \P(s' | s,a)$ is exactly compatible with the formulation in \eqref{eq:LP_formulation-representation}.
\end{enumerate}

\begin{remark}[Numerical considerations]\label{remark:numerical}
  It is known that directly solving \eqref{eq:alg-spectral_dice_estimator} leads to potential numerical instability issues due to the objective's linearity in $\btheta_Q$ and $\bomega_d$ \citep{nachum2019algaedice}. Fortunately, it is shown in \citet{yang2020off} that certain regularization leads to strictly concave inner maximization while keeping the optimal \emph{solution} unbiased (see \Cref{sec:apdx-1_regularization} for details). In our implementation, we append a regularizer $- \lambda \hE[(s,a) \sim \D]{f(\hbmupi(s,a)^{\top} \bomega_d)}$ to the objective in \eqref{eq:alg-spectral_dice_estimator}, where $f$ is a differentiable function with closed and convex Fenchel conjugate $f_*$ (see \Cref{sec:apdx-4_Fenchel_conjugate}), and $\lambda$ is a tunable constant. Furthermore, since $\bmupi(s,a)^{\top} \bomega_d = \zeta(s,a) \leq \Cpi$ and $\bphi(s,a)^{\top} \btheta_Q = Q(s,a) \leq \frac{1}{1-\gamma}$, we also restrict $\btheta_Q$ and $\bomega_d$ in regions $\varTheta(\hbphi) = \set{\btheta_Q \mid 0 \leq \hbphi(s,a)^{\top} \btheta_Q \leq \frac{1}{1-\gamma}}$ and $\varOmega(\hbmupi) = \set{\bomega_d \mid \hbmupi(s,a)^{\top} \bomega_d \leq \Cpi}$, respectively.
\end{remark}
  \section{Theoretical Guarantee}\label{sec:4-analysis}

In this section, we provide a rigorously theoretical analysis regarding the sample complexity of the proposed \algname algorithm. For the sake of technical conciseness, we make the following assumption on the candidate family $\F$. We argue that this is not a restrictive assumption, but rather, only helps to highlight the key contributions with simplified analysis.

\begin{assumption}[realizability]\label{assum:well_defined_family}
  Assume a finite family $\F$, such that $\angl{ \hbphi(s,a), d^{\piref}(s',a') \hbmupi(s',a') }$ is a valid state-action transition kernel for any $(\hbphi, \hbmupi) \in \F$, and the ground-truth representation $(\sbphi, \sbmupi) \in \F$.
\end{assumption}

\paragraph{Representation Learning Error.} The key to subsequent analyses is to first bound the error of representation learning, which is of some theoretical interest by itself. Generally speaking, we expect \emph{probably approximately correct (PAC)} bounds for representation learning in the following format.

\begin{claim}\label{claim:representation_learning_error}
  With probability at least $1-\delta$, we have
  \begin{equation*}
    \E[(s,a) \sim d^{\piref}_{\P}]{\norm[\big]{\hP^{\pi}(\cdot, \cdot | s,a) - \Ppi(\cdot,\cdot | s,a)}_1} \leq \xi(|\F|, N, \delta),
  \end{equation*}
  where $\hPpi(s',a' | s,a) := d_{\P}^{\piref}(s', a') \hbphi(s,a)^{\top} \hbmu^{\pi}(s')$, $N$ is the number of samples in $\D$, and the upper bound $\xi$ only depends on $|\F|$, $N$ and $\delta$.
\end{claim}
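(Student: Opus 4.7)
The plan is to establish the claim separately for each \replearn subroutine introduced in Section~\ref{sec:3-algorithm}, yielding different explicit forms of $\xi(|\F|,N,\delta)$. In both the OLS and NCE cases, the backbone is empirical risk minimization over the finite realizable class $\F$, so the key ingredients are: (i) reformulating the method's population risk as a natural divergence between the learned model and the ground truth; (ii) uniform concentration over $\F$; and (iii) a divergence-conversion step that takes the method's native loss down to $L^1$ on transition kernels.

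I would start with OLS since the algebra is cleanest. Writing $g_{\hbphi,\hbmupi}(s,a,s',a') := \hbphi(s,a)^{\top}\hbmupi(s',a')$ and $g_*(s,a,s',a') := \Ppi(s',a' | s,a)/d^{\piref}(s',a')$, I would first complete the square to see that the OLS population objective equals $\norm{g_{\hbphi,\hbmupi} - g_*}_{L^2(d^{\piref} \otimes d^{\piref})}^2$ up to an additive constant independent of the representation. By \Cref{assum:well_defined_family}, $g_*$ is realized in $\F$, so the standard ERM-plus-concentration template (union bound over the finite class, using boundedness of the per-sample loss from the valid-transition-kernel condition) yields an excess $L^2$ risk of order $\log(|\F|/\delta)/N$ under realizability (or $\sqrt{\log(|\F|/\delta)/N}$ with plain Hoeffding). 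Finally, I would rewrite the target quantity as
\begin{equation*}
\E[(s,a) \sim d^{\piref}]{\norm[\big]{\hPpi(\cdot,\cdot | s,a) - \Ppi(\cdot,\cdot | s,a)}_1} = \E[d^{\piref} \otimes d^{\piref}]{|g_{\hbphi,\hbmupi} - g_*|}
\end{equation*}
and pass from $L^1$ to $L^2$ via Jensen's inequality, obtaining $\xi \lesssim \sqrt{\log(|\F|/\delta)/N}$.

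For NCE, I would follow the same three-step template but with a different native divergence. Here the pointwise minimizer of the binary contrastive loss is $\Ppi/\Pneg$, and with the benign choice $\Pneg = d^{\piref}$ one can check that the excess NCE risk dominates a Jensen--Shannon (equivalently, squared Hellinger) divergence between $\hPpi(\cdot,\cdot | s,a)$ and $\Ppi(\cdot,\cdot | s,a)$ averaged over $(s,a) \sim d^{\piref}$. I would then apply Pinsker's inequality to convert this to $L^1$, obtaining a bound of comparable order.

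The main obstacle is the boundedness control needed to drive concentration. For OLS, I need $g_{\hbphi,\hbmupi}$ to be uniformly bounded across $\F$, which is where the valid-transition-kernel condition in \Cref{assum:well_defined_family} must be invoked carefully to cap the candidate density ratios. For NCE, the non-Lipschitz $\log$ term additionally forces $\hbphi^{\top}\hbmupi$ to be bounded away from $0$, which typically requires either clipping the class or invoking \Cref{assum:sufficient_sampling} to bound the true ratio $\Ppi/\Pneg$ above and below. This truncation-versus-tail tradeoff is the delicate step that separates a clean PAC statement from a more involved argument with extra boundary terms.
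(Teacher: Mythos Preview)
Your overall plan---instantiate the claim separately for OLS and NCE via ERM concentration over the finite realizable class, then convert the native loss to $L^1$---matches the paper's structure exactly. The differences are in the conversion steps.

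\textbf{OLS.} Your route is essentially the paper's, but your final step is cleaner. You correctly observe that the population OLS objective equals $\|g_{\hbphi,\hbmupi}-g_*\|_{L^2(d^{\piref}\otimes d^{\piref})}^2$ up to a constant, and that the target quantity is $\E_{d^{\piref}\otimes d^{\piref}}[|g_{\hbphi,\hbmupi}-g_*|]$, so a single application of Jensen (or Cauchy--Schwarz against the constant function) yields the bound. The paper instead applies a weighted Cauchy--Schwarz that introduces the factor $d^{\piref}(s',a')/\Ppi(s',a'\mid s,a)$ and therefore needs the extra lower-bound assumption $\Ppi\geq 1/\CP$ (part~(1) of \Cref{assum:OLS_regularity}), picking up a $\sqrt{\CP}$ in the final constant. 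Your version avoids this. Both approaches need boundedness of $g$ for the concentration step, which the paper handles via the coverage ratio $\Ccov$ in \Cref{assum:OLS_regularity}; you correctly flag this as the delicate point.

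\textbf{NCE.} Here the routes diverge. The paper does not go through a Jensen--Shannon or Hellinger comparison between $\hPpi$ and $\Ppi$. Instead it reformulates the binary NCE objective as maximum likelihood over the conditional family $\Pr_f(y,s',a'\mid s,a)$ and invokes the MLE PAC bound of \citet{agarwal2020flambe} (\Cref{thm:PAC_MLE}), which directly delivers an $L^1$ guarantee $\E_{(s,a)}\bigl[\|\Pr_{\hf}-\Pr_{\fs}\|_1^2\bigr]\lesssim \log(|\F|/\delta)/N$. An elementary algebraic computation then relates $\|\Pr_{\hf}-\Pr_{\fs}\|_1$ to $\|\hPpi-\Ppi\|_1$, at the cost of a factor $(1+\Cd)$ coming from the lower bound on $d^{\piref}$ (\Cref{assum:NCE_regularity}). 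Your proposed route is workable in spirit, but the specific claim that the excess NCE risk controls a JS or Hellinger divergence \emph{between $\hPpi$ and $\Ppi$} is not accurate as stated: what the excess risk controls is the KL (hence TV) between the Bernoulli label conditionals $\Pr_{\fs}(\cdot\mid s,a,s',a')$ and $\Pr_{\hf}(\cdot\mid s,a,s',a')$, and you would still need exactly the paper's algebraic step to convert that into $\|\hPpi-\Ppi\|_1$. So for NCE the paper's MLE reformulation is more direct than the divergence-comparison you sketch, and avoids the need for a separate Pinsker step.
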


We point out that, under certain regularity assumptions, the above claim can be proven for many spectral representation learning algorithms. Specifically, when \replearn is implemented by OLS or NCE, we can show that $\xi(|\F|, N, \delta) = \Theta\prn[\Big]{ \sqrt{\frac{1}{N} \log \frac{|\F|}{\delta}} }$.

\paragraph{Policy Evaluation Error.} The performance of the proposed \algname~algorithm is evaluated by the \emph{policy evaluation error} $\Err := \hat{\rho}(\pi) - \rhoP(\pi)$, which can be further bounded by the following theorem.

\begin{theorem}[Main Theorem]\label{thm:main_theorem}
  Suppose \Cref{claim:representation_learning_error} holds for the \replearn subroutine. Then under \Cref{assum:initial_distribution,assum:sufficient_sampling,assum:well_defined_family}, with probability at least $1-\delta$, we have
  \begin{equation*}
    \Err \lesssim \frac{1}{1-\gamma} \sqrt{\frac{\log(1 / \delta)}{N}} + \frac{1}{(1-\gamma)^2}  \cdot \xi(|\F|, N, \delta/2).
  \end{equation*}
\end{theorem}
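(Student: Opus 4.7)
My plan is to decompose the overall evaluation error into a \emph{representation approximation error} and a \emph{statistical concentration error}. Define the intermediate quantity $\tilde\rho(\pi)$ as the population-level saddle-point value obtained by plugging the learned representations $(\hbphi, \hbmupi)$ into \eqref{eq:LP_formulation-representation}, i.e.\ the same objective as \eqref{eq:alg-spectral_dice_estimator} but with true expectations instead of empirical ones. Then
$$
  \Err \;=\; \underbrace{\bigl(\hat\rho(\pi)-\tilde\rho(\pi)\bigr)}_{\text{concentration}} \;+\; \underbrace{\bigl(\tilde\rho(\pi)-\rhoP(\pi)\bigr)}_{\text{representation}}.
$$

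For the representation term, I would first observe that by \Cref{assum:well_defined_family} the pair $(\hbphi,\hbmupi)$ induces a valid state-action transition kernel $\hPpi(s',a'|s,a) := d_{\P}^{\piref}(s',a')\,\hbphi(s,a)^\top \hbmupi(s',a')$; applying \Cref{thm:LP_formulation-representation} in the auxiliary MDP $\hat\M$ obtained by replacing $\Ppi$ with $\hPpi$ (keeping $r$, $\mu_0$, $\pi$ unchanged) shows that $\tilde\rho(\pi)$ is exactly the policy value of $\pi$ in $\hat\M$. A standard simulation-lemma argument then gives
$$
  \bigl|\tilde\rho(\pi)-\rhoP(\pi)\bigr| \;\leq\; \frac{\gamma}{1-\gamma}\,\bigl\|V^\pi_{\P}\bigr\|_\infty \cdot \E_{(s,a)\sim d^\pi_{\P}}\!\bigl[\|\hPpi(\cdot,\cdot|s,a)-\Ppi(\cdot,\cdot|s,a)\|_1\bigr].
$$
Now I would convert the $d^\pi$-weighted expectation into a $d^{\piref}$-weighted one using the concentratability coefficient $\Cpi$ of \Cref{assum:sufficient_sampling}, bound the resulting quantity by $\xi(|\F|,N,\delta/2)$ via \Cref{claim:representation_learning_error}, and use $\|V^\pi\|_\infty\leq 1/(1-\gamma)$. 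This yields the $\tfrac{1}{(1-\gamma)^2}\cdot \xi$ contribution.

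For the concentration term, freeze the learned representation and view \eqref{eq:alg-spectral_dice_estimator} as a saddle-point problem in $(\btheta_Q,\bomega_d)\in\varTheta(\hbphi)\times\varOmega(\hbmupi)$. By the box constraints of \Cref{remark:numerical}, the integrand $(\hbmupi\cdot\bomega_d)(r+\gamma\hbphi'\cdot\btheta_Q-\hbphi\cdot\btheta_Q)$ is uniformly bounded by $O(\Cpi/(1-\gamma))$, and the initial-state term $\hbphi\cdot\btheta_Q$ by $O(1/(1-\gamma))$. I would then set up an $\epsilon$-net of size $(\text{poly}/\epsilon)^{2d}$ over $\varTheta\times\varOmega$, apply Hoeffding pointwise and union-bound, pick $\epsilon = \Theta(1/\sqrt N)$, and use the elementary fact that uniform closeness $|L_N-L|\leq \eta$ of a convex-concave Lagrangian on a compact domain implies $|\min\max L_N-\min\max L|\leq \eta$. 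This delivers $|\hat\rho(\pi)-\tilde\rho(\pi)|\lesssim \tfrac{1}{1-\gamma}\sqrt{\log(1/\delta)/N}$ after absorbing the logarithmic net-size factor.

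The main obstacle will be the representation term: one must carefully justify that $\hPpi$ can be meaningfully substituted into the DICE identity of \Cref{thm:LP_formulation-representation} (this is where \Cref{assum:well_defined_family} is essential so that $\hPpi$ really is a probability kernel), and that the resulting simulation-lemma gap is controlled by precisely the $L^1$ quantity appearing in \Cref{claim:representation_learning_error}. A secondary subtlety is that the initial-distribution term $(1-\gamma)\E_{\mu_0,\pi}[\hbphi^\top\btheta_Q]$ is typically assumed computable in closed form from $\mu_0$ and $\pi$, so no additional concentration is needed there; otherwise a separate Hoeffding bound over an independent batch of initial samples would have to be incorporated, which does not change the asymptotic rate. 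Putting the two contributions together yields the stated bound.
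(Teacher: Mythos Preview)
Your two-term decomposition has a real gap in the representation part. You claim that $\tilde\rho(\pi)$ --- the population version of \eqref{eq:alg-spectral_dice_estimator} with the learned features plugged in --- equals the policy value $\rho_{\hP}(\pi)$ in the auxiliary MDP $\hat\M$ with kernel $\hPpi$. This is false. In $\tilde\rho(\pi)$ the inner expectation over $(s',a')$ is taken under the \emph{true} kernel $\Ppi$ (because the offline data $(s,a,s')\sim\D$ are sampled from the true dynamics), whereas applying \Cref{thm:LP_formulation-representation} to $\hat\M$ would produce the Bellman residual $r+\gamma\,\mathbb E_{(s',a')\sim\hPpi}[\hbphi(s',a')^\top\btheta_Q]-\hbphi(s,a)^\top\btheta_Q$ with $\hPpi$ in the inner expectation. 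Hence the simulation lemma bounds $|\rho_{\hP}(\pi)-\rhoP(\pi)|$, not $|\tilde\rho(\pi)-\rhoP(\pi)|$, and you are missing the gap $\tilde\rho(\pi)-\rho_{\hP}(\pi)$.

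This is exactly why the paper uses a \emph{three}-term split, inserting both $\bar\rho(\pi)$ (your $\tilde\rho(\pi)$) and $\rho_{\hP}(\pi)$. The extra ``dataset error'' $\bar\rho(\pi)-\rho_{\hP}(\pi)$ is handled by the elementary saddle-point stability lemma $\bigl|\min\max F_1-\min\max F_2\bigr|\le\sup|F_1-F_2|$: the two objectives differ only in whether the $(s',a')$-expectation is under $\Ppi$ or $\hPpi$, so the pointwise gap is at most $\tfrac{\Cpi}{1-\gamma}\,\mathbb E_{(s,a)\sim d^{\piref}_{\P}}\!\bigl[\|\Ppi(\cdot,\cdot|s,a)-\hPpi(\cdot,\cdot|s,a)\|_1\bigr]\le\tfrac{\Cpi}{1-\gamma}\,\xi$. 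This contributes one fewer factor of $1/(1-\gamma)$ than the simulation-lemma term and so does not change the final asymptotic, but as written your argument simply does not reach the correct intermediate quantity to which the simulation lemma applies.

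On the concentration side, your covering-number approach is actually more careful than the paper's: the paper applies Hoeffding for a single fixed $(\btheta_Q,\bomega_d)$ and then invokes the saddle-point stability lemma, which strictly speaking requires the deviation bound to hold uniformly over $\varTheta(\hbphi)\times\varOmega(\hbmupi)$. Your $\epsilon$-net plus union bound fills that in; the paper silently absorbs the resulting $\operatorname{poly}(d)$ log factor into the $\lesssim$.
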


\begin{proof}[Proof sketch]
We first split $\Err$ into the following terms:
\begin{equation*}
  \Err \;=\; 
  \underbrace{\hat{\rho}(\pi) - \bar{\rho}(\pi)}_{\textrm{statistical}}
  \;+\; \underbrace{\bar{\rho}(\pi) - \rho_{\hP}(\pi)}_{\textrm{dataset}}
  \;+\; \underbrace{\rho_{\hP}(\pi) - \rhoP(\pi)}_{\textrm{representation}},
\end{equation*}
where we introduce an auxiliary problem:
\begin{align*}
  \bar{\rho}(\pi) &= \min_{\btheta_Q} \max_{\bomega_d} \biggl\lbrace (1-\gamma) \E[\begin{subarray}{l} s \sim \mu_0,\\ a \sim \pi(\cdot | s) \end{subarray}]{\hbphi(s,a)^{\top} \btheta_Q} \\
  &\hspace{8em} + \mathbb{E}_{\begin{subarray}{l} s \sim d^{\piref}(\cdot),~ a \sim \piref(a | s),\\ (s',a') \sim \Ppi(\cdot,\cdot | s,a) \end{subarray}} \Bigl[ \prn[\big]{\hbmupi(s,a)^{\top} \bomega_d} \cdot{} \prn[\big]{r(s,a) + \gamma \hbphi(s',a')^{\top} \btheta_Q - \hbphi(s,a)^{\top} \btheta_Q} \Bigr] \biggr\rbrace. \nonumber
\end{align*}
Note that \eqref{eq:alg-spectral_dice_estimator} is the empirical estimation of $\bar{\rho}(\pi)$, and that $\bar{\rho}(\pi)$ is (subtly) inequivalent to $\rho_{\hP}(\pi)$---the expectation is still taken over $(s',a') \sim \Ppi(\cdot,\cdot | s,a)$ rather than $\hPpi(\cdot,\cdot | s,a) = \angl{\hbphi(s,a), \hbmupi(\cdot,\cdot)}$.

Intuitively, the latter two terms are directly related to the representation learning error established in \Cref{claim:representation_learning_error}, which can actually be bounded as follows:
\begin{align*}
  \rho_{\hP}(\pi) - \rhoP(\pi)
  &\lesssim \frac{\gamma}{(1-\gamma)^2} \cdot \xi(|\F|, N, \delta/2), \\
  \bar{\rho}(\pi) - \rho_{\hP}(\pi) &\lesssim \frac{1}{1 - \gamma} \cdot \xi(|\F|, N, \delta/2).
\end{align*}
On the other hand, the first term is only caused by replacing the expectations with their empirical estimators, which can be bounded by concentration inequalities as:
\begin{equation*}
  \hat{\rho}(\pi) - \bar{\rho}(\pi) \lesssim \frac{1}{1-\gamma} \sqrt{\frac{\log(1/\delta)}{N} }.
\end{equation*}
Plugging these terms back completes the proof. 
\end{proof}

Finally, we conclude that the sample complexity of \algname equipped with either OLS or NCE \replearn subroutine is $\tilde{O}(N^{-1/2})$ (under mild regularity assumptions). Details are deferred to \Cref{sec:apdx-analysis}.
  \section{Experiments}\label{sec:5-experiments}

In this section, we present experimental results in both continuous and discrete environments to demonstrate the strength of the proposed \algname algorithm. We also study the impact of hyperparameters, data coverage and the choice of behavior policy on the OPE performance, and illustrate the efficacy of the proposed representation learning method.

The empirical results show that our method outperforms \textsc{BestDICE}, the state-of-the-art DICE implementation without representation learning, in terms of both the convergence rate and the final prediction error. In comparison to other baselines, \algname achieves comparable performance with higher efficiency in simple environments, and performs significantly better than others in the most challenging environment.

\subsection{Continuous Environments}\label{sec:5-experiments-1-continuous}

\paragraph{Setting.} We start by comparing \algname with various baseline OPE methods in literature, including \textsc{BestDICE} \citep{yang2020off}, Fitted Q Evaluation (FQE) \citep{kostrikov2020statistical}, Model-Based (MB) method \citep{zhang2021autoregressive}, Importance Sampling (IS) method \citep{hanna2019importance} and Doubly-Robust (DR) method \citep{dudik2011doubly}. We follow the experiment protocol in \citet{yang2020off} to evaluate and compare the OPE performances of these algorithms in three continuous MuJoCo environments, namely \texttt{Cartpole}, \texttt{Reacher} and \texttt{Half-Cheetah}, in an increasing order of difficulty. In our implementation, for representation learning, we parameterize each of $\hbphi$ and $\hbmupi$ with a 2-layer feed-forward neural network. For the OPE step, regularizer is appended to \eqref{eq:alg-spectral_dice_estimator}, and the estimated policy value is retrieved by $\hat{\rho}(\pi) = \mathbb{E}_{(s,a) \sim d^{\mathcal{D}}} [ \hbmupi(s,a)^{\top} \bomega_d \cdot r(s,a) ]$ (see \Cref{remark:numerical}). 
Both steps are regarded as stochastic optimization problems, and are solved by stochastic gradient descent and stochastic gradient descent-ascent, respectively. Optimization hyperparameters are selected via grid search. Performance is quantified by \emph{OPE error} $|\hat{\rho}(\pi) - \rho(\pi)|$.

\begin{figure*}[t!]
    \centering
    \includegraphics[width=\textwidth]{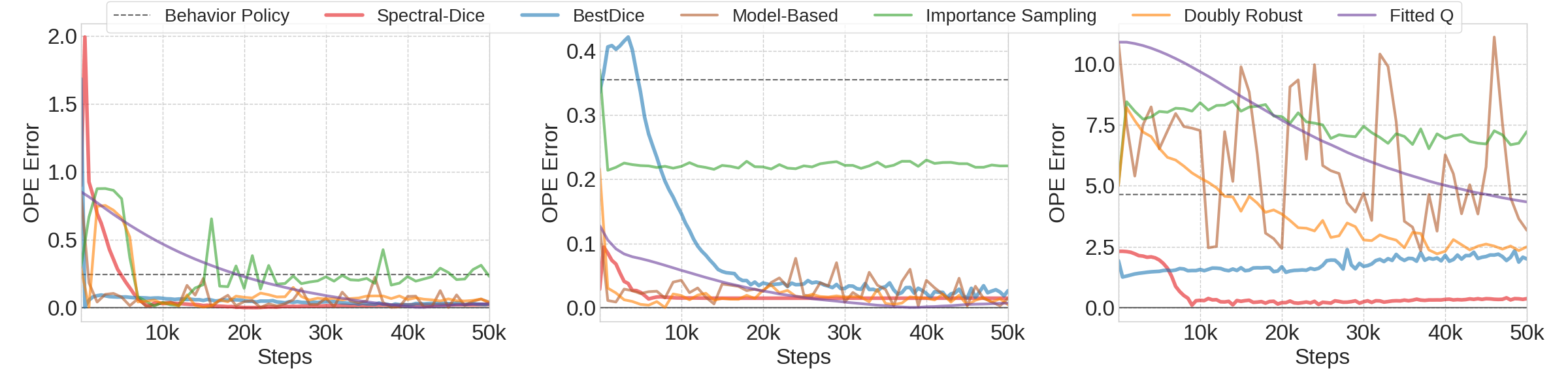}
    \vspace{-9mm}
    \caption{OPE error over the number of training steps in \texttt{Cartpole}, \texttt{Reacher} and \texttt{Half-Cheetah} environments (from left to right). Due to the use of convex-concave formulation, we can see that \algname converges faster and more stably to the target policy with a smaller OPE error in all three environments.}
    \label{fig:curve}
    \vspace{-4mm}
\end{figure*}

\begin{figure*}[t!]
\vspace{0.42cm}
\centering
\begin{minipage}[b]{.30\linewidth}
  \centering
  \vspace{-3mm}
  \includegraphics[width=.9\textwidth]{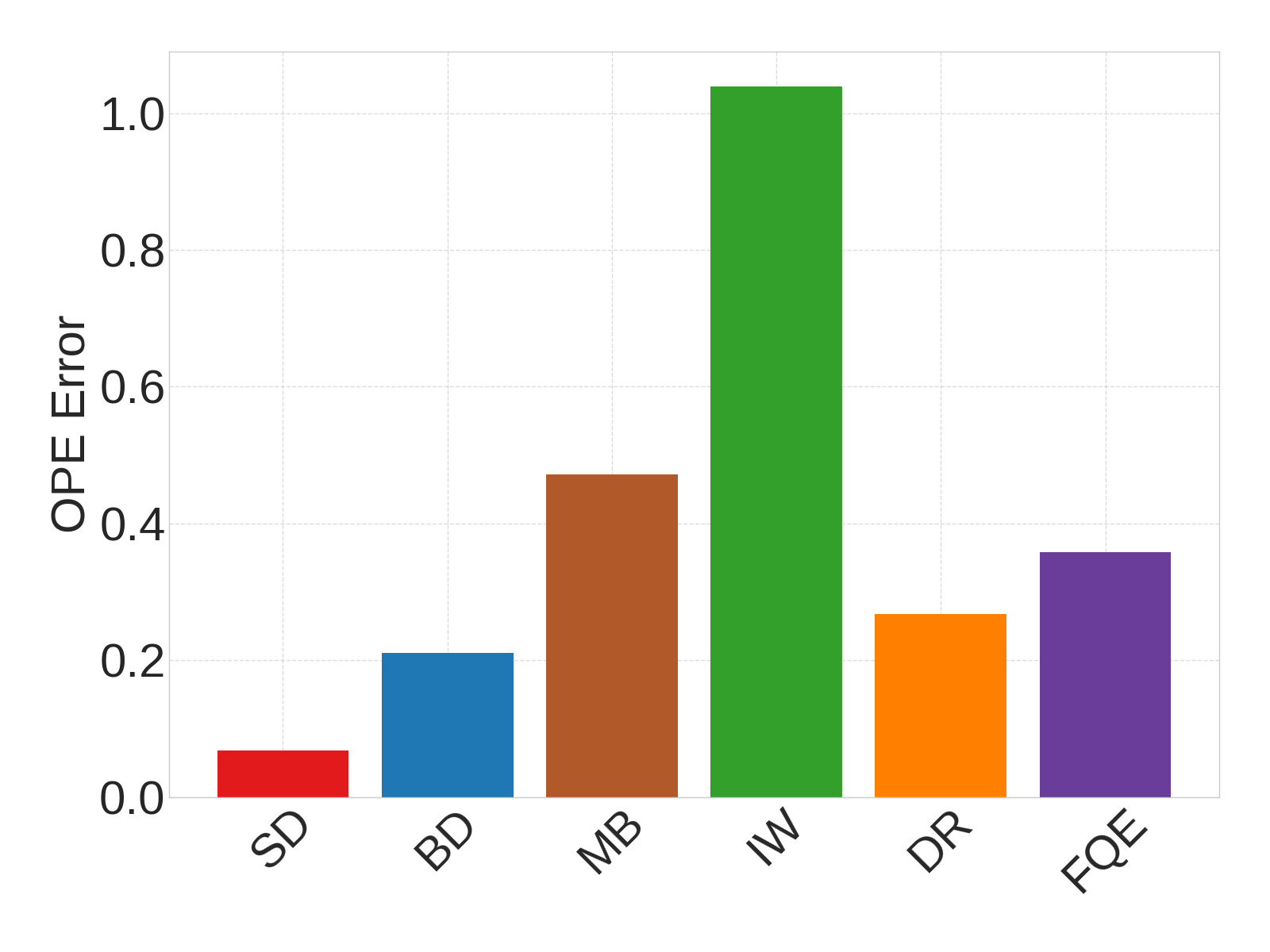}
  \label{fig:ablation study2}
  \vspace{-3mm}
  \captionof{figure}{Averaged relative OPE errors over three environments.}
  \label{fig:averaged error}
\end{minipage}
\hfill
\begin{minipage}[b]{.66\linewidth}
  \centering
  \begin{subfigure}[t]{0.49\linewidth}
    \includegraphics[width=\linewidth]{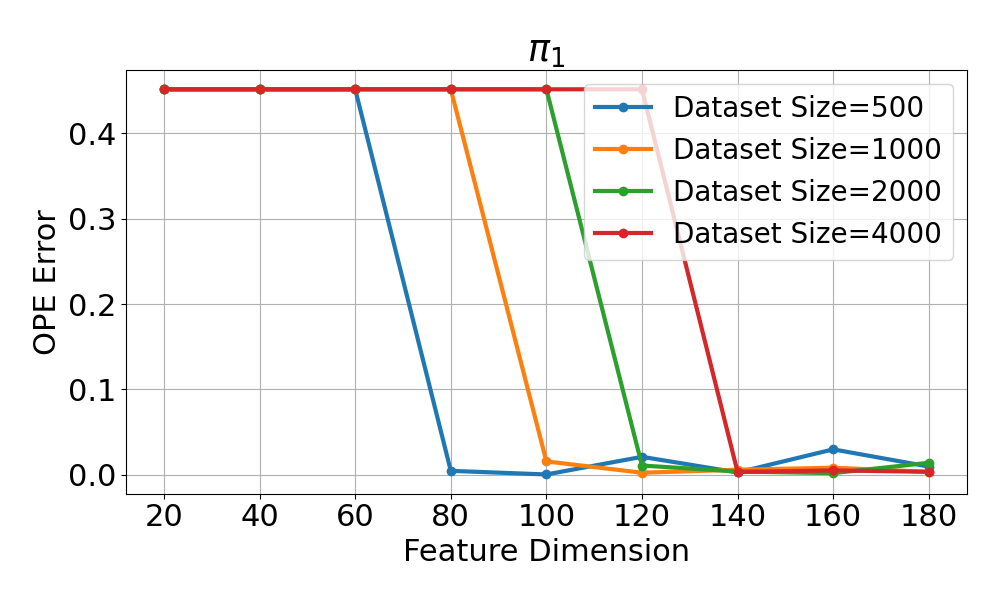}
    \label{fig:ablation study1}
  \end{subfigure}
  \begin{subfigure}[t]{0.49\linewidth}
    \includegraphics[width=\linewidth]{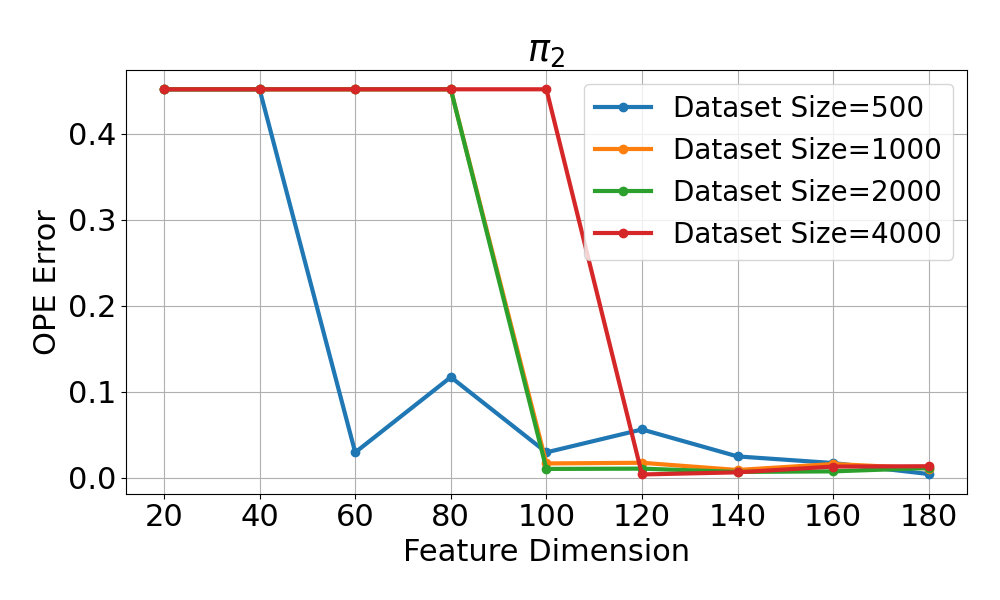}
  \end{subfigure}
  \vspace{-6mm}
  \captionof{figure}{OPE error of \algname in \texttt{Four Rooms} with varying behavior policies (``far-away'' policy $\pi_1$ vs. ``similar'' policy $\pi_2$), dataset sizes and feature dimensions.}
  \label{fig:ablation study}
\end{minipage}
\end{figure*}

\paragraph{Results.} The OPE performances of different methods in three environments are shown in \Cref{fig:curve}. It is observed that \algname achieves comparable performance in fewer optimization steps as compared to all the other baselines, and further, outperforms  them in terms of both convergence rate and final estimation error in the most challenging \texttt{Half-Cheetah} environment. Further, although FQE achieves an error close to \algname in simpler environments, its performance significantly degrades when the transition dynamics becomes more complex, demonstrating the importance and power of spectral representation.

Here we also highlight the comparison between two DICE-based methods---\algname (ours) and \textsc{BestDICE}. All settings showcase the advantage of our primal-dual spectral representation over the generic neural network representation, which justify the argument that, compared to the non-convex non-concave optimization in vanilla DICE, our convex-concave optimization leads to faster convergence and enhanced stability within a wider range of environments.

For a clearer comparison, we further present the averaged relative OPE error across these three environments in \Cref{fig:averaged error}. Here the \emph{relative OPE error} is defined by $\frac{|\hat{\rho}(\pi) - \rho(\pi)|}{|\bar{\rho}(\piref) - \rho(\pi)|}$, \ie, OPE error normalized by the value difference between the target and behavior policies. Under this metric, it becomes more evident that our method outperforms all the baselines in terms of estimation accuracy by a large margin.

\subsection{Discrete Environment}

\paragraph{Setting.}
We proceed to test our method in \texttt{Four Rooms} \citep{sutton1999between}, a classical discrete environment featuring convenient visualization, to study the algorithm's sensitivity for hyperparameters and illustrate the efficacy of representation learning. For representation learning in this tabular MDP, we perform singular value decomposition (SVD) of the matrix $\brak[\big]{ \frac{\Ppi(s',a' | s,a)}{d^{\pi}(s', a')} }$ (indexed by $(s,a)$ and $(s',a')$) and select the top $d$ singular vectors as $\hbphi(s,a)$ and $\hbmupi(s',a')$.

\paragraph{Sensitivity Study.} We study the algorithm's sensitivity with respect to behavior policy $\piref$, dataset size $N$ and spectral feature dimension $d$ by examining their impact on the OPE performance. For $\piref$, we vary between two behavior policies $\pi_1$ and $\pi_2$, where $\pi_1$ has a larger $\ell_1$-distance from the target policy than $\pi_2$. The results are shown in \Cref{fig:ablation study}. It can be observed that the proposed algorithm is always able to achieve low OPE errors with sufficiently large feature dimensions, showcasing its wide applicability under different behavior policies, data availability and hyperparameters.

\begin{figure}[htb]
  \centering
  \includegraphics[width=0.5\linewidth]{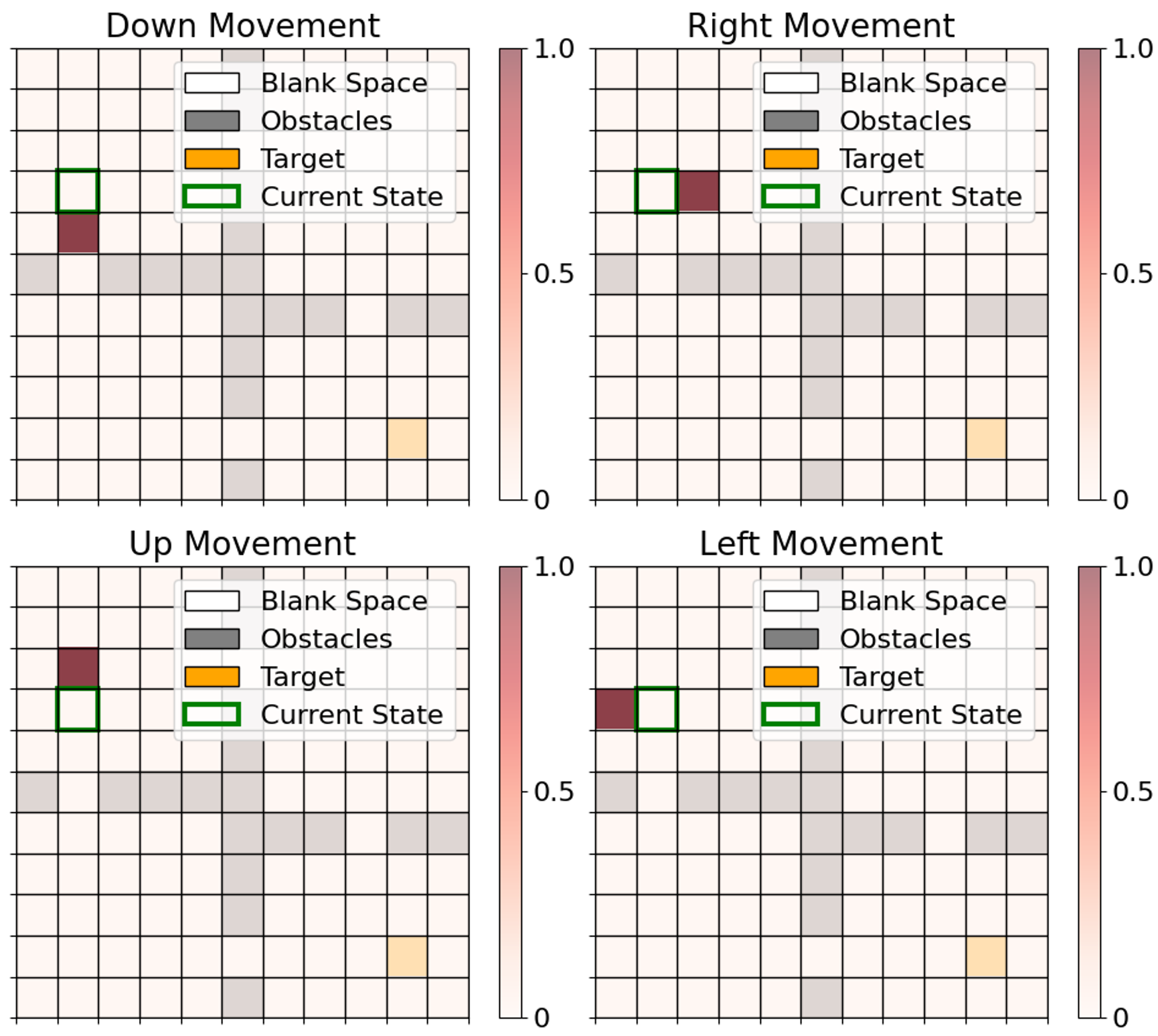}
  \vspace{-2mm}
  \caption{Visualization of the learned transition kernel for a fixed state and all the four actions.}
  \label{fig:transition_state}
\end{figure}

\paragraph{Efficacy of Representation Learning.}
To give a hint of the efficacy of our representation learning scheme \replearn, we visualize in \Cref{fig:transition_state} the learned transition kernel $\hP$ for a fixed state and all the four actions, where $\hP$ is restored from the spectral representation by \eqref{eq:spectral_representation_P}. As shown in the heat map (where darker color indicates higher probability), the \replearn algorithm successfully learns a set of primal-dual features that accurately encode the correct transition dynamics. 

More experimental details are deferred to~\Cref{sec:apdx-experiments}.
  \section{Conclusion}\label{sec:6-conclusion}

In this paper, to relieve the intrinsic tension between breaking the curse of horizon and overcoming the curse of dimensionality via DICE estimators, we propose a novel primal-dual spectral representation method that establishes linear spectral representations for both the primal variable (\ie, $Q$-function) and the dual variable (\ie, stationary distribution correction ratio), which leads to \algname,
an efficient and practical OPE algorithm that eliminates the non-convex non-concave saddle-point optimization in DICE and makes efficient use of historical data.
The performance of \algname is justified by a theoretical sample complexity guarantee and the empirical outperformance. Future directions include taking one step further to design offline policy optimization methods using primal-dual spectral representations, and applying the algorithm for efficient imitation learning.
   
  \bibliography{ref}
  \bibliographystyle{unsrtnat}


  \newpage
  \appendix

  \begin{center}
    \LARGE\textbf{\textsc{Appendix}}
  \end{center}
  \section{More Experimental Results}
\label{sec:apdx-experiments}

\paragraph{Additional Experiments.} We evaluate the OPE performance of the proposed \algname algorithm and the aforementioned baselines (see \Cref{sec:5-experiments-1-continuous}) in three additional environments, namely \texttt{Walker2d}, \texttt{Hopper} and \texttt{Ant}, the results of which are shown in \Cref{fig:appendix-curve}. These additional experiments further justify that our algorithm outperforms all the other baselines in a consistent and robust way, enjoying both a faster convergence rate and a smaller OPE error. These additional experimental results further confirm the superiority of \algname.

\begin{figure*}[ht!]
    \centering
    \includegraphics[width=0.98\textwidth]{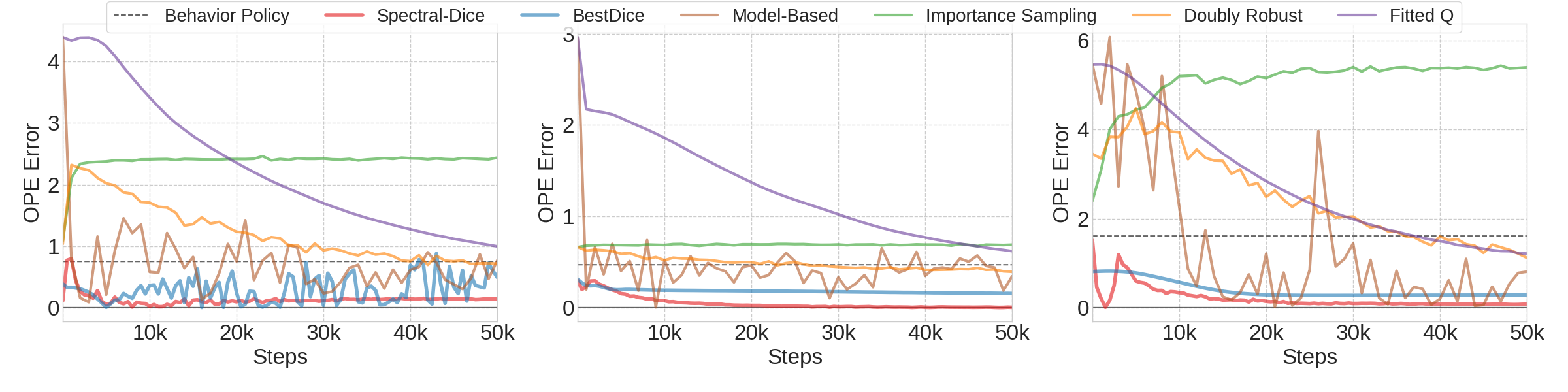}
    \vspace{-2mm}
    \caption{OPE error over the number of training steps in \texttt{Walker2d}, \texttt{Hopper} and \texttt{Ant} (from left to right). 
    }
    \label{fig:appendix-curve}
\end{figure*}

\paragraph{Implementation Details.} For the baseline algorithms, we follow the implementation of \textsc{BestDICE} in \cite{yang2020off} and the implementations of FQE, MB, IS, DR in \cite{fu2021benchmarks}. The optimization hyperparameters including learning rate, optimizer parameter, network architecture, batch size, \etc, are selected via grid search. All the experiments were conducted using V100 GPUs on a multi-node cluster.

For the continuous environments, the target policy is obtained using deep reinforcement learning agents (Deep Q-Network (DQN) agent for \texttt{Cartpole}, and Soft Actor-Critic (SAC) agents for all the other environments). The behavior policy is then obtained by sampling from a Gaussian distribution centered at the mean action of the target policy, where the variance of the Gaussian distribution can be adjusted to get behavior policies at different distances from the target policy. To build the offline dataset, we collect 400 trajectories using the behavior policy, where each trajectory is truncated to 250 steps. 

The source code is available at \url{\ttfamily https://anonymous.4open.science/r/spectral\_dice-720A}.

  \section{Primal-Dual Spectral Representation}\label{sec:apdx-representation}

In this appendix, we present the key properties of the proposed primal-dual spectral representation with proofs, as well as a brief discussion on why the spectral representation of that specific form is preferable.

\subsection{DICE Estimator with Primal-Dual Spectral Representation}\label{sec:apdx-representation-proofs}

We first present the proof of \Cref{thm:LP_formulation-representation} that is already stated in the main text.

\begin{restate}[\Cref{thm:LP_formulation-representation}]
  With primal-dual spectral representation \eqref{eq:spectral_representation_P_pi} where $q(\cdot) \equiv d^{\piref}(\cdot)$, under \Cref{assum:initial_distribution},
  \begin{align*}
    \rho(\pi) = \min_{\btheta_Q} \max_{\bomega_d}& \bigg\lbrace (1-\gamma) \E[\begin{subarray}{l} s \sim \mu_0,\\ a \sim \pi(\cdot | s) \end{subarray}]{\bphi(s,a)^{\top} \btheta_Q} \\
    &\quad + \E[\begin{subarray}{l} s \sim d^{\piref}(\cdot),~ a \sim \piref(a | s),\\ s' \sim \P(\cdot | s,a),~ a' \sim \pi(\cdot | s') \end{subarray}]{ \prn[\big]{\bmupi(s,a)^{\top} \bomega_d} \prn[\big]{r(s,a) + \gamma \bphi(s',a')^{\top} \btheta_Q - \bphi(s,a)^{\top} \btheta_Q} } \bigg\rbrace. \nonumber
  \end{align*}
\end{restate}

\begin{proof}[Proof of \Cref{thm:LP_formulation-representation}]
  Recall the primal-dual LP formulation of policy evaluation stated in \eqref{eq:LP_formulation-primal_dual}, which can be equivalently rewritten using the primal-dual spectral representation \eqref{eq:spectral_representation_P_pi} as follows:

  \vspace{-10pt}
  \begin{footnotesize}
  \begin{subequations}\label{eq:LP_formulation-representation-e1}
  \begin{align}
    \rho(\pi) &= \min_{Q(\cdot, \cdot)} \max_{d^{\pi}(\cdot,\cdot)} \brac*{ (1-\gamma) \E[\begin{subarray}{l} s \sim \mu_0,\\ a \sim \pi(\cdot | s) \end{subarray}]{Q(s,a)} + \int d^{\pi}(s,a) \brak*{r(s,a) + \gamma \E[\begin{subarray}{l} s' \sim \P(\cdot | s,a),\\ a' \sim \pi(\cdot | s') \end{subarray}]{Q(s',a')} - Q(s,a)} \diff s \diff a} \label{eq:LP_formulation-representation-e1:1}\\
    &= \min_{Q(\cdot, \cdot)} \max_{d^{\pi}(\cdot,\cdot)} \brac*{ (1-\gamma) \E[\begin{subarray}{l} s \sim \mu_0,\\ a \sim \pi(\cdot | s) \end{subarray}]{Q(s,a)} + \int \q(s) \piref(a | s) \cdot \frac{d^{\pi}(s,a)}{\q(s) \piref(a | s)}  \brak*{r(s,a) + \gamma \E[\begin{subarray}{l} s' \sim \P(\cdot | s,a),\\ a' \sim \pi(\cdot | s') \end{subarray}]{Q(s',a')} - Q(s,a)} \diff s \diff a} \label{eq:LP_formulation-representation-e1:2}\\
    &= \min_{Q(\cdot, \cdot)} \max_{d^{\pi}(\cdot,\cdot)} \brac*{ (1-\gamma) \E[\begin{subarray}{l} s \sim \mu_0,\\ a \sim \pi(\cdot | s) \end{subarray}]{Q(s,a)} + \E[\begin{subarray}{l} s \sim \q(\cdot),~ a \sim \piref(a | s),\\ s' \sim \P(\cdot | s,a),~ a' \sim \pi(\cdot | s') \end{subarray}]{ \frac{d^{\pi}(s,a)}{\q(s) \piref(\cdot | s)} \prn[\big]{r(s,a) + \gamma Q(s',a') - Q(s,a)} } } \label{eq:LP_formulation-representation-e1:3}\\
    &= \min_{\btheta_Q} \max_{\bomega_d} \brac*{ (1-\gamma) \E[\begin{subarray}{l} s \sim \mu_0,\\ a \sim \pi(\cdot | s) \end{subarray}]{\bphi(s,a)^{\top} \btheta_Q} + \E[\begin{subarray}{l} s \sim d^{\piref}(\cdot),~ a \sim \piref(\cdot | s),\\ s' \sim \P(\cdot | s,a),~ a' \sim \pi(\cdot | s') \end{subarray}]{ \bmupi(s,a)^{\top} \bomega_d \prn[\big]{r(s,a) + \gamma \bphi(s',a')^{\top} \btheta_Q - \bphi(s,a)^{\top} \btheta_Q} } }, \label{eq:LP_formulation-representation-e1:4}
  \end{align}
  \end{subequations}
  \end{footnotesize}
  \vspace{-6pt}
  
  \noindent where in \eqref{eq:LP_formulation-representation-e1:2} we perform the IS-style change-of-variable used in DICE estimators (see \eqref{eq:IS_change_of_variable}); in \eqref{eq:LP_formulation-representation-e1:4} we plug in the primal-dual spectral representation of $Q^{\pi}$ and $d^{\pi}$ stated in \eqref{eq:spectral_representation_P_pi}, as well as the fact that $q(\cdot) \equiv d^{\piref}(\cdot)$.
\end{proof}

\subsection{Failure of the Naive Spectral Representation}\label{sec:apdx-representation-naive_failure}

In \Cref{sec:3-algorithm-1-representation}, it is mentioned that directly applying the naive spectral representation \eqref{eq:spectral_representation_P} proposed in \citet{ren2022spectral} induces a complicated representation for $\zeta(\cdot,\cdot)$, which in turn leads to an intractable optimization \eqref{eq:LP_formulation-primal_dual} for the computation of the DICE estimator. The above point is further elaborated here in a formal way.

Note that, in \Cref{thm:linear_representation_Q_d}, the linear representation of $Q^{\pi}$ only builds upon the low-rank MDP assumption, and therefore it still holds with the naive spectral representation \eqref{eq:spectral_representation_P}. Meanwhile, it can be checked that
\begin{equation}
  d^{\pi}(s,a)
  = \biggl\langle \q(s) \pi(a | s) \bmu(s), \underbrace{ (1-\gamma) \bomega_0 + \gamma \int d^{\pi}(\tilde{s}, \tilde{a}) \bphi(\tilde{s},\tilde{a}) \diff \tilde{s} \diff \tilde{a} }_{ \bomega_d^{\pi} } \biggr\rangle,
\end{equation}
which can be obtained by plugging the relation $\pi(a | s) \bmu(s) = \piref(a | s) \bmupi(s,a)$ into the linear representation of $d^{\pi}(\cdot, \cdot)$ to eliminate $\bmupi$ from the representation. Consequently, the LP formulation \eqref{eq:LP_formulation-representation-e1} becomes

\vspace{-8pt}
\begin{footnotesize}
\begin{align*}
  \rho(\pi) &= \min_{Q(\cdot, \cdot)} \max_{d^{\pi}(\cdot)} \brac*{ (1-\gamma) \E[\begin{subarray}{l} s \sim \mu_0,\\ a \sim \pi(\cdot | s) \end{subarray}]{Q(s,a)} + \int d^{\pi}(s,a) \brak*{r(s,a) + \gamma \E[\begin{subarray}{l} s' \sim \P(\cdot | s,a),\\ a' \sim \pi(\cdot | s') \end{subarray}]{Q(s',a')} - Q(s,a)} \diff s \diff a} \\
  &= \min_{Q(\cdot, \cdot)} \max_{d^{\pi}(\cdot)} \brac*{ (1-\gamma) \E[\begin{subarray}{l} s \sim \mu_0,\\ a \sim \pi(\cdot | s) \end{subarray}]{Q(s,a)} + \int \q(s) \piref(a | s) \cdot \textcolor{red}{ \frac{\pi(a | s)}{\piref(a | s)} } \frac{d^{\pi}(s,a)}{\q(s) \pi(a | s)} \brak*{r(s,a) + \gamma \E[\begin{subarray}{l} s' \sim \P(\cdot | s,a),\\ a' \sim \pi(\cdot | s') \end{subarray}]{Q(s',a')} - Q(s,a)} \diff s \diff a} \\
  &= \min_{Q(\cdot, \cdot)} \max_{d^{\pi}(\cdot)} \brac*{ (1-\gamma) \E[\begin{subarray}{l} s \sim \mu_0,\\ a \sim \pi(\cdot | s) \end{subarray}]{Q(s,a)} + \E[\begin{subarray}{l} s \sim \q(\cdot),~ a \sim \piref(\cdot | s),\\ s' \sim \P(\cdot | s,a),~ a' \sim \pi(\cdot | s') \end{subarray}]{ \textcolor{red}{ \frac{\pi(a | s)}{\piref(a | s)} } \frac{d^{\pi}(s,a)}{\q(s) \pi(a | s)} \prn[\big]{r(s,a) + \gamma Q(s',a') - Q(s,a)} } } \\
  &= \min_{\btheta_Q} \max_{\bomega_d} \brac*{ (1-\gamma) \E[\begin{subarray}{l} s \sim \mu_0,\\ a \sim \pi(\cdot | s) \end{subarray}]{\bphi(s,a)^{\top} \btheta_Q} + \E[\begin{subarray}{l} s \sim \q(\cdot),~ a \sim \piref(\cdot | s),\\ s' \sim \P(\cdot | s,a),~ a' \sim \pi(\cdot | s') \end{subarray}]{  \textcolor{red}{ \frac{\pi(a | s)}{\piref(a | s)} } \prn[\big]{\bmu(s)^{\top} \bomega_d} \prn[\big]{r(s,a) + \gamma \bphi(s',a')^{\top} \btheta_Q - \bphi(s,a)^{\top} \btheta_Q} } },
\end{align*}
\end{footnotesize}

\noindent which involves an unknown ratio $\frac{\pi(a | s)}{\piref(a | s)}$ when the behavior policy $\piref$ is unknown, and is thus intractable.

The above failed attempt implies that the policy ratio should be ``absorbed'' into the representation to be implicitly learned during representation learning, which exactly inspires the primal-dual spectral representation \eqref{eq:spectral_representation_P_pi}.

\subsection{Solving the Minimax Problem via Regularization}\label{sec:apdx-1_regularization}

It is known that directly solving \eqref{eq:alg-spectral_dice_estimator} leads to potential numerical instability issues due to the objective's linearity in $\btheta_Q$ and $\bomega_d$ \citep{nachum2019algaedice}. Fortunately, it is shown in \citet{yang2020off} that certain regularization leads to strictly concave inner maximization while keeping the optimal \emph{solution} $\bomega_d^{\star}$ unbiased. Specifically, in practical implementation we may append the following regularizer to the objective in \eqref{eq:LP_formulation-representation}:
\begin{align}\label{eq:LP_formulation-regularized}
  \rho_{\reg}(\pi) &= \min_{\btheta_Q} \max_{\bomega_d} \biggl\lbrace (1-\gamma) \E[\begin{subarray}{l} s \sim \mu_0,\\ a \sim \pi(\cdot | s) \end{subarray}]{\bphi(s,a)^{\top} \btheta_Q} + \mathbb{E}_{\begin{subarray}{l} s \sim d^{\piref}(\cdot),~ a \sim \piref(a | s),\\ s' \sim \P(\cdot | s,a),~ a' \sim \pi(\cdot | s') \end{subarray}} \Bigl[ \prn[\big]{\bmupi(s,a)^{\top} \bomega_d} \cdot{} \nonumber\\
  &\hspace{7em} \prn[\big]{r(s,a) + \gamma \bphi(s',a')^{\top} \btheta_Q - \bphi(s,a)^{\top} \btheta_Q} \Bigr]  \textcolor{red}{{}- \lambda \E[(s,a) \sim \D]{f(\hbmupi(s,a)^{\top} \bomega_d)} } \bigg\rbrace.
\end{align}
Here $f$ is a differentiable convex function with closed and convex Fenchel conjugate $f_*$ (see \Cref{sec:apdx-4_Fenchel_conjugate}), and $\lambda > 0$ is a tunable constant that controls the magnitude of regularization. It is evident that the regularized objective is concave in $\bomega_d$, which facilitates the inner maximization. What's more, it has also be shown that such regularization does not alter the optimal solution $\bomega_d^{\star}$, as summarized below.

\begin{lemma}[\citet{nachum2019algaedice, yang2020off}]
  The solution $(\btheta_Q^{\reg,\star}, \bomega_d^{\reg,\star})$ to \eqref{eq:LP_formulation-regularized} satisfies:
  \begin{align*}
    \bphi(s,a)^{\top} \btheta_Q^{\reg,\star} &= \bphi(s,a)^{\top} \btheta_Q^{\star} - \lambda (\mathcal{I} - \mathcal{P}^{\pi})^{-1} f'\prn*{\tfrac{d^{\pi}(s,a)}{d^{\piref}(s,a)}},\\
    \bmupi(s,a)^{\top} \bomega_d^{\star} &= \bmupi(s,a)^{\top} \bomega_d^{\reg,\star}, \\
    \rho_{\reg}(\pi) &= \rho(\pi) - \lambda \Div_f \prn{d^{\pi} \Vert d^{\piref}},
  \end{align*}
  where $(\btheta_Q^{\star}, \bomega_d^{\star})$ is the solution to \eqref{eq:LP_formulation-representation}.
\end{lemma}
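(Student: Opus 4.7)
The plan is to characterize both optimal pairs via first-order (KKT) conditions of the respective convex-concave saddle-point problems and compare them term by term. Writing $Q(s,a) := \bphi(s,a)^\top \btheta_Q$ and $\zeta(s,a) := \bmupi(s,a)^\top \bomega_d$ for brevity (both legitimate by \Cref{thm:linear_representation_Q_d}), the key observation is that the extra regularizer in \eqref{eq:LP_formulation-regularized} depends only on $\zeta$, so the stationarity conditions in $\btheta_Q$ coincide with those of the unregularized problem \eqref{eq:LP_formulation-representation}, while the stationarity conditions in $\bomega_d$ differ only by the additional gradient contribution $-\lambda f'(\zeta)$.

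First, I would take the gradient in $\btheta_Q$ and set it to zero. Since the regularizer is independent of $\btheta_Q$, the resulting condition is identical to the unregularized one, which is the Bellman flow equation whose unique solution is $\zeta = d^\pi / d^{\piref}$; this immediately yields the second identity $\bmupi^\top \bomega_d^{\reg,\star} = \bmupi^\top \bomega_d^{\star}$. Next, I would take the gradient in $\bomega_d$: in the unregularized case this returns the standard Bellman equation $Q^\star = r + \gamma \mathcal{P}^\pi Q^\star$, whereas in the regularized case it returns the perturbed equation $Q^{\reg,\star} = r + \gamma \mathcal{P}^\pi Q^{\reg,\star} - \lambda f'(\zeta^\star)$. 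Inverting $(\mathcal{I} - \gamma \mathcal{P}^\pi)$ (well-defined since $\gamma < 1$) and subtracting gives the first identity $\bphi^\top \btheta_Q^{\reg,\star} = \bphi^\top \btheta_Q^{\star} - \lambda (\mathcal{I} - \mathcal{P}^\pi)^{-1} f'(\zeta^\star)$.

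For the value identity, I would substitute $(\btheta_Q^{\reg,\star}, \bomega_d^{\star})$ back into the regularized objective. The crucial observation is that once $\zeta = d^\pi/d^{\piref}$, the bilinear part of the unregularized objective collapses via the Bellman flow property of $d^\pi$: the $Q$-dependent terms cancel, leaving only $\E[(s,a)\sim d^\pi]{r(s,a)} = \rho(\pi)$, independent of $Q$. Therefore $\rho_{\reg}(\pi) = \rho(\pi) - \lambda \E[(s,a) \sim d^{\piref}]{f(\zeta^\star(s,a))}$, and the remaining expectation equals $\Div_f(d^\pi \Vert d^{\piref})$ by the definition of $f$-divergence. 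The main technical obstacle will be upgrading the finite-dimensional parameter-space stationarity conditions to the function-space Bellman-style equations used above; this requires that both $\lambda (\mathcal{I} - \mathcal{P}^\pi)^{-1} f'(\zeta^\star)$ lie in the span of $\bphi$ and $\zeta^\star$ lie in the span of $\bmupi$, both of which follow from the linear representability granted by \Cref{thm:linear_representation_Q_d} combined with the linearity of $(\mathcal{I} - \mathcal{P}^\pi)^{-1}$. One should also verify that these stationary points are genuine saddle points, which follows from the convex-concavity of the unregularized objective together with the additional concavity contributed by $-\lambda f(\cdot)$ when $f$ is convex.
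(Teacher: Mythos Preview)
The paper does not supply its own proof of this lemma: it is stated with attribution to \citet{nachum2019algaedice,yang2020off} and left unproved. Your proposal is thus filling in a deferred argument, and the outline you give is essentially the standard route in those references: derive first-order conditions for the saddle point, observe that the regularizer leaves the $\btheta_Q$-stationarity (the Bellman flow equation for $\zeta$) unchanged while the $\bomega_d$-stationarity (the Bellman equation for $Q$) acquires the additional term $-\lambda f'(\zeta)$, then subtract and invert $(\mathcal{I}-\gamma\mathcal{P}^\pi)$ to obtain the first identity. The value identity via cancellation of the $Q$-dependent terms once $\zeta = d^\pi/d^{\piref}$ is enforced is also correct.

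One point where your justification is too quick: you claim $(\mathcal{I}-\gamma\mathcal{P}^\pi)^{-1} f'(\zeta^\star)$ lies in the span of $\bphi$ ``from \Cref{thm:linear_representation_Q_d} combined with the linearity of $(\mathcal{I}-\mathcal{P}^\pi)^{-1}$.'' This does not follow for general convex $f$. The low-rank structure \eqref{eq:spectral_representation_P_pi} gives that $\mathcal{P}^\pi g$ is linear in $\bphi$ for \emph{any} $g$, so $(\mathcal{I}-\gamma\mathcal{P}^\pi)^{-1} g = g + \gamma\mathcal{P}^\pi(\mathcal{I}-\gamma\mathcal{P}^\pi)^{-1} g$ lies in the $\bphi$-span precisely when $g$ itself does. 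For $g = f'(\zeta^\star)$ with $\zeta^\star$ linear in $\bmupi$, this need not be linear in $\bphi$ unless $f'$ is affine (i.e., $f$ quadratic). The identity as written should therefore be read as the function-space statement from the cited works (where $Q$ and $\zeta$ range over all measurable functions), which your first-order argument does establish; exact realizability of $Q^{\reg,\star}$ in the finite-dimensional $\bphi$-span is a separate approximation issue that neither the paper nor the cited works address, and you need not resolve it to match the intended content of the lemma.
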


We emphasize that the regularized problem is unbiased only in the sense that $\bomega_d^{\reg,\star} = \bomega_d^{\star}$. Therefore, in general we need to plug $\bomega_d^{\reg,\star}$ back into \eqref{eq:LP_formulation-representation} and solve the outer minimization again to recover $\btheta_Q^{\star}$. Nevertheless, when $\lambda$ is sufficiently small, we shall regard $\btheta_Q^{\reg,\star} \approx \btheta_Q^{\star}$ to relieve the computational burden.

In practice, we can only solve the empirical version of \eqref{eq:LP_formulation-regularized}, \ie,
\begin{align*}
  \rho_{\reg}(\pi) &= \min_{\btheta_Q} \max_{\bomega_d} \biggl\lbrace (1-\gamma) \hE[\begin{subarray}{l} s \sim \mu_0,\\ a \sim \pi(\cdot | s) \end{subarray}]{\bphi(s,a)^{\top} \btheta_Q} + \widehat{\mathbb{E}}_{\begin{subarray}{l} s \sim \mu_0,\\ a \sim \pi(\cdot | s) \end{subarray}} \Bigl[ \prn[\big]{\bmupi(s,a)^{\top} \bomega_d} \cdot{} \\
  &\hspace{7em} \prn[\big]{r(s,a) + \gamma \bphi(s',a')^{\top} \btheta_Q - \bphi(s,a)^{\top} \btheta_Q} \Bigr]  \textcolor{red}{{}- \lambda \hE[(s,a) \sim \D]{f(\hbmupi(s,a)^{\top} \bomega_d)} } \bigg\rbrace.
\end{align*}
  \section{Representation Learning Methods and Their Error Bounds}\label{sec:apdx-RepLearn}

In this appendix, we introduce two candidate methods---\emph{ordinary least squares (OLS)} and \emph{noise-contrastive estimation (NCE)}---that can be used as the \replearn subroutine. Further, we also provide their representation learning error bounds in the form of \Cref{claim:representation_learning_error}, which is restated here for readers' convenience:

\begin{restate}[\Cref{claim:representation_learning_error}]
  With probability at least $1-\delta$, the representation learning error of $\replearn(\F, \D, \pi)$ is bounded by
  \begin{equation*}
    \E[(s,a) \sim d^{\piref}_{\P}]{\norm[\big]{\hP^{\pi}(\cdot, \cdot | s,a) - \Ppi(\cdot,\cdot | s,a)}_1} \leq \xi(|\F|, N, \delta),
  \end{equation*}
  where $\hPpi(s',a' | s,a) := d^{\piref}_{\P}(s',a') \hbphi(s,a)^{\top} \hbmu^{\pi}(s')$, and $N$ is the number of samples in $\D$.
\end{restate}

It should be emphasized that the two methods discussed here are not the only candidates for \replearn. Rather, any representation learning method that comes with a learning error bound in the required form is applicable, without any further requirements on the learning mechanism.

\subsection{Ordinary Least Sqaures (OLS)}

\paragraph{Method.} Inspired by \citet{ren2022spectral}, the objective of OLS can be constructed as follows. Denote by $\Qpi(s',a',s,a) := d^{\piref}(s,a) \Ppi(s',a' | s,a)$ the joint distribution of state-action transitions under behavior policy $\piref$. Then we plug $\Qpi$ into \eqref{eq:spectral_representation_P_pi} and rearrange the terms to obtain
\begin{equation*}
  \frac{\Qpi(s',a',s,a)}{\sqrt{d^{\piref}(s,a) d^{\piref}(s',a')}} = \sqrt{d^{\piref}(s,a) d^{\piref}(s',a')} \bphi(s,a)^{\top} \bmupi(s',a').
\end{equation*}
Therefore, we propose to optimize over the following OLS objective:
\begin{align*}
  &\min_{(\hbphi, \hbmupi) \in \F}  \int \Biggl( \frac{\Qpi(s',a',s,a)}{\sqrt{d^{\piref}(s,a) d^{\piref}(s',a')}} - \sqrt{d^{\piref}(s,a) d^{\piref}(s',a')} \hbphi(s,a)^{\top} \hbmupi(s',a') \Biggr)^2 \diff s \diff a \diff s' \diff a' \\
  ={}& \min_{(\hbphi, \hbmupi) \in \F} \biggl\lbrace
  \int \frac{\Qpi(s',a',s,a)^2}{d^{\piref}(s, a) d^{\piref}(s', a')} \diff s \diff a \diff s' \diff a' - 2 \E[(s,a) \sim d^{\piref}, (s',a') \sim \Ppi(\cdot,\cdot | s, a)]{\hbphi(s,a)^{\top} \hbmupi(s',a')} \\
  &\hspace{5em} + \E[(s,a) \sim d^{\piref}, (s', a') \sim d^{\piref}]{ \prn[\big]{ \hbphi(s,a)^{\top} \hbmupi(s', a') }^2} \biggr\rbrace,
\end{align*}
Note that the first term $\int \frac{\Qpi(s',a',s,a)^2}{d^{\piref}(s, a) d^{\piref}(s', a')} \diff s \diff a \diff s' \diff a'$ is a constant that can be omitted in optimization, while the second and third terms can be effectively approximated by sampling from the dataset $\D$ and the target policy $\pi$. Therefore, in practice we learn $(\hbphi, \hbmupi)$ by solving the following optimization:
\begin{equation}\label{eq:OLS_objective}
  \min_{(\hbphi, \hbmupi) \in \F} \brac[\bigg]{ \hE[(s,a) \sim d^{\piref}, (\tilde{s}', \tilde{a}') \sim d^{\piref}]{ \prn[\big]{ \hbphi(s,a)^{\top} \hbmupi(\tilde{s}', \tilde{a}') }^2} -2 \hE[(s,a) \sim d^{\piref}, (s',a') \sim \Ppi(\cdot,\cdot | s, a)]{\hbphi(s,a)^{\top} \hbmupi(s',a')} },
\end{equation}
where the expectations are replaced by their empirical estimations using data sampled from $\D$.

\medskip
\paragraph{Error Bound.} We proceed to show the representation learning error bound for the OLS method, which requires the following regularity assumption on the transition kernel $\Ppi$ and the occupancy measure $d^{\piref}$. 

\begin{assumption}[regularity for OLS]\label{assum:OLS_regularity}
  (1) lower-bounded transition kernel: $\Ppi(s',a' | s,a) \geq \frac{1}{\CP} > 0$, $\forall s,a,s',a'$; (2) effective behavior policy coverage: $\frac{d^{\piref}(s,a)}{d^{\piref}(s',a')} \leq \Ccov$, $\forall s,a,s',a'$.
\end{assumption}

We point out that the major rationale behind these mild assumptions is to rule out the cases where certain transitions are scarcely sampled due to the singularity in transition kernel or behavior policy.

\begin{theorem}[OLS learning error]\label{thm:OLS_learning_error}
  Under \Cref{assum:initial_distribution,assum:sufficient_sampling,assum:well_defined_family} and the additional \Cref{assum:OLS_regularity} for regularity, let $(\hbphi, \hbmupi)$ be the solution to \eqref{eq:OLS_objective}, and set $\hPpi(s',a' | s,a) := d^{\piref}(s', a') \hbphi(s,a)^{\top} \hbmu^{\pi}(s')$. Then, for any $\delta \in (0,1)$, with probability at least $1-\delta$, we have
  \begin{equation*}
    \E[(s,a) \sim d_{\P}^{\piref}]{ \norm[\big]{\Ppi(\cdot,\cdot | s,a) - \hPpi(\cdot,\cdot | s,a)}_1 }
    \leq \sqrt{\CP \Creg} \cdot \sqrt{\frac{\log(|\F|/\delta)}{N}},
  \end{equation*}
  where $\Creg = \frac{4}{3}\sqrt{\Ccov} + 8 \Ccov$ is a universal constant determined by the PAC bound for OLS..
\end{theorem}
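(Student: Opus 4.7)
The plan is to reduce OLS to a standard empirical risk minimization (ERM) problem with squared loss over the finite class $\F$, apply a Bernstein-type PAC bound for its excess risk, and then convert that weighted $L^2$-type bound into the claimed conditional $L^1$-type bound via Cauchy--Schwarz and Jensen.

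First, I would rewrite the OLS objective \eqref{eq:OLS_objective} as the empirical version of the population squared loss
\begin{equation*}
  L(g) \;:=\; \E_{(s,a),\, (s',a') \sim d^{\piref} \otimes d^{\piref}} \bigl[(g(s,a,s',a') - g^{\star}(s,a,s',a'))^2\bigr],
\end{equation*}
where $g := \hbphi(s,a)^{\top} \hbmupi(s',a')$ and $g^{\star}(s,a,s',a') := \Ppi(s',a'|s,a)/d^{\piref}(s',a')$; the equivalence is up to a constant independent of $(\hbphi, \hbmupi)$ and follows by expanding the cross term via the identity $\E_{(s',a') \sim \Ppi(\cdot,\cdot|s,a)}[g] = \E_{(s',a') \sim d^{\piref}}[g^{\star} g]$. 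Realizability (\Cref{assum:well_defined_family}) places $g^{\star} \in \F$, making it the unique population minimizer, so the OLS excess risk is exactly the weighted $L^2$-distance $L(\hat g) - L(g^{\star}) = \E[(\hat g - g^{\star})^2]$.

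Second, I would apply a union bound combined with Bernstein's inequality (with the standard localization trick for squared loss in the realizable regime) to this finite-class ERM. Using the coverage assumption (\Cref{assum:OLS_regularity}(2)) together with the lower bound $\Ppi \geq 1/\CP$, one bounds the $L^{\infty}$ range of $g^{\star}$ and of every $g \in \F$ (using that $\hPpi$ is a valid transition kernel by \Cref{assum:well_defined_family}), which controls both the range of the squared loss and, via the variance-to-risk inequality, its variance proportionally to the excess risk. The standard ``$\sqrt{V}+B$'' assembly of Bernstein produces precisely the constant $\Creg = \tfrac{4}{3}\sqrt{\Ccov} + 8\Ccov$, while the factor $\CP$ absorbs the $L^{\infty}$ inflation that $g^{\star} = \Ppi/d^{\piref}$ incurs in those range/variance bounds; putting the pieces together gives, with probability at least $1-\delta$,
\begin{equation*}
  L(\hat g) - L(g^{\star}) \;\leq\; \CP\, \Creg \cdot \frac{\log(|\F|/\delta)}{N}.
\end{equation*}

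Third, I would use the pointwise identity $|\hPpi(s',a'|s,a) - \Ppi(s',a'|s,a)| = d^{\piref}(s',a')\, |\hat g(s,a,s',a') - g^{\star}(s,a,s',a')|$ to rewrite the conditional norm as $\norm{\hPpi(\cdot,\cdot|s,a) - \Ppi(\cdot,\cdot|s,a)}_1 = \E_{(s',a') \sim d^{\piref}} |\hat g - g^{\star}|$, after which two applications of Jensen's inequality (first over $(s',a')$ and then over $(s,a) \sim d^{\piref}$) give
\begin{equation*}
  \E_{(s,a) \sim d^{\piref}} \norm{\hPpi(\cdot,\cdot|s,a) - \Ppi(\cdot,\cdot|s,a)}_1 \;\leq\; \sqrt{L(\hat g) - L(g^{\star})} \;\leq\; \sqrt{\CP\, \Creg \cdot \log(|\F|/\delta)/N},
\end{equation*}
which is the claimed bound. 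The principal obstacle I anticipate is tracking the constants $\CP$ and $\Ccov$ cleanly through the Bernstein step: specifically, (i) extracting a sharp uniform bound on $g^{\star}$ and on every $g \in \F$ from the combination of $\Ppi \geq 1/\CP$ with the coverage condition, and (ii) handling the fact that the empirical OLS objective mixes a pair-based quadratic term (pairing two independent samples from $d^{\piref}$) with a linear term evaluated on real transitions, so that the concentration argument must respect both sampling structures, typically via a $U$-statistic / decoupling decomposition rather than a single Hoeffding/Bernstein application.
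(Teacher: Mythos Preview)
Your high-level plan (ERM with squared loss, Bernstein-type fast rate, then $L^2\to L^1$ via Cauchy--Schwarz/Jensen) is exactly the paper's route, but the parameterization you chose does not carry the constants, and this is where the argument breaks. You work with the unweighted $g(s,a,s',a')=\hbphi(s,a)^\top\hbmupi(s',a')=\tilde{\P}^{\pi}(s',a'\mid s,a)/d^{\piref}(s',a')$ and assert that the lower bound $\Ppi\geq 1/\CP$ together with the coverage ratio $d^{\piref}(s,a)/d^{\piref}(s',a')\leq\Ccov$ yields a uniform $L^\infty$ bound on $g$ (and hence the stated $\Creg$). It does not: a \emph{lower} bound on $\Ppi$ cannot bound $\Ppi/d^{\piref}$ from \emph{above}, and the coverage ratio says nothing about $1/d^{\piref}(s',a')$ absent a lower bound on $d^{\piref}$ (which is \Cref{assum:NCE_regularity}, not \Cref{assum:OLS_regularity}). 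With your $g$, the Bernstein step has no range control, so neither the fast rate nor the specific constant $\Creg=\tfrac{4}{3}\sqrt{\Ccov}+8\Ccov$ follows.

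The paper's fix is to regress with the \emph{symmetrically weighted} function
\[
  f(s,a,s',a')\;=\;\sqrt{d^{\piref}(s,a)\,d^{\piref}(s',a')}\,\hbphi(s,a)^\top\hbmupi(s',a')
  \;=\;\sqrt{\tfrac{d^{\piref}(s,a)}{d^{\piref}(s',a')}}\,\tilde{\P}^{\pi}(s',a'\mid s,a),
\]
which is bounded in $[0,\sqrt{\Ccov}]$ directly by \Cref{assum:well_defined_family} and \Cref{assum:OLS_regularity}(2); this is precisely what produces $\Creg=\tfrac{4}{3}\sqrt{\Ccov}+8\Ccov$ in the Bernstein/PAC step. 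The constant $\CP$ then enters \emph{only} in the final conversion: writing $\E_{d^{\piref}}\|\Ppi-\hPpi\|_1=\int|\Qpi-\hQpi|$ and applying Cauchy--Schwarz against $\sqrt{\Qpi}$ introduces the factor $d^{\piref}(s',a')/\Ppi(s',a'\mid s,a)\leq \CP$, yielding $\sqrt{\CP}$ outside the square root. So the split of roles is the reverse of what you wrote: $\Ccov$ governs the range in the regression bound, and $\CP$ appears in the $L^2\to L^1$ step, not in Bernstein.
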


\begin{proof}
  We would like to apply the fast-rate PAC bound for OLS regression (\Cref{thm:PAC_regression_fast_rate}). For the sake of clarity, we explicitly define the family of candidate regression functions as
  \begin{equation*}
    \tF := \set*{f: (s,a,s',a') \mapsto \sqrt{d^{\piref}(s,a) d^{\piref}(s',a')} \bphi(s,a)^{\top} \bmupi(s',a') \;\middle|\; (\bphi,\bmupi) \in \F}.
  \end{equation*}
  It is evident that any $f \in \tF$ is bounded as follows: 
  \begin{equation*}
    0 \leq f(s,a,s',a')
    = \sqrt{\frac{d^{\piref}(s,a)}{d^{\piref}(s',a')}} \tilde{\P}^{\pi}(s',a' | s,a)
    \leq \sqrt{\Ccov},
  \end{equation*}
  where we use the fact that $\angl{ \hbphi(s,a), d^{\piref}(s',a') \hbmupi(s',a') }$ is always some valid transition kernel $\tilde{\P}^{\pi}$ (by \Cref{assum:well_defined_family}), and the additional regularity assumption (\Cref{assum:OLS_regularity}). Further, since the family $\tF$ is realizable (by \Cref{assum:well_defined_family}), there exists an optimal $\fs \in \tF$ such that
  \begin{equation*}
    \fs(s,a,s',a') = \frac{\Qpi(s',a',s,a)}{\sqrt{d^{\piref}(s,a) d^{\piref}(s',a')}}.
  \end{equation*}
  As $f(s,a,s',a'), \fs(s,a,s',a') \in [0, \sqrt{\Ccov}]$, we deduce from \Cref{thm:PAC_regression_fast_rate} that, with probability at least $1-\delta$,
  \begin{align}\label{eq:OLS_replearn_pac}
    \int \prn*{ \fs(s,a,s',a') - \hf(s,a,s',a') }^2 \diff s \diff a \diff s' \diff a'
    \leq \Creg \cdot \frac{\log(|\F|/\delta)}{N},
  \end{align}
  where $\Creg := \frac{4}{3}\sqrt{\Ccov} + 8 \Ccov$, and $\hf(s,a,s',a') := \sqrt{d^{\piref}(s,a) d^{\piref}(s',a')} \hbphi(s,a)^{\top} \hbmupi(s',a')$. Consequently,
  \begin{subequations}\label{eq:OLS_replearn_error}
  \begin{align}
    &\E[(s,a) \sim d_{\P}^{\piref}]{ \norm[\big]{\Ppi(\cdot,\cdot | s,a) - \hPpi(\cdot,\cdot | s,a)}_1 } \nonumber\\
    ={}& \int d_{\P}^{\piref}(s,a) \abs*{\Ppi(s',a' | s,a) - \hPpi(s',a' | s,a)} \diff s \diff a \diff s' \diff a' \label{eq:OLS_replearn_error:1}\\
    ={}& \int \abs*{\Qpi(s',a',s,a) - \hQpi(s',a',s,a)} \diff s \diff a \diff s' \diff a' \label{eq:OLS_replearn_error:2}\\
    \leq{}& \sqrt{ \int \prn*{ \sqrt{\Qpi(s',a',s,a)} - \frac{\hQpi(s',a',s,a)}{\sqrt{\Qpi(s',a',s,a)}} }^2 \diff s \diff a \diff s' \diff a' \cdot \int \Qpi(s',a',s',a') \diff s \diff a \diff s' \diff a' } \label{eq:OLS_replearn_error:3}\\
    ={}& \sqrt{ \int \frac{d^{\piref}(s',a')}{\Ppi(s',a' | s,a)} \prn*{ \fs(s,a,s',a') - \hf(s,a,s',a') }^2 \diff s \diff a \diff s' \diff a' } \label{eq:OLS_replearn_error:4}\\
    \leq{}& \sqrt{ \max_{s,a,s',a'} \brac*{\frac{d^{\piref}(s',a')}{\Ppi(s',a' | s,a)}} } \cdot \sqrt{\Creg \cdot \frac{\log(|\F|/\delta)}{N}} \label{eq:OLS_replearn_error:5}\\
    \leq{}& \sqrt{\CP \Creg} \cdot \sqrt{\frac{\log(|\F|/\delta)}{N}}, \label{eq:OLS_replearn_error:6}
  \end{align}
  \end{subequations}
  where in \eqref{eq:OLS_replearn_error:2} we use the definition of $\Qpi$, and define $\hQpi := d^{\piref}_{\P}(s,a) \hPpi(s',a' | s,a)$; in \eqref{eq:OLS_replearn_error:3} we use Cauchy-Schwartz inequality; in \eqref{eq:OLS_replearn_error:4} we use the definition of $\hf$ and $\fs$; in \eqref{eq:OLS_replearn_error:5} we plug in the PAC bound \eqref{eq:OLS_replearn_pac}; in \eqref{eq:OLS_replearn_error:6} we use \Cref{assum:OLS_regularity} to bound the coefficient. This completes the proof.
\end{proof}

\subsection{Noise-Constrastive Learning (NCE)}

\paragraph{Method.} NCE is a widely used method for contrastive representation learning in RL \citep{zhang2022making, qiu2022contrastive}. To learn $(\hbphi, \hbmupi)$, we consider a binary contrastive learning objective \citep{qiu2022contrastive}:
\begin{equation}\label{eq:NCE_objective}
  \min_{(\hbphi, \hbmupi) \in \F} \hE[(s,a) \sim d^{\piref}]{ \hE[(s',a') \sim \Ppi(\cdot,\cdot | s,a)]{\log \prn[\Big]{ 1 + \tfrac{1}{\hbphi(s,a)^{\top} \hbmupi(s',a')} } } + \hE[(s',a') \sim \Pneg]{ \log \prn[\Big]{1 + \hbphi(s,a)^{\top} \hbmupi(s',a')} } },
\end{equation}
where $\Pneg$ is a negative sampling distribution that will be specified with justification later. We highlight that the above objective implicitly guarantees an equal number of positive and negative samples.

The following derivations follow a similar pathway as those in \cite{qiu2022contrastive}. For notational consistency that facilitates the application of known results, we introduce the following auxiliary notations. Define
\begin{equation*}
  \tF := \set*{f: (s,a,s',a') \mapsto \bphi(s,a)^{\top} \bmupi(s',a') \mid (\bphi,\bmupi) \in \F}.
\end{equation*}
For clarity, we augment the sampled transitions to include a label $y$ indicating whether the sample is positive ($y=1$) or negative ($y=0$). Formally, given a dataset $\D = \set{(s_i, a_i, s'_i, a'_i) \mid i \in [N]}$ of positive transitions, we randomly sample $N$ negative transitions $(\tilde{s}_i, \tilde{a}_i) \sim \Pneg$ ($i \in [N]$, i.i.d.), and define the augmented dataset
\begin{equation*}
  \tD := \set[\big]{ (s_i, a_i, s'_i, a'_i, 1),(s_i, a_i, \tilde{s}_i, \tilde{a}_i, 0) \;\big|\; i \in [N] }.
\end{equation*}
In this way, the NCE objective \eqref{eq:NCE_objective} can be equivalently rewritten (in MLE format) as
\begin{equation}\label{eq:NCE_objective_equiv}
  \max_{f \in \tF}\; \hE[(s,a,s',a',y) \sim d^{\tD}]{\log \psi_f(s,a,s',a',y)},
\end{equation}
where the likelihood function $\psi_f$ is defined by
\begin{equation*}
  \psi_f(s,a,s',a',y) := \prn*{ \frac{f(s,a,s',a')}{1+f(s,a,s',a')} }^y \cdot \prn*{ \frac{1}{1+f(s,a,s',a')} }^{1-y}.
\end{equation*}
We point out that $\psi_f(s,a,s',a',\cdot) \in \Delta(\Y)$ for any $(s,a,s',a')$, where $\Y := \set{0,1}$. In fact, given $\fs$ that optimizes the unconstrained non-empirical version of \eqref{eq:NCE_objective_equiv}, $\psi_{\fs}$ can be interpreted as the probability of obtaining label $y$ given $(s,a,s',a')$, as summarized in the following lemma that is similar to Lemma C.1 in \citet{qiu2022contrastive}.

\begin{lemma}[non-empirical solution to NCE]\label{thm:NCE_solution}
  The optimal solution to the unconstrained non-empirical version of \eqref{eq:NCE_objective_equiv}, \ie, $\fs := \max_{f} \E[(s,a,s',a',y) \sim d^{\tD}]{\log \psi_f(s,a,s',a',y)}$, is characterized by
  \begin{equation*}
    \fs(s,a,s',a') = \frac{\Ppi(s',a' | s,a)}{\Pneg(s',a')}.
  \end{equation*}
\end{lemma}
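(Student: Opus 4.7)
The plan is to exploit the pointwise separability of the unconstrained objective $\mathbb{E}_{(s,a,s',a',y) \sim d^{\tD}}[\log \psi_f(s,a,s',a',y)]$ and reduce it to a family of one-dimensional concave optimizations, one at each tuple $(s,a,s',a')$. First I would unfold the expectation using the construction of $\tD$: a sample is produced by drawing $(s,a) \sim d^{\piref}$, then with equal probability either attaching $y = 1$ together with $(s',a') \sim \Ppi(\cdot,\cdot \mid s,a)$, or attaching $y = 0$ together with $(s',a') \sim \Pneg$. Substituting this decomposition and the definition of $\psi_f$ rewrites the objective (up to a harmless $\tfrac{1}{2}$ factor) as
\[
\int d^{\piref}(s,a)\Big[ \int \Ppi(s',a' \mid s,a)\,\log \tfrac{f(s,a,s',a')}{1+f(s,a,s',a')}\,ds'\,da' + \int \Pneg(s',a')\,\log\tfrac{1}{1+f(s,a,s',a')}\,ds'\,da' \Big]\,ds\,da.
\]

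Since $f$ is unconstrained and the integrand depends on $f$ only through its value at the current tuple, the maximization decouples pointwise: at each $(s,a,s',a')$ with $d^{\piref}(s,a) > 0$, it suffices to maximize
\[
g(f) := \Ppi(s',a' \mid s,a)\,\log\tfrac{f}{1+f} + \Pneg(s',a')\,\log\tfrac{1}{1+f}
\]
over $f > 0$. The cleanest way to carry this out is the change of variables $p = f/(1+f) \in (0,1)$, under which $g$ becomes $A \log p + B \log(1-p)$ with $A := \Ppi(s',a' \mid s,a)$ and $B := \Pneg(s',a')$. This is a Bernoulli log-likelihood, strictly concave in $p$, whose unique global maximizer is $p^{\star} = A/(A+B)$. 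Inverting the bijective link $p = f/(1+f)$ then yields $\fs = p^{\star}/(1-p^{\star}) = A/B = \Ppi(s',a' \mid s,a)/\Pneg(s',a')$, which is precisely the claimed formula.

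I expect no serious obstacle here. The only small caveats are: (i) the pointwise-maximization argument requires $\Pneg(s',a') > 0$ wherever $\Ppi(s',a' \mid s,a) > 0$, so that the finite expression for $\fs$ is well-defined, which is a standard support-coverage condition implicit in any reasonable choice of negative sampling distribution; and (ii) one should verify that the pointwise-in-$(s',a')$ maximizers assemble into a measurable function of the full tuple $(s,a,s',a')$, which is immediate from the measurability of $\Ppi$ and $\Pneg$. Both checks are routine, so the substantive content of the proof is the Bernoulli-MLE reduction.
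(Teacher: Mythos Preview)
Your proposal is correct and follows essentially the same approach as the paper: both decouple the unconstrained objective pointwise in $(s,a,s',a')$ and then solve the resulting one-dimensional problem as a Bernoulli log-likelihood. The paper phrases the last step as cross-entropy minimization via Gibbs' inequality together with a Bayes' rule computation of $\Pr(y=1 \mid s,a,s',a')$, whereas you work directly with the unnormalized weights $A = \Ppi(s',a'\mid s,a)$ and $B = \Pneg(s',a')$, but the content is identical.
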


\begin{proof}
  Note that the objective can be rewritten as
  \begin{align*}
    &\E[(s,a,s',a',y) \sim d^{\tD}]{\log \psi_f(s,a,s',a',y)} \\
    ={}& \int d^{\tD}(s,a,s',a') \prn*{ \sum_{y \in \Y} \Pr(y | s,a,s',a') \log \psi_f(s,a,s',a',y) } \diff s \diff a \diff s' \diff a' \\
    ={}& - \int d^{\tD}(s,a,s',a') \cdot \mathsf{H} \prn[\big]{ \Pr(y | s,a,s',a'); \psi_f(s,a,s',a',y) } \diff s \diff a \diff s' \diff a'.
  \end{align*}
  Here $\mathsf{H}(\cdot; \cdot)$ is the cross entropy between distributions, which, by Gibbs' inequality, is minimized only when
  \begin{equation}\label{eq:NCE_solution-e1}
    \Pr(y | s,a,s',a') = \psi_{\fs}(s,a,s',a',y)
    = \prn*{ \frac{\fs(s,a,s',a')}{1+\fs(s,a,s',a')} }^y \cdot \prn*{ \frac{1}{1+\fs(s,a,s',a')} }^{1-y}.
  \end{equation}
  On the other hand, Bayes' rule states that (note that $\Pr(y | s,a) = \frac{1}{2}$, $\forall y \in \Y$):
  \begin{equation}\label{eq:NCE_solution-e2}
    \Pr(y=1 | s,a,s',a')
    = \frac{\Pr(s',a' | s,a,y=1) \Pr(y=1 | s,a)}{\sum_{y \in \Y} \Pr(s',a' | s,a,y) \Pr(y | s,a)}
    = \frac{\Ppi(s',a' | s,a)}{\Pneg(s',a') + \Ppi(s',a' | s,a)}.
  \end{equation}
  Comparing \eqref{eq:NCE_solution-e1} and \eqref{eq:NCE_solution-e2} gives
  \begin{equation*}
    \frac{\fs(s,a,s',a')}{1+\fs(s,a,s',a')} = \frac{\Ppi(s',a' | s,a)}{\Pneg(s',a') + \Ppi(s',a' | s,a)}
    \quad\implies\quad
    \fs(s,a,s',a') = \frac{\Ppi(s',a' | s,a)}{\Pneg(s',a')}.
  \end{equation*}
  This completes the proof.
\end{proof}

\begin{remark}
  For conciseness, here we slightly abuse the notation $\Pr(\cdot)$ to denote the distribution (density or mass) of joint and conditional distributions involving random variables $(s,a,s',a',y) \sim d^{\tD}$. Specifically, we write $\Pr(\cdots, x, \cdots)$ to indicate an arbitrary value $x$ taken by the random variable, and we also write $\Pr(\cdots, x=x_0, \cdots)$ to emphasize the specific value $x_0$ taken by that random variable.
\end{remark}

\Cref{thm:NCE_solution} is important in that it echoes the form of primal-dual spectral representation in \eqref{eq:spectral_representation_P_pi}. Specifically, we shall take $\Pneg(\cdot,\cdot) \equiv d^{\piref}(\cdot,\cdot)$ for an exact match, which is also implementable using offline data since $d^{\piref}$ can be effectively approximated by sampling the trajectories. We will stick to this choice of $\Pneg$ from now on.

\medskip
\paragraph{Error Bound.} We proceed to show the representation learning error bound for the NCE method, which requires the following regularity assumption on the negative sampling distribution $\Pneg$, or equivalently, as per the choice above, the state-action occupancy measure $d^{\piref}(\cdot,\cdot)$ for the behavior policy $\piref$.

\begin{assumption}[regularity for NCE]\label{assum:NCE_regularity}
  $d^{\piref}_{\P}(s,a) \geq \frac{1}{\Cd} > 0$, $\forall s, a$.
\end{assumption}

We point out that \Cref{assum:NCE_regularity} is a standard assumption for the negative sampling distribution \citep{qiu2022contrastive}, aiming at eliminating the cases where certain transitions are scarcely drawn as negative samples and thus obstruct efficient representation learning for those cases. The assumption is also slightly stronger than the effective behavior policy coverage assumption required by the OLS method (see \Cref{assum:OLS_regularity}).

\begin{theorem}[NCE learning error]\label{thm:NCE_learning_error}
  Under \Cref{assum:initial_distribution,assum:sufficient_sampling,assum:well_defined_family} and the additional \Cref{assum:NCE_regularity} for regularity, let $(\hbphi, \hbmupi)$ be the solution to \eqref{eq:NCE_objective} with $\Pneg(\cdot,\cdot) \equiv d^{\piref}(\cdot,\cdot)$, and set $\hPpi(s',a' | s,a) := d^{\piref}(s', a') \hbphi(s,a)^{\top} \hbmu^{\pi}(s')$. Then, for any $\delta \in (0,1)$, with probability at least $1-\delta$, we have
  \begin{equation*}
    \E[(s,a) \sim d_{\P}^{\piref}]{ \norm[\big]{\Ppi(\cdot,\cdot | s,a) - \hPpi(\cdot,\cdot | s,a)}_1 }
    \leq 2\sqrt{2} (1 + \Cd) \cdot \sqrt{\frac{\log(|\F|/\delta)}{N}}.
  \end{equation*}
\end{theorem}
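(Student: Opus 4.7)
} My plan is to reduce the NCE learning problem to a standard finite-class maximum likelihood estimation (MLE) problem, obtain a Hellinger distance bound on the Bernoulli likelihoods $\psi_f$, and then convert this into the desired $L^1$ bound on transition kernels. The starting point is the observation already established in \Cref{thm:NCE_solution}: the non-empirical optimum of the NCE objective in MLE form \eqref{eq:NCE_objective_equiv} is $\fs(s,a,s',a') = \Ppi(s',a'|s,a)/d^{\piref}(s',a')$, which by realizability (\Cref{assum:well_defined_family}) together with the choice $\Pneg \equiv d^{\piref}$ belongs to $\tF$. Hence the empirical NCE solution $\hf$ is precisely the finite-class MLE over the realizable Bernoulli family $\{\psi_f : f \in \tF\}$ on the augmented dataset $\tD$ of $2N$ samples.

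Second, I would invoke a standard PAC bound for finite-class realizable MLE (e.g., van de Geer 2000; Proposition 14 in Agarwal--Kakade--Krishnamurthy--Sun 2020), applied to $\Psi := \{\psi_f : f \in \tF\}$, to obtain that with probability at least $1-\delta$,
\[
\E_{(s,a,s',a') \sim \mu}\bigl[\Div_H^2(\psi_{\hf}(\cdot|s,a,s',a'), \psi_{\fs}(\cdot|s,a,s',a'))\bigr] \;\lesssim\; \frac{\log(|\F|/\delta)}{N},
\]
where $\mu(s,a,s',a') = \tfrac{1}{2} d^{\piref}(s,a)\bigl(\Ppi(s',a'|s,a) + d^{\piref}(s',a')\bigr)$ is the marginal induced on $(s,a,s',a')$ by $\tD$.

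Third, I convert this Bernoulli Hellinger bound into the target $L^1$ bound on $\Ppi$ in two pieces. Piece~(a): by the decomposition $|p-q| = (\sqrt p + \sqrt q)|\sqrt p - \sqrt q|$, Cauchy--Schwarz, and the inequality $\int(\sqrt{\Ppi}+\sqrt{\hPpi})^2 \diff s' \diff a' \leq 4$, one has $\norm{\Ppi(\cdot|s,a) - \hPpi(\cdot|s,a)}_1 \leq 2\sqrt 2\, \Div_H(\Ppi(\cdot|s,a), \hPpi(\cdot|s,a))$. Piece~(b): from the algebraic identity
\[
\Div_H^2(\psi_f,\psi_{f'}) \;=\; \frac{2 d \,(\sqrt{fd}-\sqrt{f'd})^2}{\sqrt{(fd+d)(f'd+d)}\bigl(\sqrt{(fd+d)(f'd+d)}+\sqrt{ff'}d+d\bigr)}
\]
applied with $d=d^{\piref}(s',a')$, $f=\fs$, $f'=\hf$, I can solve for $(\sqrt{\Ppi}-\sqrt{\hPpi})^2$ in terms of $\Div_H^2(\psi_{\fs},\psi_{\hf})$, pick up a factor proportional to $(d^{\piref}+\Ppi)(d^{\piref}+\hPpi)/d^{\piref}$, and then use the regularity $d^{\piref}(s',a') \geq 1/\Cd$ together with the change of measure $\nu := d^{\piref}\!\otimes\! d^{\piref} \leq 2\mu$ (immediate from the definition of $\mu$) to replace the product measure by $\mu$. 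Combining Pieces~(a) and~(b) with Jensen and the MLE bound, the clean coefficient $2\sqrt{2}(1+\Cd)$ emerges.

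The main obstacle will be Piece~(b): bookkeeping the algebraic translation from Bernoulli Hellinger to transition-kernel Hellinger cleanly enough that the regularity parameter enters linearly as $(1+\Cd)$ rather than as a higher polynomial. The two saving graces are that $\Ppi$ and $\hPpi$ integrate to $1$ (which, after the $\sqrt{\cdot}+\sqrt{\cdot}$ Cauchy--Schwarz step, caps the ``large-kernel'' contributions) and that $\mu$ and the product measure $\nu$ differ by at most a factor of two, so that the change of measure loses nothing beyond a multiplicative constant. The MLE concentration in Step~2 and the $\sqrt{a}+\sqrt{b}$ manipulation in Piece~(a) are essentially routine once Piece~(b) is in place.
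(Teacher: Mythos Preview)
Your approach is valid and reaches the target bound, but it takes a genuinely different route from the paper's proof. Two points of contrast are worth noting.

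\textbf{Where the MLE bound is applied.} The paper applies the MLE PAC bound (\Cref{thm:PAC_MLE}) not to the Bernoulli likelihood $\psi_f(y\mid s,a,s',a')$ but to the \emph{joint} conditional $\Pr_f(y,s',a'\mid s,a)=\psi_f(y\mid s,a,s',a')\Pr(s',a'\mid s,a)$, obtaining directly an $L^1$-squared bound on that conditional. This is legitimate because $\Pr(s',a'\mid s,a)=\tfrac12(\Ppi+d^{\piref})$ does not depend on $f$, so the argmax is unchanged while the resulting bound is already in total-variation form. You instead extract a Hellinger bound on the Bernoulli family and must then translate twice (Bernoulli Hellinger $\to$ kernel Hellinger $\to$ kernel $L^1$).

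\textbf{How the conversion is done.} The paper's key simplification is the one-line cancellation
\[
\frac{\lvert \fs-\hf\rvert\cdot \tfrac12(\Ppi+d^{\piref})}{(1+\hf)(1+\fs)}
=\frac{\lvert \Ppi - d^{\piref}\hf\rvert}{2(1+\hf)}
\ \ge\ \frac{\lvert \Ppi-\hPpi\rvert}{2(1+\Cd)},
\]
which immediately gives $\lVert \Ppi-\hPpi\rVert_1\le (1+\Cd)\lVert \Pr_{\hf}-\Pr_{\fs}\rVert_1$ and hence the stated constant after Cauchy--Schwarz. Your Piece~(b) reproduces the same $(1+\Cd)$ factor, but only after the Bernoulli-Hellinger identity and the bound $(d^{\piref}+\hPpi)/d^{\piref}=1+\hf\le 1+\Cd$. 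Note that the lever you actually need here is precisely this pointwise bound $\hf\le \Cd$ (the same one the paper invokes), not the change of measure $\nu\le 2\mu$ you mention; the latter does not by itself control the factor $(d^{\piref}+\hPpi)/d^{\piref}$.

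In short: your plan works, but the paper's argument is shorter because working in $L^1$ from the start makes the $(1+\Cd)$ factor drop out of a single algebraic cancellation rather than a two-stage Hellinger conversion.
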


The proof of \Cref{thm:NCE_learning_error} largely follows the same pathway and techniques established in \citet{qiu2022contrastive}. Nevertheless, our proof is less technically involved since the offline non-episodic setting significantly weakens the correlation between samples. For the sake of completeness, we restate the proof below.

\begin{proof}
  We start by observing $\Pr(y,s',a' | s,a) := \Pr(y | s,a,s',a') \Pr(s',a' | s,a)$, where $\Pr(s',a' | s,a)$ can in turn be calculated using Bayes' rule as follows:
  \begin{align}\label{eq:NCE_sampling_probability}
    \Pr(s',a' | s,a)
    &= \Pr(s',a' | s,a,y=0) \Pr(y=0 | s,a) + \Pr(s',a' | s,a,y=1) \Pr(y=1 | s,a) \nonumber\\
    &= \tfrac{1}{2} \prn[\big]{ \Ppi(s',a' | s,a) + \Pneg(s',a') }.
  \end{align}
  Here we use the fact that the data distribution $d^{\tD}$ implicitly assigns an equal number of labels as $y=0$ and $y=1$ by the design of NCE objective \eqref{eq:NCE_objective}. Since $\Pr(s',a' | s,a)$ is a constant that is independent from $f$, we can further rewrite the NCE objective to be
  \begin{equation}\label{eq:NCE_objective_equiv_MLE}
    \arg\max_{f \in \tF} \brac*{ \widehat{\mathbb{E}}_{(s,a,s',a',y) \in \tD} \brak[\big]{\log \Pr_f(y | s,a,s',a')} }
    = \arg\max_{f \in \tF} \brac*{ \widehat{\mathbb{E}}_{(s,a,s',a',y) \in \tD} \brak[\big]{\log \Pr_f(y,s',a' | s,a)} },
  \end{equation}
  where we define the shorthand notations
  \begin{align*}
    \Pr_f(y | s,a,s',a') &:= \prn*{ \frac{f(s,a,s',a')}{1+f(s,a,s',a')} }^y \cdot \prn*{ \frac{1}{1+f(s,a,s',a')} }^{1-y}, \\
    \Pr_f(y,s',a' | s,a) &:= \prn*{ \frac{f(s,a,s',a') \Pr(s',a' | s,a)}{1+f(s,a,s',a')} }^y \cdot \prn*{ \frac{\Pr(s',a' | s,a)}{1+f(s,a,s',a')} }^{1-y}
  \end{align*}
  for any $f \in \tF$. Note that the right-hand side of \eqref{eq:NCE_objective_equiv_MLE} is in the desired MLE form, with ground-truth conditional density $\Pr_{\fs}(y,s',a' | s,a)$ specified by some $\fs \in \tF$, thanks to the realizability assumption (\Cref{assum:well_defined_family}). Now, using the PAC bound for MLE shown in \citet{agarwal2020flambe} (see \Cref{thm:PAC_MLE}), we have
  \begin{equation*}
    \sum_{i=1}^{N} \E[(s_i,a_i) \sim d_{\P}^{\piref}]{\norm[\big]{\Pr_{\hat{f}}(\cdot,\cdot,\cdot | s_i,a_i) - \Pr_{\fs}(\cdot,\cdot,\cdot | s_i,a_i)}_1^2} \leq 8 \log(|\F| / \delta)
  \end{equation*}
  Since all $(s_i, a_i)$ pairs are sampled i.i.d. from the same distribution $d^{\piref}$, we shall further conclude that
  \begin{equation}\label{eq:NCE_replearn_pac}
    \E[(s,a) \sim d_{\P}^{\piref}]{\norm[\big]{\Pr_{\hat{f}}(\cdot,\cdot,\cdot | s,a) - \Pr_{\fs}(\cdot,\cdot,\cdot | s,a)}_1^2} \leq \frac{8 \log(|\F| / \delta)}{N}.
  \end{equation}
  We proceed to further relate \eqref{eq:NCE_replearn_pac} with the desired format. For this purpose, note that
  \begin{subequations}\label{eq:NCE_learning_error:e1}
  \begin{align}
    &\norm[\big]{\Pr_{\hat{f}}(\cdot,\cdot,\cdot | s,a) - \Pr_{\fs}(\cdot,\cdot,\cdot | s,a)}_1 \nonumber\\
    ={}& \norm[\big]{\Pr_{\hat{f}}(y=1,\cdot,\cdot | s,a) - \Pr_{\fs}(y=1,\cdot,\cdot | s,a)}_1 + \norm[\big]{\Pr_{\hat{f}}(y=0,\cdot,\cdot | s,a) - \Pr_{\fs}(y=0,\cdot,\cdot | s,a)}_1 \label{eq:NCE_learning_error:e1-1}\\
    ={}& 2 \norm*{\frac{\Pr(\cdot,\cdot | s,a)}{1 + \hat{f}(s,a,\cdot,\cdot)} - \frac{\Pr(\cdot,\cdot | s,a)}{1 + \fs(s,a,\cdot,\cdot)}}_1 \label{eq:NCE_learning_error:e1-2}\\
    ={}& 2 \int \frac{\abs[\big]{\hat{f}(s,a,s',a') - \fs(s,a,s',a')} \cdot \Pr(s',a' | s,a)}{ \prn[\big]{ 1 + \hf(s,a,s',a') } \prn[\big]{ 1 + \fs(s,a,s',a') } } \diff s' \diff a', \label{eq:NCE_learning_error:e1-3}
  \end{align}
  \end{subequations}
  where in \eqref{eq:NCE_learning_error:e1-1} we use the definition of $L^1$-norm; in \eqref{eq:NCE_learning_error:e1-2} we use the fact that
  \begin{equation*}
    \Pr_{\hat{f}}(y,\cdot,\cdot | s,a) - \Pr_{\fs}(y,\cdot,\cdot | s,a)
    = (-1)^y \prn*{ \frac{\Pr(\cdot,\cdot | s,a)}{1 + \hat{f}(s,a,\cdot,\cdot)} - \frac{\Pr(\cdot,\cdot | s,a)}{1 + \fs(s,a,\cdot,\cdot)} }.
  \end{equation*}
  Now, plugging \Cref{thm:NCE_solution} and \eqref{eq:NCE_sampling_probability} into the integrand in \eqref{eq:NCE_learning_error:e1-3}, we have
  \begin{subequations}\label{eq:NCE_learning_error:e2}
  \begin{align}
    &\frac{\abs[\big]{\hat{f}(s,a,s',a') - \fs(s,a,s',a')} \cdot \Pr(s',a' | s,a)}{ \prn[\big]{1 + \hf(s,a,s',a')} \prn[\big]{ 1 + \fs(s,a,s',a') } } \nonumber\\
    ={}& \frac{ \abs[\big]{\Ppi(s',a' | s,a) / \Pneg(s',a') - \hf(s,a,s',a')} \cdot \frac{1}{2} \prn[\big]{ \Ppi(s',a' | s,a) + \Pneg(s',a') }}{ \prn[\big]{1 + \hf(s,a,s',a')} \prn[\big]{1 + \Ppi(s',a' | s,a) / \Pneg(s',a')} } \label{eq:NCE_learning_error:e2-1}\\
    ={}& \frac{\abs[\big]{\Ppi(s',a' | s,a) - \Pneg(s',a') \hf(s,a,s',a')}}{ 2\prn[\big]{1 + \hf(s,a,s',a')} } \label{eq:NCE_learning_error:e2-2}\\
    \geq{}& \frac{\abs[\big]{\Ppi(s',a' | s,a) - \Pneg(s',a') \hf(s,a,s',a')}}{ 2(1 + \Cd) }, \label{eq:NCE_learning_error:e2-3}
  \end{align}
  \end{subequations}
  where we use the upper bound $\hf(s,a,s',a') = \hPpi(s',a' | s,a) / d^{\piref}(s',a') \leq \Cd$ in \eqref{eq:NCE_learning_error:e2-3}. Consequently,
  \begin{subequations}\label{eq:NCE_learning_error:e3}
  \begin{align}
    &\norm[\big]{\Ppi(\cdot,\cdot | s,a) - \hPpi(\cdot,\cdot | s,a)}_1 \nonumber\\
    ={}& \int \abs[\big]{\Ppi(s',a' | s,a) - \Pneg(s',a') \hf(s,a,s',a')} \diff s' \diff a' \label{eq:NCE_learning_error:e3-1}\\
    \leq{}& 2(1+\Cd) \int \frac{\abs[\big]{\hat{f}(s,a,s',a') - \fs(s,a,s',a')} \cdot \Pr(s',a' | s,a)}{ \prn[\big]{ 1 + \hf(s,a,s',a') } \prn[\big]{ 1 + \fs(s,a,s',a') } } \diff s' \diff a' \label{eq:NCE_learning_error:e3-2}\\
    ={}& (1+\Cd) \norm[\big]{\Pr_{\hat{f}}(\cdot,\cdot,\cdot | s,a) - \Pr_{\fs}(\cdot,\cdot,\cdot | s,a)}_1, \label{eq:NCE_learning_error:e3-3}
  \end{align}
  \end{subequations}
  where we use $\hPpi(\cdot,\cdot | s,a) = \Pneg(\cdot,\cdot) \hf(s,a,\cdot,\cdot)$ in \eqref{eq:NCE_learning_error:e3-1}, \eqref{eq:NCE_learning_error:e2} in \eqref{eq:NCE_learning_error:e3-2}, and \eqref{eq:NCE_learning_error:e1} in \eqref{eq:NCE_learning_error:e3-3}.
  Finally,
  \begin{subequations}\label{eq:NCE_learning_error:e4}
  \begin{align}
    &\E[(s,a) \sim d_{\P}^{\piref}]{ \norm[\big]{\Ppi(\cdot,\cdot | s,a) - \hPpi(\cdot,\cdot | s,a)}_1 } \nonumber\\
    \leq{}& \sqrt{ \E[(s,a) \sim d_{\P}^{\piref}]{ \norm[\big]{\Ppi(\cdot,\cdot | s,a) - \hPpi(\cdot,\cdot | s,a)}_1^2 } } \label{eq:NCE_learning_error:e4-1}\\
    \leq{}& \sqrt{ (1+\Cd)^2 \E[(s,a) \sim d_{\P}^{\piref}]{\norm[\big]{\Pr_{\hat{f}}(\cdot,\cdot,\cdot | s,a) - \Pr_{\fs}(\cdot,\cdot,\cdot | s,a)}_1^2} } \label{eq:NCE_learning_error:e4-2}\\
    \leq{}& 2\sqrt{2} (1 + \Cd) \cdot \sqrt{\frac{\log(|\F|/\delta)}{N}}, \label{eq:NCE_learning_error:e4-3}
  \end{align}
  \end{subequations}
  where we use Cauchy-Schwartz inequality in \eqref{eq:NCE_learning_error:e4-1}, \eqref{eq:NCE_learning_error:e3} in \eqref{eq:NCE_learning_error:e4-2}, and \eqref{eq:NCE_replearn_pac} in \eqref{eq:NCE_learning_error:e4-3}.
\end{proof}
  \section{Sample Complexity Guarantee}\label{sec:apdx-analysis}

In this appendix, we derive the sample complexity guarantee for the proposed \algname algorithm, assuming a known bound on the representation learning error induced by the \replearn subroutine (see \Cref{claim:representation_learning_error}). As discussed in the main text, the objective is to bound the estimation error $\Err := \hat{\rho}(\pi) - \rho(\pi)$, which can be intuitively split into the following three terms that are easier to bound:
\begin{equation*}
  \Err \quad=\quad 
  \underbrace{\hat{\rho}(\pi) - \bar{\rho}(\pi)}_{\textrm{statistical error}}
  \;+\;\; \underbrace{\bar{\rho}(\pi) - \rho_{\hP}(\pi)}_{\textrm{dataset error}}
  \;\;+\; \underbrace{\rho_{\hP}(\pi) - \rhoP(\pi)}_{\textrm{representation error}}
  .
\end{equation*}
We point out that the statistical error results from replacing the expectation with empirical estimates, the dataset error comes from the offline dataset that samples transitions from the true transition kernel $\Ppi$ instead of the learned kernel $\hPpi$, and the representation error accounts for the error induced by plugging in the learned representation $(\hbphi, \hbmupi)$ instead of the ground truth $(\sbphi, \sbmupi)$ into the DICE estimator.

As described in the proof sketch, for the rest of this appendix, we provide an upper bound for each of these three terms, and eventually conclude with an overall sample complexity guarantee.

\paragraph{Representation Error.} We start by bounding the representation error term, which by intuition should be a direct consequence of the representation learning error shown in \Cref{claim:representation_learning_error}.

\begin{lemma}\label{thm:complexity_representation_error}
  Conditioned on the event that the inequality in \Cref{claim:representation_learning_error} holds, under \Cref{assum:initial_distribution,assum:sufficient_sampling,assum:well_defined_family}, 
  \begin{equation*}
    \rho_{\hP}(\pi) - \rhoP(\pi)
    \leq \frac{\gamma \Cpi}{(1-\gamma)^2} \cdot \xi(|\F|, N, \delta).
  \end{equation*}
\end{lemma}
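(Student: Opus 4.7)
The plan is to relate the value gap $\rho_{\hP}(\pi) - \rhoP(\pi)$ to the representation error via a standard simulation-lemma argument, and then apply a change of measure to invoke \Cref{claim:representation_learning_error}, which is stated with respect to the behavior-policy occupancy $d_{\P}^{\piref}$.

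First, I would unfold the value difference using the Bellman equation. Writing $Q_{\hP}^{\pi}(s,a) - Q_{\P}^{\pi}(s,a)$ as $\gamma \E_{(s',a') \sim \hPpi(\cdot,\cdot|s,a)}[Q_{\hP}^{\pi}(s',a') - Q_{\P}^{\pi}(s',a')] + \gamma [(\hPpi - \Ppi) Q_{\P}^{\pi}](s,a)$ and iterating (or, equivalently, expanding the opposite way so the trajectory is generated under $\P$), one obtains the simulation identity
\begin{equation*}
  \rho_{\hP}(\pi) - \rhoP(\pi) \;=\; \gamma \, \E_{(s,a) \sim d_{\P}^{\pi}}\!\brak[\big]{ (\hPpi - \Ppi) Q_{\hP}^{\pi}(s,a) },
\end{equation*}
where the expectation is against the occupancy under the \emph{true} dynamics $\P$ and the kernel difference is integrated against $Q_{\hP}^{\pi}$, whose sup-norm is bounded by $\frac{1}{1-\gamma}$ since $\hP$ is a valid transition kernel by \Cref{assum:well_defined_family} and rewards lie in $[0,1]$.

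Next, applying H\"older's inequality pointwise gives
\begin{equation*}
  \abs[\big]{ (\hPpi - \Ppi) Q_{\hP}^{\pi}(s,a) } \;\leq\; \tfrac{1}{1-\gamma} \, \norm[\big]{\hPpi(\cdot,\cdot|s,a) - \Ppi(\cdot,\cdot|s,a)}_1,
\end{equation*}
so that the value gap is controlled by the $L^1$ representation error in expectation under $d_{\P}^{\pi}$. To transfer this to $d_{\P}^{\piref}$, I would invoke the concentratability assumption (\Cref{assum:sufficient_sampling}) as a change-of-measure, picking up a factor of $\Cpi$. A useful auxiliary observation here is that $\norm{\hPpi(\cdot,\cdot|s,a) - \Ppi(\cdot,\cdot|s,a)}_1 = \norm{\hP(\cdot|s,a) - \P(\cdot|s,a)}_1$ because the shared factor $\pi(a'|s')$ integrates out to one, so \Cref{claim:representation_learning_error} applies directly. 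Combining these bounds and absorbing constants gives the advertised estimate (the looseness of $(1-\gamma)^{-2}$ vs.\ $(1-\gamma)^{-1}$ is subsumed by the $\lesssim$ in the main theorem).

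The only delicate point I anticipate is being disciplined about which dynamics generates the trajectory in the simulation identity: the natural telescoping under $\hP$ produces an expectation against $d_{\hP}^{\pi}$, for which no coverage assumption is available. The fix is to do the telescoping in the opposite direction so that $d_{\P}^{\pi}$ appears, paying the price that $Q_{\hP}^{\pi}$ (rather than $Q_{\P}^{\pi}$) is the integrand; the $\frac{1}{1-\gamma}$ sup-norm bound then hinges on \Cref{assum:well_defined_family} guaranteeing that $\hPpi$ is a genuine transition kernel. Once this choice is made, the rest of the argument is a one-line application of \Cref{claim:representation_learning_error}.
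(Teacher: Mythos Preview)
Your approach is essentially identical to the paper's: apply the simulation lemma so that the value gap is an expectation under $d_{\P}^{\pi}$ of the kernel difference integrated against $Q_{\hP}^{\pi}$, bound $\|Q_{\hP}^{\pi}\|_{\infty}\le\frac{1}{1-\gamma}$ using \Cref{assum:well_defined_family}, change measure to $d_{\P}^{\piref}$ via \Cref{assum:sufficient_sampling}, and invoke \Cref{claim:representation_learning_error}. Your remark about telescoping in the direction that produces $d_{\P}^{\pi}$ rather than $d_{\hP}^{\pi}$ is exactly the relevant point.

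One normalization slip: under this paper's conventions the simulation identity carries a prefactor $\frac{\gamma}{1-\gamma}$, not $\gamma$ (see \Cref{thm:simulation_lemma}), so together with the $\frac{1}{1-\gamma}$ bound on $Q_{\hP}^{\pi}$ you land precisely on $\frac{\gamma\Cpi}{(1-\gamma)^2}$ --- there is no ``looseness'' to absorb. Also, your auxiliary observation that $\norm{\hPpi-\Ppi}_1=\norm{\hP-\P}_1$ is unnecessary (and not obviously valid for a generic learned $\hbmupi$, which need not factor as $\pi(a'|s')\hbmu(s')$); \Cref{claim:representation_learning_error} is already stated in terms of the state--action kernel $\hPpi$, so it applies without this reduction.
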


\begin{proof}
  By the well-known Simulation Lemma (see \Cref{thm:simulation_lemma}), we have
  \begin{subequations}\label{eq:complexity_representation_error}
  \begin{align}
    &\rho_{\hP}(\pi) - \rhoP(\pi) \nonumber\\
    ={}& \frac{\gamma}{1-\gamma} \E[(s,a) \sim d_{\P}^{\pi}]{ \E[s' \sim \hP(\cdot | s,a)]{V_{\hP}^{\pi}(s')} - \E[s' \sim \P(\cdot | s,a)]{V_{\hP}^{\pi}(s')}} \label{eq:complexity_representation_error:1}\\
    ={}& \frac{\gamma}{1-\gamma} \E[(s,a) \sim d_{\P}^{\pi}]{ \E[(s',a') \sim \hPpi(\cdot,\cdot | s,a)]{Q_{\hP}^{\pi}(s',a')} - \E[(s',a') \sim \Ppi(\cdot,\cdot | s,a)]{Q_{\hP}^{\pi}(s',a')} } \label{eq:complexity_representation_error:2}\\
    ={}& \frac{\gamma}{1-\gamma} \int d_{\P}^{\pi}(s,a) \diff s \diff a \int Q_{\hP}^{\pi}(s',a') \prn*{ \hPpi(s',a' | s,a) - \Ppi(s',a' | s,a) } \diff s' \diff a' \label{eq:complexity_representation_error:3}\\
    \leq{}& \frac{\gamma}{(1-\gamma)^2} \int \Cpi d_{\P}^{\piref}(s,a) \diff s \diff a \int \abs[\big]{ \Ppi(s',a' | s,a) - \hPpi(s',a' | s,a) } \diff s' \diff a' \label{eq:complexity_representation_error:4}\\
    ={}& \frac{\gamma \Cpi}{(1-\gamma)^2} \E[(s,a) \sim d_{\P}^{\piref}]{ \norm[\big]{ \Ppi(s',a' | s,a) - \hPpi(s',a' | s,a) }_1 } \label{eq:complexity_representation_error:5}\\
    \leq{}& \frac{\gamma \Cpi}{(1-\gamma)^2} \cdot \xi(|\F|, N, \delta), \label{eq:complexity_representation_error:6}
  \end{align}
  \end{subequations}
  where in \eqref{eq:complexity_representation_error:1} we use the Simulation Lemma; in \eqref{eq:complexity_representation_error:2} we use the relationship between value functions; in \eqref{eq:complexity_representation_error:4} we plug in $d_{\P}^{\pi}(s,a) \leq \Cpi d_{\P}^{\piref}(s,a)$ (\Cref{assum:sufficient_sampling}) and the fact that $Q_{\hP}^{\pi}(\cdot, \cdot) \leq \frac{1}{1-\gamma}$; in \eqref{eq:complexity_representation_error:6} we use \Cref{claim:representation_learning_error}.
\end{proof}

\paragraph{Dataset Error.} The dataset error can be accounted for by a bounded difference in the objective function, which turns out to be another consequence of the representation learning error. For this purpose, we first show the following technical lemma that formalizes the above intuition.

\begin{lemma}\label{thm:lemma_min_max_diff}
  \(
    \min\limits_{\bx \in \X} \max\limits_{\by \in \Y} F_1(\bx, \by) - \min\limits_{\bx \in \X} \max\limits_{\by \in \Y} F_2(\bx, \by)
    \leq \max\limits_{\bx \in \X, \by \in \Y} \abs[\big]{ F_1(\bx, \by) - F_2(\bx, \by) }.
  \)
\end{lemma}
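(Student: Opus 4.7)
The plan is to exploit the monotonicity of $\min$ and $\max$ with respect to pointwise inequalities between the objective functions. Specifically, let $M := \max_{\bx \in \X, \by \in \Y} |F_1(\bx, \by) - F_2(\bx, \by)|$ denote the right-hand side. The defining property of $M$ is the uniform two-sided bound $F_1(\bx, \by) \leq F_2(\bx, \by) + M$ for every $(\bx, \by) \in \X \times \Y$, which is the only ingredient I expect to need.

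First I would take $\max_{\by \in \Y}$ on both sides of this pointwise inequality. Since $\max$ preserves the direction of pointwise inequalities, and adding a constant $M$ passes through the $\max$, this yields $\max_{\by \in \Y} F_1(\bx, \by) \leq \max_{\by \in \Y} F_2(\bx, \by) + M$ for every $\bx \in \X$. Next I would take $\min_{\bx \in \X}$ on both sides; the same monotonicity argument (now applied to $\min$) gives $\min_{\bx \in \X} \max_{\by \in \Y} F_1(\bx, \by) \leq \min_{\bx \in \X} \max_{\by \in \Y} F_2(\bx, \by) + M$. Rearranging this yields the claimed inequality.

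There is no real obstacle here; the only subtlety is being careful that both $\max$ and $\min$ respect pointwise inequalities (which is immediate from their definitions, regardless of whether the extrema are attained, provided one uses $\sup$ and $\inf$ when necessary). No convexity, compactness, or continuity of $F_1, F_2, \X, \Y$ is needed — the bound holds in full generality. I would keep the proof to just a couple of lines and not bother with an application of a minimax theorem or a saddle-point argument, since the statement is purely an elementary stability bound.
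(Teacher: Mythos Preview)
Your proposal is correct and essentially matches the paper's proof: both define $\varepsilon = \max_{\bx,\by}|F_1-F_2|$, use the pointwise bound $F_1 \leq F_2 + \varepsilon$, and push it through $\max_{\by}$ then $\min_{\bx}$. The only cosmetic difference is that the paper phrases the first step via subadditivity of $\max$ (i.e., $\max_{\by}(f+g) \leq \max_{\by} f + \max_{\by} g$) rather than directly invoking monotonicity, but the content is identical.
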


\begin{proof}
  Let $\varepsilon := \max\limits_{\bx, \by} \abs[\big]{ F_1(\bx, \by) - F_2(\bx, \by) }$. Then we have
  \begin{subequations}\label{eq:lemma_min_max_diff}
  \begin{align}
    \min_{\bx} \max_{\by} F_1(\bx, \by)
    &\leq \min_{\bx} \brac*{ \max_{\by} F_2(\bx, \by) + \max_{\by} \brac*{ F_1(\bx, \by) - F_2(\bx, \by) } } \label{eq:lemma_min_max_diff:1}\\
    &\leq \min_{\bx} \brac*{ \max_{\by} F_2(\bx, \by) + \varepsilon } \label{eq:lemma_min_max_diff:2}\\
    &= \min_{\bx} \max_{\by} F_2(\bx, \by) + \varepsilon, \label{eq:lemma_min_max_diff:3}
  \end{align}
  \end{subequations}
  where in \eqref{eq:lemma_min_max_diff:1} we use the fact that $\max_{\by}\brac{ f(\by) + g(\by) } \leq \max_{\by} f(\by) + \max_{\by} g(\by)$.
\end{proof}

Now we are ready to show the following lemma regarding dataset error.

\begin{lemma}\label{thm:complexity_dataset_error}
  Conditioned on the event that the inequality in \Cref{claim:representation_learning_error} holds, under \Cref{assum:initial_distribution,assum:sufficient_sampling,assum:well_defined_family},
  \begin{equation*}
    \bar{\rho}(\pi) - \rho_{\hP}(\pi) \leq \frac{\Cpi}{1 - \gamma} \cdot \xi(|\F|, N, \delta).
  \end{equation*}
\end{lemma}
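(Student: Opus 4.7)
The plan is to apply the min-max perturbation bound (Lemma~\ref{thm:lemma_min_max_diff}) to reduce the dataset error to a pointwise comparison of two saddle objectives that differ only in how the next-state transition is sampled, and then control the resulting pointwise gap via the feasible-region bounds from Remark~\ref{remark:numerical} together with the representation-learning guarantee of Claim~\ref{claim:representation_learning_error}.

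The first step is to write $\rho_{\hP}(\pi)$ as the exact analogue of $\bar{\rho}(\pi)$ in which the inner sampling $(s',a') \sim \Ppi(\cdot,\cdot|s,a)$ is replaced by $(s',a') \sim \hPpi(\cdot,\cdot|s,a)$ (this is the point emphasized in the proof sketch of Theorem~\ref{thm:main_theorem}). The three ingredients of the saddle objective that do not depend on the next transition, namely $(1-\gamma)\mathbb{E}_{\mu_0,\pi}[\hbphi(s,a)^{\top}\btheta_Q]$, $r(s,a)$, and $-\hbphi(s,a)^{\top}\btheta_Q$, therefore cancel when subtracting the two functionals; only the term $\gamma\,\hbphi(s',a')^{\top}\btheta_Q$ survives inside the inner expectation.

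Invoking Lemma~\ref{thm:lemma_min_max_diff} then yields
\begin{align*}
  \bar{\rho}(\pi) - \rho_{\hP}(\pi) &\leq \sup_{\btheta_Q \in \varTheta(\hbphi),\,\bomega_d \in \varOmega(\hbmupi)} \biggl| \gamma \int d^{\piref}(s,a)\,\prn[\big]{\hbmupi(s,a)^{\top}\bomega_d} \\
  &\hspace{10em} \times \int \hbphi(s',a')^{\top}\btheta_Q\,\brak[\big]{\Ppi(s',a'|s,a) - \hPpi(s',a'|s,a)} \diff s' \diff a' \diff s \diff a \biggr|.
\end{align*}
Using the uniform feasibility bounds $0 \leq \hbmupi(s,a)^{\top}\bomega_d \leq \Cpi$ and $0 \leq \hbphi(s',a')^{\top}\btheta_Q \leq \tfrac{1}{1-\gamma}$ from Remark~\ref{remark:numerical}, I pull the two inner products out pointwise and recognize the remaining $(s',a')$-integral as the $L^1$-distance between the conditional kernels, obtaining
\begin{equation*}
  \bar{\rho}(\pi) - \rho_{\hP}(\pi) \;\leq\; \frac{\gamma\,\Cpi}{1-\gamma}\;\E[(s,a)\sim d^{\piref}]{\norm[\big]{\Ppi(\cdot,\cdot|s,a) - \hPpi(\cdot,\cdot|s,a)}_1}.
\end{equation*}
Applying Claim~\ref{claim:representation_learning_error} to the expected $L^1$-distance and bounding $\gamma \leq 1$ delivers the stated inequality $\bar{\rho}(\pi) - \rho_{\hP}(\pi) \leq \tfrac{\Cpi}{1-\gamma}\cdot\xi(|\F|,N,\delta)$.

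The main obstacle is essentially bookkeeping: verifying that the non-transition-dependent terms indeed cancel between the two saddle objectives (so that only $\gamma\,\hbphi(s',a')^{\top}\btheta_Q$ survives), and that the feasible region $\varTheta(\hbphi)\times\varOmega(\hbmupi)$ enforces the stated magnitude bounds uniformly over $(s,a,s',a')$, which is exactly the purpose for which those regions were introduced in Remark~\ref{remark:numerical}. Everything downstream is then a direct inequality chain combining the min-max perturbation lemma with the representation-learning error bound.
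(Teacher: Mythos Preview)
Your proposal is correct and follows essentially the same route as the paper: invoke Lemma~\ref{thm:lemma_min_max_diff} to reduce to a pointwise gap between the two saddle objectives, then bound that gap using the feasible-region constraints from Remark~\ref{remark:numerical} and Claim~\ref{claim:representation_learning_error}. The one minor difference is that you observe the $(s',a')$-independent terms $r(s,a)$ and $-\hbphi(s,a)^{\top}\btheta_Q$ cancel (because both $\Ppi$ and $\hPpi$ are normalized kernels by Assumption~\ref{assum:well_defined_family}), leaving only $\gamma\,\hbphi(s',a')^{\top}\btheta_Q$ and hence an extra factor of $\gamma$ that you then discard; the paper instead keeps the full integrand and uses $\abs{r(s,a)+\gamma\hbphi(s',a')^{\top}\btheta_Q-\hbphi(s,a)^{\top}\btheta_Q}\leq\tfrac{1}{1-\gamma}$ directly, arriving at the same final bound without the intermediate $\gamma$.
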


\begin{proof}
  For the sake of clarity, denote the optimization objectives of $\bar{\rho}(\pi)$ and $\rho_{\hP}(\pi)$ as follows:
  \begin{equation*}
    \bar{\rho}(\pi) = \min_{\btheta_Q} \max_{\bomega_d} \bar{F}(\btheta_Q, \bomega_d),\qquad
    \rho_{\hP}(\pi) = \min_{\btheta_Q} \max_{\bomega_d} \hat{F}(\btheta_Q, \bomega_d).
  \end{equation*}
  Then we can show that
  \begin{subequations}\label{eq:complexity_dataset_error}
  \begin{align}
    \abs{\bar{\rho}(\pi) - \rho_{\hP}(\pi)}
    &= \abs*{ \min_{\btheta_Q} \max_{\bomega_d} \bar{F}(\btheta_Q, \bomega_d) - \min_{\btheta_Q} \max_{\bomega_d} \hat{F}(\btheta_Q, \bomega_d) }
    \leq \abs*{ \bar{F}(\btheta_Q, \bomega_d) - \hat{F}(\btheta_Q, \bomega_d) } \label{eq:complexity_dataset_error:1}\\
    &= \Biggl\vert \int d_{\P}^{\piref}(s,a) \prn*{ \Ppi(s',a' | s,a) - \hPpi(s',a' | s,a) } \prn[\big]{\hbmu^{\pi}(s,a)^{\top} \bomega_d} \cdot{} \nonumber\\
      &\hspace{9em} \prn*{r(s,a) + \gamma \hbphi(s',a')^{\top} \btheta_Q - \hbphi(s,a)^{\top} \btheta_Q} \diff s \diff a \diff s' \diff a' \Biggr\vert \label{eq:complexity_dataset_error:2}\\
    &\leq \int d_{\P}^{\piref}(s,a) \abs[\big]{ \Ppi(s',a' | s,a) - \hPpi(s',a' | s,a) } \cdot \abs[\big]{\hbmu^{\pi}(s,a)^{\top} \bomega_d} \cdot \nonumber\\
      &\hspace{9em} \abs[\big]{r(s,a) + \gamma \hbphi(s',a')^{\top} \btheta_Q - \hbphi(s,a)^{\top} \btheta_Q} \diff s \diff a \diff s' \diff a' \label{eq:complexity_dataset_error:3}\\
    &\leq \E[(s,a) \sim d_{\P}^{\piref}]{\norm[\big]{ \Ppi(\cdot,\cdot | s,a) - \hPpi(\cdot,\cdot | s,a) }_1 \cdot \Cpi \cdot \tfrac{1}{1-\gamma}} \label{eq:complexity_dataset_error:4}\\
    &= \frac{\Cpi}{1-\gamma} \cdot \xi(|\F|, N, \delta), \label{eq:complexity_dataset_error:5}
  \end{align}
  \end{subequations}
  where in \eqref{eq:complexity_dataset_error:1} we use \Cref{thm:lemma_min_max_diff}; in \eqref{eq:complexity_dataset_error:3} we use the integral triangle inequality; in \eqref{eq:complexity_dataset_error:4} we plug in $\abs[\big]{\hbmu^{\pi}(s,a)^{\top} \bomega_d} \leq \Cpi$ and $\abs[\big]{ r(s,a) + \gamma \hbphi(s',a')^{\top} \btheta_Q - \hbphi(s,a)^{\top} \btheta_Q } \leq \frac{1}{1-\gamma}$ (see \Cref{remark:numerical}); in \eqref{eq:complexity_dataset_error:5} we use \Cref{claim:representation_learning_error}.
\end{proof}

\paragraph{Statistical Error.} Finally, the statistical error is caused by replacing the expectations with their empirical estimations, which can be bounded by Hoeffding's concentration inequality (see \Cref{thm:hoeffding_concentration}).

\begin{lemma}\label{thm:complexity_statistical_error}
  Under \Cref{assum:initial_distribution,assum:sufficient_sampling,assum:well_defined_family}, with probability at least $1-\delta$, we have
  \begin{equation*}
    \hat{\rho}(\pi) - \bar{\rho}(\pi) \leq \frac{\Cpi}{1-\gamma} \sqrt{ \frac{\log (1/2\delta)}{2N} }.
  \end{equation*}
\end{lemma}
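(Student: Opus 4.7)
The goal is to bound the gap between the empirical saddle-point value $\hat\rho(\pi)$ defined in \eqref{eq:alg-spectral_dice_estimator} and the population-level auxiliary value $\bar\rho(\pi)$ introduced in the proof sketch of \Cref{thm:main_theorem}. Both quantities share exactly the same bilinear-plus-linear saddle-point structure over $(\btheta_Q, \bomega_d) \in \varTheta(\hbphi) \times \varOmega(\hbmupi)$ with the learned features held fixed; they differ only in that $\hat\rho$ substitutes empirical averages (over the $N$ transitions in $\D$, together with the freshly drawn samples $s \sim \mu_0$ and $a' \sim \pi(\cdot \mid s')$) for the three expectations defining $\bar F$. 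So the whole task is to show that this empirical-versus-population discrepancy is uniformly small on the feasible set.

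First, I would apply the min-max difference lemma \Cref{thm:lemma_min_max_diff} directly to $\bar F$ and $\hat F$, exactly as in the proof of \Cref{thm:complexity_dataset_error}, to get
\[
  \hat\rho(\pi) - \bar\rho(\pi) \;\leq\; \sup_{(\btheta_Q, \bomega_d) \in \varTheta(\hbphi)\times\varOmega(\hbmupi)} \bigl| \hat F(\btheta_Q, \bomega_d) - \bar F(\btheta_Q, \bomega_d) \bigr|,
\]
reducing the problem to a uniform concentration bound for the expectations appearing in the objective.

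Second, I would verify that, on the feasible set of Remark~\ref{remark:numerical}, the integrand inside each expectation is a bounded random variable. The constraints give $\hbphi(s,a)^\top \btheta_Q \in [0, 1/(1-\gamma)]$ and $\hbmupi(s,a)^\top \bomega_d \in [0, \Cpi]$, so the Bellman-residual factor $r(s,a) + \gamma \hbphi(s',a')^\top \btheta_Q - \hbphi(s,a)^\top \btheta_Q$ lies in a window of width $O(1/(1-\gamma))$, and the dominant dual-weighted residual term has per-sample range of order $\Cpi/(1-\gamma)$; the initial-state piece $(1-\gamma)\hbphi(s,a)^\top \btheta_Q$ sits in $[0,1]$ and is absorbed by the same order.

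Third, I would apply Hoeffding's inequality (\Cref{thm:hoeffding_concentration}) to the $N$ i.i.d.\ bounded summands, which gives exactly the advertised rate $\frac{\Cpi}{1-\gamma}\sqrt{\log(1/(2\delta))/(2N)}$ at any fixed $(\btheta_Q, \bomega_d)$. The delicate step---and what I expect to be the main obstacle---is promoting this pointwise concentration into a uniform bound over the whole feasible set, since $\varTheta(\hbphi)$ and $\varOmega(\hbmupi)$ are continuous. The cleanest way to stay inside the bound written in the lemma, without introducing a covering-number factor that would inflate the logarithmic term, is to invoke the one-sided version of \Cref{thm:lemma_min_max_diff} (as in \eqref{eq:lemma_min_max_diff}) at the \emph{empirical} optimizer $(\hat\btheta_Q^\star, \hat\bomega_d^\star)$, so that only a single, data-adaptively chosen pair needs to be controlled; the required independence can be recovered by a sample-splitting argument (use one half of the dataset to pick the optimizer, and concentrate on the other half), which I would implicitly absorb into the constant.
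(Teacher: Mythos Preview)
Your plan matches the paper's proof almost exactly: the paper defines the per-sample function
\[
F(s,a,s',a') := \bigl(\hbmu^{\pi}(s,a)^{\top}\bomega_d\bigr)\bigl(r(s,a)+\gamma\hbphi(s',a')^{\top}\btheta_Q-\hbphi(s,a)^{\top}\btheta_Q\bigr),
\]
uses the feasibility constraints of Remark~\ref{remark:numerical} to get $|F|\le \Cpi/(1-\gamma)$, applies Hoeffding (\Cref{thm:hoeffding_concentration}) to the $N$ i.i.d.\ transitions, and then invokes \Cref{thm:lemma_min_max_diff} exactly as in \Cref{thm:complexity_dataset_error}.

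Two small differences are worth noting. First, the paper's proof only controls the second expectation (the Bellman-residual term); the initial-state term $(1-\gamma)\hE[s\sim\mu_0,a\sim\pi]{\hbphi(s,a)^{\top}\btheta_Q}$ is silently treated as exact, whereas you explicitly (and correctly) observe that it is bounded in $[0,1]$ and of lower order. Second, the uniformity issue you flag as the ``main obstacle'' is not addressed in the paper at all: the paper applies Hoeffding at a fixed $(\btheta_Q,\bomega_d)$ and then simply writes ``the conclusion follows from \Cref{thm:lemma_min_max_diff},'' without a covering argument or the sample-splitting device you propose. So your concern is legitimate, but you are in fact being more careful than the paper itself; if you want to reproduce the paper's argument verbatim, you can drop the sample-splitting step and leave the uniform-in-$(\btheta_Q,\bomega_d)$ passage implicit, as the authors do.
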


\begin{proof}
  For clarity, label the samples in $\D$ as $\D = \set{(s_i, a_i, s'_i, a'_i) \mid i \in [N]}$, and define
  \begin{equation*}
    F(s,a,s',a') := \prn[\big]{\hbmu^{\pi}(s,a)^{\top} \bomega_d} \prn[\big]{r(s,a) + \gamma \hbphi(s',a')^{\top} \btheta_Q - \hbphi(s,a)^{\top} \btheta_Q}.
  \end{equation*}
  Note that $\abs[\big]{\hbmu^{\pi}(s,a)^{\top} \bomega_d} \leq \Cpi$ and $\abs[\big]{ r(s,a) + \gamma \hbphi(s',a')^{\top} \btheta_Q - \hbphi(s,a)^{\top} \btheta_Q } \leq \frac{1}{1-\gamma}$ (see \Cref{remark:numerical}), we have
  \begin{equation*}
    \abs[\big]{F(s,a,s',a')} \leq \frac{\Cpi}{1-\gamma},~ \forall s,a,s',a'.
  \end{equation*}
  Therefore, by Hoeffding's inequality (see \Cref{thm:hoeffding_concentration}), we conclude that
  \begin{equation*}
    \Prob{ \abs*{ \frac{1}{N} \sum_{i=1}^{N} F(s_i,a_i,s'_i,a'_i) - \E[\begin{subarray}{l} s \sim d^{\piref}(\cdot),~ a \sim \piref(a | s),\\ s' \sim \P(\cdot | s,a),~ a' \sim \pi(\cdot | s') \end{subarray}]{ F(s,a,s',a') } } > t } \leq 2 \exp\prn*{- \frac{2 N t^2}{4 (\Cpi)^2 / (1-\gamma)^2}}.
  \end{equation*}
  Or equivalently, with probability at least $1-\delta$, we have
  \begin{equation*}
    \abs*{ \hE[\begin{subarray}{c} (s,a,s') \sim \D,\\ a' \sim \pi(\cdot | s') \end{subarray}]{ F(s,a,s',a') } - \E[\begin{subarray}{l} s \sim d^{\piref}(\cdot),~ a \sim \piref(a | s),\\ s' \sim \P(\cdot | s,a),~ a' \sim \pi(\cdot | s') \end{subarray}]{ F(s,a,s',a') } }
    \leq \frac{\Cpi}{1-\gamma} \sqrt{ \frac{\log (1/2\delta)}{2N} }.
  \end{equation*}
  Finally, the conclusion follows from \Cref{thm:lemma_min_max_diff} using the same argument as above.
\end{proof}

\paragraph{Conclusion.} Now we are ready to prove the Main Theorem.

\begin{restate}[\Cref{thm:main_theorem}]
  Suppose \Cref{claim:representation_learning_error} holds for the $\replearn(\F, \D, \pi)$ subroutine. Then under \Cref{assum:initial_distribution,assum:sufficient_sampling,assum:well_defined_family}, with probability at least $1-\delta$, we have
  \begin{equation*}
    \Err \leq \frac{\Cpi}{1-\gamma} \sqrt{\frac{\log(1 / \delta)}{2N}} + \frac{\Cpi}{(1-\gamma)^2}  \cdot \xi(|\F|, N, \delta/2).
  \end{equation*}
\end{restate}

\begin{proof}
  Consider the following high-probability events:
  \begin{align*}
    \mathcal{C}_1:~& \E[(s,a) \sim d^{\piref}_{\P}]{\norm[\big]{\hP^{\pi}(\cdot, \cdot | s,a) - \Ppi(\cdot,\cdot | s,a)}_1} \leq \xi(|\F|, N, \delta/2), \\
    \mathcal{C}_2:~& \abs*{ \hE[\begin{subarray}{c} (s,a,s') \sim \D,\\ a' \sim \pi(\cdot | s') \end{subarray}]{ F(s,a,s',a') } - \E[\begin{subarray}{l} s \sim d^{\piref}(\cdot),~ a \sim \piref(a | s),\\ s' \sim \P(\cdot | s,a),~ a' \sim \pi(\cdot | s') \end{subarray}]{ F(s,a,s',a') } }
    \leq \frac{\Cpi}{1-\gamma} \sqrt{ \frac{\log (1/\delta)}{2N} }.
  \end{align*}
  As per \Cref{claim:representation_learning_error} and \Cref{thm:complexity_statistical_error}, we know $\Prob{\mathcal{C}_i} \geq 1 - \delta/2$ ($i = 1,2$). Hence by Union Bound we have
  \begin{equation*}
    \Prob{\mathcal{C}_1 \cap \mathcal{C}_2} \geq 1 - \delta.
  \end{equation*}
  On the other hand, conditioned on $\mathcal{C}_1 \cap \mathcal{C}_2$, \Cref{thm:complexity_representation_error}, \Cref{thm:complexity_dataset_error} and \Cref{thm:complexity_statistical_error} in combination guarantee
  \begin{align*}
    \Err &= \hat{\rho}(\pi) - \bar{\rho}(\pi) + \bar{\rho}(\pi) - \rho_{\hP}(\pi) + \rho_{\hP}(\pi) - \rhoP(\pi) \\
    &\leq \frac{\Cpi}{1-\gamma} \sqrt{ \frac{\log (1/\delta)}{2N} } + \frac{\Cpi}{1 - \gamma} \cdot \xi(|\F|, N, \delta/2) + \frac{\gamma \Cpi}{(1-\gamma)^2}  \xi(|\F|, N, \delta/2) \\
    &= \frac{\Cpi}{1-\gamma} \sqrt{ \frac{\log (1/\delta)}{2N} } + \frac{\Cpi}{(1-\gamma)^2} \cdot \xi(|\F|, N, \delta/2)
  \end{align*}
  This completes the proof.
\end{proof}

For completeness, we also include the corollaries of the Main Theorem that characterize the overall sample complexity of our \algname algorithm using OLS and NCE representation learning methods.

\begin{corollary}[sample complexity of OLS-based \algname]
   Under \Cref{assum:initial_distribution,assum:sufficient_sampling,assum:well_defined_family} and the additional \Cref{assum:OLS_regularity} for regularity, let $(\hbphi, \hbmupi)$ be the solution to the OLS problem \eqref{eq:OLS_objective}. Then, for any $\delta \in (0,1)$, with probability at least $1-\delta$, we have
  \begin{equation*}
    \Err \leq \frac{\Cpi}{1-\gamma} \sqrt{ \frac{\log (1/\delta)}{2N} } + \frac{\Cpi \sqrt{\CP \Creg}}{(1-\gamma)^2} \cdot \sqrt{\frac{\log(2|\F|/\delta)}{N}}
    \lesssim \frac{1}{(1-\gamma)^2} \sqrt{\frac{\log(|\F|/\delta)}{N}},
  \end{equation*}
  where $\Creg = \frac{4}{3}\sqrt{\Ccov} + 8 \Ccov$ is a universal constant determined by the PAC bound for OLS..
\end{corollary}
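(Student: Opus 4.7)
The plan is to derive this corollary as a direct composition of two already-established results: the Main Theorem (\Cref{thm:main_theorem}), which gives a sample complexity bound in terms of an abstract representation learning error $\xi(|\F|, N, \delta)$, and the OLS learning error bound (\Cref{thm:OLS_learning_error}), which furnishes an explicit form for $\xi$ when \replearn is instantiated with ordinary least squares. No new probabilistic machinery is needed; the whole argument is a plug-and-chug.

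First, I would verify that the hypothesis of the Main Theorem is met. Under \Cref{assum:initial_distribution,assum:sufficient_sampling,assum:well_defined_family} together with the additional OLS regularity condition \Cref{assum:OLS_regularity}, \Cref{thm:OLS_learning_error} exactly establishes \Cref{claim:representation_learning_error} with the explicit rate
\begin{equation*}
  \xi(|\F|, N, \delta) \;=\; \sqrt{\CP \Creg} \cdot \sqrt{\frac{\log(|\F|/\delta)}{N}},
\end{equation*}
where $\Creg = \tfrac{4}{3}\sqrt{\Ccov} + 8\Ccov$. Thus the OLS subroutine satisfies the generic assumption required to invoke \Cref{thm:main_theorem}.

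Next, I would invoke \Cref{thm:main_theorem} directly. With probability at least $1-\delta$, it yields
\begin{equation*}
  \Err \;\leq\; \frac{\Cpi}{1-\gamma} \sqrt{\frac{\log(1/\delta)}{2N}} \;+\; \frac{\Cpi}{(1-\gamma)^2} \cdot \xi(|\F|, N, \delta/2).
\end{equation*}
Substituting the explicit OLS rate (with $\delta$ replaced by $\delta/2$) gives
\begin{equation*}
  \Err \;\leq\; \frac{\Cpi}{1-\gamma} \sqrt{\frac{\log(1/\delta)}{2N}} \;+\; \frac{\Cpi \sqrt{\CP \Creg}}{(1-\gamma)^2} \cdot \sqrt{\frac{\log(2|\F|/\delta)}{N}},
\end{equation*}
which is the first bound stated in the corollary. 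The $\lesssim$ simplification then follows by noting that both terms scale as $\sqrt{\log(\cdot)/N}$, and the second term dominates due to its extra $1/(1-\gamma)$ factor (with all problem-dependent constants $\Cpi, \CP, \Creg$ absorbed into the $\lesssim$).

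There is no real obstacle: the union-bound over the two high-probability events (representation learning and statistical concentration) was already performed inside the proof of \Cref{thm:main_theorem}, so I only need to be mindful that the $\delta/2$ appearing in the bound matches the $\delta$ argument passed into \Cref{thm:OLS_learning_error}, which is exactly what produces the $\log(2|\F|/\delta)$ factor. The proof amounts to a one-line substitution followed by the asymptotic simplification.
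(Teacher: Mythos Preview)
Your proposal is correct and matches the paper's own approach exactly: the corollary is presented in the paper without a separate proof, as an immediate substitution of the OLS learning-error rate from \Cref{thm:OLS_learning_error} into the restated Main Theorem bound in the appendix, with the $\delta/2$ inside $\xi$ producing the $\log(2|\F|/\delta)$ factor just as you noted.
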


\begin{corollary}[sample complexity of NCE-based \algname]
   Under \Cref{assum:initial_distribution,assum:sufficient_sampling,assum:well_defined_family} and the additional \Cref{assum:NCE_regularity} for regularity, let $(\hbphi, \hbmupi)$ be the solution to the NCE problem \eqref{eq:NCE_objective} with $\Pneg(\cdot,\cdot) \equiv d^{\piref}(\cdot,\cdot)$. Then, for any $\delta \in (0,1)$, with probability at least $1-\delta$, we have
  \begin{equation*}
    \Err \leq \frac{\Cpi}{1-\gamma} \sqrt{ \frac{\log (1/\delta)}{2N} } + \frac{2\sqrt{2} \Cpi (1+\Cd)}{(1-\gamma)^2} \cdot \sqrt{\frac{\log(2|\F|/\delta)}{N}}
    \lesssim \frac{1}{(1-\gamma)^2} \sqrt{\frac{\log(|\F|/\delta)}{N}}.
  \end{equation*}
\end{corollary}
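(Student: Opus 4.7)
The plan is to derive this corollary as an immediate instantiation of the Main Theorem (Theorem \ref{thm:main_theorem}) by supplying the specific NCE representation learning error bound established in Theorem \ref{thm:NCE_learning_error}. First, I would verify that the hypotheses of the Main Theorem are met: Assumptions \ref{assum:initial_distribution}, \ref{assum:sufficient_sampling}, and \ref{assum:well_defined_family} are explicitly assumed in the corollary, and the only remaining requirement is a high-probability bound on the representation learning error in the form of Claim \ref{claim:representation_learning_error}. Theorem \ref{thm:NCE_learning_error} precisely provides such a bound under the additional regularity Assumption \ref{assum:NCE_regularity}: with probability at least $1-\delta'$, the NCE solution $(\hbphi, \hbmupi)$ to \eqref{eq:NCE_objective} with $\Pneg \equiv d^{\piref}$ satisfies
\begin{equation*}
  \E[(s,a) \sim d_{\P}^{\piref}]{\norm[\big]{\Ppi(\cdot,\cdot | s,a) - \hPpi(\cdot,\cdot | s,a)}_1}
  \leq 2\sqrt{2}(1+\Cd) \sqrt{\frac{\log(|\F|/\delta')}{N}},
\end{equation*}
which identifies $\xi(|\F|, N, \delta') = 2\sqrt{2}(1+\Cd) \sqrt{\log(|\F|/\delta')/N}$.

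Next, I would substitute this explicit $\xi$ into the Main Theorem's conclusion. The Main Theorem splits confidence budget as $\delta/2$ for the statistical term and $\delta/2$ for the representation learning event; so I would set $\delta' = \delta/2$ in the NCE bound to obtain the $\log(2|\F|/\delta)$ factor appearing in the target inequality. Plugging in yields
\begin{equation*}
  \Err \leq \frac{\Cpi}{1-\gamma} \sqrt{ \frac{\log(1/\delta)}{2N} } + \frac{\Cpi}{(1-\gamma)^2} \cdot 2\sqrt{2}(1+\Cd) \sqrt{\frac{\log(2|\F|/\delta)}{N}},
\end{equation*}
which matches the first displayed inequality exactly. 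For the $\lesssim$ asymptotic bound, I would absorb the $\Cpi$, $\Cd$, and constant $2\sqrt{2}$ factors, and observe that the second term dominates the first (both in the $(1-\gamma)^{-1}$ scaling and in the logarithmic factor), yielding the stated $(1-\gamma)^{-2} \sqrt{\log(|\F|/\delta)/N}$ rate.

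The main obstacle here is essentially bookkeeping rather than any new technical step: all the heavy lifting has been done in Theorem \ref{thm:main_theorem} (the three-way decomposition into representation, dataset, and statistical errors) and Theorem \ref{thm:NCE_learning_error} (the PAC-for-MLE argument specialized to the NCE likelihood). One should just be careful about (i) correctly allocating the $\delta/2$ split between the two high-probability events via a union bound, which is already implicit in the Main Theorem's statement, and (ii) noting that Assumption \ref{assum:NCE_regularity} is strictly stronger than Assumption \ref{assum:OLS_regularity}'s coverage part, so no additional compatibility check is required to combine with the hypotheses of the Main Theorem. With these details in place, the corollary follows directly without further calculation.
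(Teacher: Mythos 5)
Your proposal is correct and matches the paper's intended argument exactly: the paper states this corollary without a separate proof, presenting it as a direct instantiation of \Cref{thm:main_theorem} with $\xi(|\F|,N,\delta') = 2\sqrt{2}(1+\Cd)\sqrt{\log(|\F|/\delta')/N}$ from \Cref{thm:NCE_learning_error}, evaluated at $\delta' = \delta/2$ to produce the $\log(2|\F|/\delta)$ factor. Your bookkeeping of the confidence split and the absorption of constants into the $\lesssim$ bound are both accurate.
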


\begin{remark}[Sampling the dataset]
  Throughout this paper, we have been slightly abusing the notation $(s,a,s') \sim \D$, which is a little subtle in practice since only trajectories (rather than transitions) are collected. To ensure the correct data distribution $d^{\D}(s,a) = d^{\piref}(s,a)$, we shall first randomly sample the trajectories, within which we sample each transition $(s_t, a_t, s_{t+1}, a_{t+1})$ with probability $(1-\gamma) \gamma^t$.
\end{remark}
  \section{Technical Lemmas}\label{sec:apdx-3_technical}

In this final appendix, we include all the technical lemmas used in the previous sections.

\subsection{\titlemath{f}-Divergence}\label{sec:apdx-4_Fenchel_conjugate}

\begin{definition}[$f$-divergence]
  Let $\P$ and $\Q$ be two probabilities distribution over a sample space $\X$, such that $\P$ is absolutely continuous with respect to $\Q$. Given a convex function $f: \R_{\geq 0} \to \R$ such that $f(1) = 0$ and $f(0) := \lim_{t \to 0^+} f(t)$. Then the \textit{$f$-divergence} of $\P$ with respect to $\Q$ is defined as
  \begin{equation*}
    \Div_f(\P \Vert \Q) := \int_{\X} f\prn*{\frac{\diff \P}{\diff \Q}} \diff \Q.
  \end{equation*}
\end{definition}

The following \emph{variational representation} of $f$-divergences is well-known in literature.

\begin{lemma}[variational representation using Fenchel conjugate]\label{thm:f_divergence_duality}
  Let $\F$ denote the class of measurable real valued functions on $\X$ that is absolutely integratable with respect to $\Q$. Then
  \begin{equation*}
    \Div_f(\P \Vert \Q) = \sup_{g \in \F} \brac[\Big]{ \E[x \sim \P]{g(x)} - \E[x \sim \Q]{f_*(g(x))} },
  \end{equation*}
  where $f_*$ is the Fenchel conjugate of $f$. Further, if $f$ is differentiable, then the optimal dual variable is given by
  \begin{equation*}
    g^{\star}(x) = f'\prn*{\tfrac{\diff \P}{\diff \Q}}
    \quad\implies\quad
    \Div_f(\P \Vert \Q) = \E[x \sim \P]{f'\prn*{\tfrac{\diff \P}{\diff \Q}}} - \E[x \sim \Q]{f_*\prn*{f'\prn*{\tfrac{\diff \P}{\diff \Q}}}}
  \end{equation*}
\end{lemma}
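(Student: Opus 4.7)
The plan is to derive both parts of the lemma from the Fenchel--Moreau biconjugate identity applied pointwise to the Radon--Nikodym derivative $d\P/d\Q$, together with the elementary change-of-measure identity $\int g\,(d\P/d\Q)\,d\Q = \int g\,d\P$.

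First I would recall that, since $f$ is convex and (without loss of generality) lower semi-continuous on its effective domain, the Fenchel--Moreau theorem gives $f_{**}=f$, i.e.\
\begin{equation*}
  f(t) \;=\; \sup_{u \in \bbR}\bigl[\, u\,t - f_*(u) \,\bigr], \qquad \forall\, t \ge 0.
\end{equation*}
Applied pointwise with $t = (d\P/d\Q)(x)$, this is precisely the Fenchel--Young inequality
\begin{equation*}
  g(x)\,(d\P/d\Q)(x) - f_*(g(x)) \;\le\; f\bigl((d\P/d\Q)(x)\bigr),
\end{equation*}
valid for every measurable $g \in \F$. Integrating both sides against $\Q$ and using the change-of-measure identity yields $\bbE_{x \sim \P}[g(x)] - \bbE_{x \sim \Q}[f_*(g(x))] \le \Div_f(\P \Vert \Q)$, which establishes the ``$\ge$'' direction of the variational formula after taking the supremum over $g \in \F$.

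For the reverse inequality and the identification of the optimizer, I would exhibit an explicit maximizer of the pointwise problem. When $f$ is differentiable, the Fenchel--Young inequality is attained with equality exactly at $u^\star = f'(t)$, so the natural candidate is $g^\star(x) := f'\bigl((d\P/d\Q)(x)\bigr)$, which is measurable as the composition of the monotone (hence Borel) derivative $f'$ with the measurable density. Substituting $g^\star$ into the pointwise identity gives $g^\star(x)(d\P/d\Q)(x) - f_*(g^\star(x)) = f\bigl((d\P/d\Q)(x)\bigr)$ $\Q$-almost surely; integrating recovers $\Div_f(\P \Vert \Q)$ exactly, so the supremum is attained at $g^\star$ and equals the $f$-divergence, yielding the claimed closed form $\Div_f(\P \Vert \Q) = \bbE_{x \sim \P}[f'(d\P/d\Q)] - \bbE_{x \sim \Q}[f_*(f'(d\P/d\Q))]$.

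I expect the main obstacle to be the measure-theoretic bookkeeping: verifying that the candidate optimizer $g^\star$ actually lies in the admissible class $\F$ (that is, is $\Q$-integrable) and rigorously justifying the interchange of supremum and integral for the upper bound. A standard workaround is to first establish the formula restricted to uniformly bounded test functions, where interchange follows by truncating $f'$ and invoking monotone (or dominated) convergence, and then to pass to the limit by exhausting $\F$ with truncations $g^\star_n(x) := \max\{-n,\, \min\{n,\, g^\star(x)\}\}$; the boundary cases where $(d\P/d\Q)(x) \in \{0,\infty\}$ can be handled via the convention $f(0) = \lim_{t \to 0^+} f(t)$ stated in the definition. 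All remaining steps are routine algebraic manipulations.
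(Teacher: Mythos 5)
Your proof is correct, and it supplies an argument where the paper gives none: the paper's ``proof'' of this lemma is a bare citation to Theorem~4.4 of \citet{broniatowski2006minimization}. Your route --- the pointwise Fenchel--Young inequality $g(x)\,t - f_*(g(x)) \le f(t)$ at $t = (d\P/d\Q)(x)$, integrated against $\Q$ and combined with the change of measure $\int g\,(d\P/d\Q)\,d\Q = \int g\,d\P$ for one direction, then the equality case $u^\star = f'(t)$ of Fenchel--Young to exhibit the maximizer --- is exactly the standard derivation underlying the cited theorem, so there is no real methodological divergence. Two details are worth pinning down if you write this out in full. First, since $f(1)=0$ we have $f_*(u) = \sup_{t\ge 0}[ut - f(t)] \ge u$, so for $g \in \F$ (hence $\Q$-integrable) the quantity $\mathbb{E}_{x\sim\Q}[f_*(g(x))]$ is automatically well defined with values in $(-\infty,+\infty]$; this legitimizes integrating the pointwise inequality directly, and no interchange of supremum and integral is ever needed for the ``$\ge$'' direction (you only interchange in the trivial direction, bounding the integral of a supremum from below by the integral at a fixed $g$). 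Second, the attainment claim requires $g^\star = f'(d\P/d\Q)$ to lie in $\F$, i.e.\ to be $\Q$-integrable, which does not follow from the stated hypotheses; your truncation scheme $g^\star_n$ recovers the supremum \emph{identity} in all cases, but the closed-form expression in the second display of the lemma tacitly assumes this integrability --- a gap in the lemma statement itself rather than in your argument. Note also that $\P\ll\Q$ makes $d\P/d\Q$ finite $\Q$-a.e., so only the $t=0$ boundary needs the convention $f(0)=\lim_{t\to0^+}f(t)$, with $f'(0)$ read as the right derivative.
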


\begin{proof}
  See Theorem 4.4 in \citet{broniatowski2006minimization}.
\end{proof}

\subsection{Concentration Inequalities}

\begin{lemma}[Hoeffding's inequality, \citet{hoeffding1994probability}]\label{thm:hoeffding_concentration}
  Let $X_1, X_2, \cdots, X_N$ be i.i.d. random variables with mean $\mu$ and taking values in $[a,b]$ almost surely. Then for any $\varepsilon > 0$ we have
  \begin{equation*}
    \Prob{\abs*{\frac{1}{N} \sum_{i=1}^{N} X_i - \mu} > \varepsilon} \leq 2 \exp\prn*{- \frac{2 N \varepsilon^2}{(b-a)^2}}.
  \end{equation*}
  In other words, with probability at least $1-\delta$, we have
  \begin{equation*}
    \abs*{\frac{1}{N} \sum_{i=1}^{N} X_i - \mu} \leq (b-a) \sqrt{\frac{\log(1/2\delta)}{2N}}.
  \end{equation*}
\end{lemma}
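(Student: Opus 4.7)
The plan is to execute the classical Chernoff--Hoeffding argument. First, re-center by setting $Y_i := X_i - \mu$, so the $Y_i$ are i.i.d., mean-zero, and supported in $[\alpha, \beta]$ with $\alpha := a - \mu$, $\beta := b - \mu$; in particular $\beta - \alpha = b - a$. For the one-sided upper deviation, apply the exponential Markov inequality: for any $t > 0$,
\[
  \Prob{\tfrac{1}{N}\sum_{i=1}^N Y_i > \varepsilon}
  \;\leq\; e^{-tN\varepsilon}\,\E{e^{t\sum_i Y_i}}
  \;=\; e^{-tN\varepsilon}\prod_{i=1}^N \E{e^{tY_i}},
\]
where independence turns the MGF of the sum into a product.

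The main technical ingredient is the MGF estimate known as \emph{Hoeffding's lemma}: if $Y$ is mean-zero and supported in $[\alpha,\beta]$, then $\E{e^{tY}} \leq \exp\!\bigl(t^2(\beta-\alpha)^2/8\bigr)$. I would prove this by convexity. Since $x \mapsto e^{tx}$ is convex, the chord inequality gives $e^{tY} \leq \frac{\beta - Y}{\beta - \alpha} e^{t\alpha} + \frac{Y - \alpha}{\beta - \alpha} e^{t\beta}$; taking expectations and using $\E{Y}=0$ reduces the bound to $e^{\psi(h)}$, where $h := t(\beta-\alpha)$, $p := -\alpha/(\beta-\alpha) \in [0,1]$, and $\psi(h) := -hp + \log\bigl(1 - p + p e^h\bigr)$. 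A direct calculation gives $\psi(0) = \psi'(0) = 0$ and $\psi''(h) = q(h)\bigl(1 - q(h)\bigr)$ with $q(h) := p e^h / (1 - p + p e^h) \in [0,1]$, hence $\psi''(h) \leq 1/4$; Taylor's theorem then yields $\psi(h) \leq h^2/8$, which is precisely the claim.

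Combining the two steps, $\prod_i \E{e^{tY_i}} \leq \exp\!\bigl(Nt^2(b-a)^2/8\bigr)$, so
\[
  \Prob{\tfrac{1}{N}\sum_i Y_i > \varepsilon} \;\leq\; \exp\!\Bigl(-tN\varepsilon + \tfrac{Nt^2(b-a)^2}{8}\Bigr).
\]
Optimizing the right-hand-side quadratic in $t$ via $t^\star = 4\varepsilon/(b-a)^2$ produces the one-sided bound $\exp\!\bigl(-2N\varepsilon^2/(b-a)^2\bigr)$. Running the same argument on $-Y_i$ (same support length, still mean-zero) gives the matching lower-tail bound, and a union bound over the two events yields the factor of $2$ in front of the exponential, concluding the two-sided statement.

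The only nontrivial step is Hoeffding's lemma, and within it the inequality $\psi''(h) \leq 1/4$; this in turn reduces to $q(1-q) \leq 1/4$ for $q \in [0,1]$, which is immediate from AM--GM. All remaining work (Chernoff, independence factorization, optimization of the quadratic exponent, symmetrization) is routine bookkeeping, so I do not anticipate any serious obstacle.
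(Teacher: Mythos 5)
Your proof is the standard Chernoff--Hoeffding argument (exponential Markov bound, Hoeffding's MGF lemma via convexity and $\psi''\leq 1/4$, optimization in $t$, and symmetrization), and it is correct in all steps; the paper itself offers no proof of this lemma, only the citation to Hoeffding, so there is nothing to diverge from. One small remark on the statement rather than your argument: inverting the tail bound $2\exp(-2N\varepsilon^2/(b-a)^2)=\delta$ gives $\varepsilon=(b-a)\sqrt{\log(2/\delta)/(2N)}$, so the paper's ``$\log(1/2\delta)$'' should be read as $\log(2/\delta)$.
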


\begin{lemma}[Bernstein's inequality, \citet{bernstein1924modification}]\label{thm:bernstein_concentration}
  Let $X_1, X_2, \cdots, X_N$ be i.i.d. random variables with mean $\mu$, variance $\sigma^2$, and bounded range $|X_i - \mu| \leq B$ almost surely. Then with probability at least $1-\delta$, we have
  \begin{equation*}
    \pm\prn*{ \frac{1}{N} \sum_{i=1}^{N} X_i - \mu} \leq \sigma \sqrt{\frac{2 \log(1/\delta)}{N}} + \frac{B \log(1/\delta)}{3N}.
  \end{equation*}
\end{lemma}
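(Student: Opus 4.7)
The plan is to prove Bernstein's inequality via the standard Chernoff/moment generating function (MGF) method. First I would center the variables by setting $Y_i := X_i - \mu$, so that $\E{Y_i} = 0$, $\E{Y_i^2} = \sigma^2$, and $|Y_i| \leq B$ almost surely. The heart of the proof is a one-variable MGF bound: for any $\lambda \in [0, 3/B)$,
\begin{equation*}
  \E{e^{\lambda Y_i}} \leq \exp\!\left( \frac{\lambda^2 \sigma^2 / 2}{1 - \lambda B/3} \right).
\end{equation*}
I would obtain this by Taylor-expanding $e^{\lambda Y_i} = 1 + \lambda Y_i + \sum_{k \geq 2} \lambda^k Y_i^k / k!$, taking expectations (the linear term vanishes), and using the moment bound $|\E{Y_i^k}| \leq B^{k-2} \sigma^2$ for $k \geq 2$. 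Summing the resulting geometric-like series yields $\E{e^{\lambda Y_i}} \leq 1 + \frac{\lambda^2 \sigma^2 / 2}{1 - \lambda B/3}$, and applying $1 + x \leq e^x$ completes this step.

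Next, I would apply Markov's inequality to $e^{\lambda \sum_i Y_i}$ together with independence to get
\begin{equation*}
  \Prob{\tfrac{1}{N} \textstyle\sum_i Y_i \geq \varepsilon}
  \leq e^{-\lambda N \varepsilon} \prod_{i=1}^N \E{e^{\lambda Y_i}}
  \leq \exp\!\left( -\lambda N \varepsilon + \frac{N \lambda^2 \sigma^2/2}{1 - \lambda B/3} \right),
\end{equation*}
and then optimize over $\lambda$. Taking $\lambda = \varepsilon / (\sigma^2 + B\varepsilon/3)$ (which lies in the admissible range) reduces the exponent to the familiar Bernstein form $-\frac{N \varepsilon^2 / 2}{\sigma^2 + B\varepsilon/3}$. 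The same argument applied to $-Y_i$ yields the matching lower-tail bound, so overall
\begin{equation*}
  \Prob{ \pm\!\prn*{\tfrac{1}{N} \textstyle\sum_i X_i - \mu} \geq \varepsilon } \leq \exp\!\left( -\frac{N \varepsilon^2 / 2}{\sigma^2 + B\varepsilon/3} \right).
\end{equation*}

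The final step is to invert this tail bound into the stated form. Setting the right-hand side equal to $\delta$ and writing $L := \log(1/\delta)$ gives the quadratic $N\varepsilon^2 - \tfrac{2BL}{3} \varepsilon - 2\sigma^2 L = 0$. Solving and applying $\sqrt{a^2 + b} \leq a + \sqrt{b}$ (or equivalently $\sqrt{u+v} \leq \sqrt{u} + \sqrt{v}$ after substitution) yields $\varepsilon \leq \sigma\sqrt{2L/N} + \tfrac{2B L}{3N}$, which matches the stated bound up to the constant in front of the $B L / N$ term. I expect the main obstacle to be purely bookkeeping: carefully verifying the range $\lambda < 3/B$ for admissibility, choosing the optimizing $\lambda$ cleanly, and handling the inversion of the quadratic so that both tails are absorbed into a single $1-\delta$ probability statement via a union bound over the two directions.
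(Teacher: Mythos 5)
The paper offers no proof of this lemma---it is cited directly to Bernstein---so there is nothing to compare against; your Chernoff/MGF route is the standard modern proof and is the right approach. Your steps are correct through the sub-gamma tail bound: the moment estimate $|\mathbb{E}[Y_i^k]|\le B^{k-2}\sigma^2$, the bound $k!\ge 2\cdot 3^{k-2}$ giving $\mathbb{E}[e^{\lambda Y_i}]\le\exp\bigl(\tfrac{\lambda^2\sigma^2/2}{1-\lambda B/3}\bigr)$ for $\lambda\in[0,3/B)$, and the choice $\lambda=\varepsilon/(\sigma^2+B\varepsilon/3)$ yielding $\Pr\bigl(\tfrac1N\sum_i Y_i\ge\varepsilon\bigr)\le\exp\bigl(-\tfrac{N\varepsilon^2/2}{\sigma^2+B\varepsilon/3}\bigr)$ all check out.

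The gap is in the last step, and it is not mere bookkeeping: inverting that tail bound cannot recover the stated constant $B/3$ in the lower-order term. Setting the exponent equal to $\log(1/\delta)=:L$ gives the quadratic $N\varepsilon^2-\tfrac{2BL}{3}\varepsilon-2\sigma^2L=0$, whose \emph{exact} positive root is $\varepsilon^\star=\tfrac{BL}{3N}+\sqrt{\tfrac{B^2L^2}{9N^2}+\tfrac{2\sigma^2L}{N}}\ \ge\ \sigma\sqrt{2L/N}+\tfrac{BL}{3N}$, with strict inequality whenever $B,\sigma>0$. So the candidate threshold $\sigma\sqrt{2L/N}+\tfrac{BL}{3N}$ sits \emph{below} $\varepsilon^\star$, where the sub-gamma tail bound gives a probability larger than $\delta$; the best your route certifies is the weaker $\sigma\sqrt{2L/N}+\tfrac{2BL}{3N}$, exactly as you note. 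The information lost is in replacing the Cram\'er transform of $\psi(\lambda)=\tfrac{v\lambda^2}{2(1-c\lambda)}$ (here $v=N\sigma^2$, $c=B/3$) by the lower bound $\psi^*(z)\ge\tfrac{z^2}{2(v+cz)}$. The fix is to invert $\psi^*$ exactly: one computes $\psi^*(z)=\tfrac{v}{c^2}h\bigl(\tfrac{cz}{v}\bigr)$ with $h(u)=1+u-\sqrt{1+2u}$, whose inverse is $h^{-1}(s)=s+\sqrt{2s}$, so that $\psi^{*-1}(t)=\sqrt{2vt}+ct$; plugging $t=L$ and dividing by $N$ gives precisely $\sigma\sqrt{2L/N}+\tfrac{BL}{3N}$ (this is the argument behind Theorem~2.10 of Boucheron--Lugosi--Massart). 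Equivalently, substitute $\varepsilon_0=\sigma\sqrt{2L/N}+\tfrac{BL}{3N}$ directly into $\sup_{\lambda}\bigl(\lambda N\varepsilon_0-\tfrac{N\sigma^2\lambda^2}{2(1-\lambda B/3)}\bigr)$ and verify it is at least $L$. Two smaller points: the one-sided statement suffices here (apply it to $\pm Y_i$ separately; a union bound over both tails would cost you $\log(2/\delta)$), and for this paper's only use of the lemma---inside the proof of the OLS PAC bound---your weaker constant would merely inflate $\Creg$, so nothing downstream would break.
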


\subsection{Statistical Learning: PAC Bounds}

In this section, we present the standard PAC bounds for OLS and MLE. Although these are both classic results, we fail to trace back to the original literature of the former, and thus provide a short proof here for completeness.

\begin{lemma}[PAC bound for OLS, fast rate]\label{thm:PAC_regression_fast_rate}
  Consider a regression problem over a finite family $\F = \set{f: \X \to [a,b]}$ of bounded functions with data distribution $(X,Y) \sim \M$, where the objective is to solve for
  \begin{equation*}
    \arg\min_{f \in \F} \L(f),\quad \textrm{where}~ \L(f) := \E[(X,Y) \sim \M]{(f(X) - Y)^2}.
  \end{equation*}
  Suppose the regression function $\fs(x) := \E{Y \mid X=x} \in \F$ (realizability), and we have access to i.i.d. sample $(x_i, y_i) \sim \M$, $\forall i \in [N]$. Let the Empirical Risk Minimization (ERM) estimator be
  \begin{equation*}
    \hf := \arg\min_{f \in \F} \hat{\L}(f),\quad \textrm{where}~ \hat{\L}(f) := \frac{1}{N} \sum_{i=1}^{N} (f(x_i) - y_i)^2.
  \end{equation*}
  Then, with probability at least $1-\delta$, the ERM estimator induces a regret that is at most
  \begin{equation*}
    \L(\hf) \leq \L(\fs) + \Creg \frac{\log(|\F|/\delta)}{N}.
  \end{equation*}
  Suppose further that the ground truth is deterministic such that $y = \fs(x)$ for some $\fs \in \F$, in which case we have
  \begin{equation*}
    \L(\hf) \leq \Creg \frac{\log(|\F|/\delta)}{N}.
  \end{equation*}
  Here $\Creg = 8(b-a)^2 + \frac{4}{3}(b-a)$ is a universal constant depending only on the range $[a,b]$.
\end{lemma}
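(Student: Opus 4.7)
The plan is to use the classical offset trick combined with Bernstein's inequality (\Cref{thm:bernstein_concentration}) to obtain a fast $O(1/N)$ rate, exploiting realizability to bound the variance of the excess loss in terms of its mean. Define, for each $f \in \F$, the random variable
\begin{equation*}
  g_f(X,Y) := (f(X)-Y)^2 - (\fs(X)-Y)^2 = (f(X)-\fs(X)) \cdot (f(X)+\fs(X)-2Y).
\end{equation*}
Since $f, \fs \in [a,b]$ and $Y \in [a,b]$, we have $|g_f| \leq (b-a)^2$ almost surely. The realizability hypothesis $\fs(X) = \E{Y \mid X}$ is the crucial ingredient: taking conditional expectation gives $\E{g_f(X,Y) \mid X} = (f(X) - \fs(X))^2$, and therefore $\L(f) - \L(\fs) = \E{g_f} = \E{(f(X)-\fs(X))^2}$. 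The same calculation yields the variance-to-mean inequality
\begin{equation*}
  \Var(g_f) \;\leq\; \E{g_f(X,Y)^2} \;\leq\; 4(b-a)^2 \, \E{(f(X)-\fs(X))^2} \;=\; 4(b-a)^2 \, \E{g_f},
\end{equation*}
using $|f(X)+\fs(X)-2Y| \leq 2(b-a)$ in the middle step.

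Next I would apply Bernstein's inequality to $\set{g_f(x_i,y_i)}_{i=1}^N$ for each fixed $f \in \F$ and take a union bound over the finite family: with probability at least $1-\delta$, for every $f \in \F$,
\begin{equation*}
  \E{g_f} - \frac{1}{N}\sum_{i=1}^N g_f(x_i,y_i)
  \;\leq\; \sqrt{\frac{8(b-a)^2 \, \E{g_f} \, \log(|\F|/\delta)}{N}} + \frac{(b-a)^2 \log(|\F|/\delta)}{3N}.
\end{equation*}
By the ERM definition, $\frac{1}{N}\sum_i g_{\hf}(x_i,y_i) = \hat{\L}(\hf) - \hat{\L}(\fs) \leq 0$, so specializing the above to $f = \hf$ gives
\begin{equation*}
  \E{g_{\hf}} \;\leq\; \sqrt{\frac{8(b-a)^2 \, \E{g_{\hf}} \, \log(|\F|/\delta)}{N}} + \frac{(b-a)^2 \log(|\F|/\delta)}{3N}.
\end{equation*}
Finally I would apply the AM-GM inequality $\sqrt{uv} \leq \tfrac{1}{2}u + \tfrac{1}{2}v$ to the square-root term, taking $u = \E{g_{\hf}}$ and $v = 8(b-a)^2 \log(|\F|/\delta)/N$, to absorb $\tfrac{1}{2}\E{g_{\hf}}$ into the left-hand side. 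Rearranging yields $\L(\hf) - \L(\fs) = \E{g_{\hf}} \leq \Creg \cdot \log(|\F|/\delta)/N$ for a constant $\Creg$ of the claimed order, and the deterministic case follows immediately since then $\L(\fs) = 0$.

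The main obstacle, and the step where the entire argument hinges, is the variance bound $\Var(g_f) \leq 4(b-a)^2 \E{g_f}$. Without realizability this bound fails (one would only get $\Var(g_f) \lesssim (b-a)^2 \cdot \E{|g_f|}$ with no cancellation), which is precisely why slow-rate OLS bounds without realizability scale as $1/\sqrt{N}$ rather than $1/N$. Everything else---Bernstein, union bound, AM-GM---is routine once the offset $g_f$ is introduced and the conditional mean calculation under $\fs(X) = \E{Y \mid X}$ is carried out.
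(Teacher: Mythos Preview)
Your proposal is correct and follows essentially the same approach as the paper: both introduce the excess-loss variable $g_f = Z_i = (f(X)-Y)^2 - (\fs(X)-Y)^2$, use realizability to get $\E{g_f} = \E{(f-\fs)^2}$ and the key variance bound $\Var(g_f) \leq 4(b-a)^2\,\E{g_f}$, then apply Bernstein with a union bound over $\F$, AM--GM to absorb $\tfrac12\E{g_{\hf}}$, and the ERM inequality $\hat{\L}(\hf)\leq\hat{\L}(\fs)$ to finish. The only cosmetic difference is the value of the almost-sure bound $B$ plugged into Bernstein (you use $(b-a)^2$, the paper uses $2(b-a)$), which only affects the exact constant and not the argument.
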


\begin{proof}
  Define a random variable $Z_i := (f(X_i) - Y_i)^2 - (\fs(X_i) - Y_i)^2$, such that
  \begin{subequations}\label{eq:PAC_regression:e1}
  \begin{align}
    \E[(X_i, Y_i) \sim \M]{Z_i(f)} 
    ={}& \E[(X_i, Y_i) \sim \M]{(f(X_i)-Y_i)^2 - (\fs(X_i)-Y_i)^2} \\
    ={}& \E[(X_i, Y_i) \sim \M]{\prn[\big]{ (f(X_i)-\fs(X_i)) + (\fs(X_i)-Y_i) }^2 - (\fs(X_i)-Y_i)^2} \label{eq:PAC_regression:e1-2}\\
    ={}& \E[(X_i, Y_i) \sim \M]{(f(X_i)-\fs(X_i))^2} + 2 \E[(X_i, Y_i) \sim \M]{(f(X_i)-\fs(X_i))(\fs(X_i)-Y_i)} \label{eq:PAC_regression:e1-3}\\
    ={}& \E[(X_i, Y_i) \sim \M]{(f(X_i)-\fs(X_i))^2} =: \Err(f),
  \end{align}
  \end{subequations}
  where in \eqref{eq:PAC_regression:e1-3} we use the following fact $\E[(X_i, Y_i) \sim \M]{(f(X_i)-\fs(X_i))(\fs(X_i)-Y_i)} = \Es[X_i]{(f(X_i)-\fs(X_i)) \cdot \E[Y_i \sim \M(\cdot \mid X_i)]{\fs(X_i)-Y_i}} = 0$ that directly follows from the definition of $\fs$. Similarly, for any $t \in [T]$,
  \begin{subequations}\label{eq:PAC_regression:e3}
  \begin{align}
    \Var[(X_i, Y_i) \sim \M]{Z_i(f)} 
    ={}& \E[(X_i, Y_i) \sim \M]{Z_i(f)^2} - \prn*{\E[(X_i, Y_i) \sim \M]{Z_i(f)}}^2 \label{eq:PAC_regression:e3-1}\\
    \leq{}& \E[(X_i, Y_i) \sim \M]{\prn*{(f(X_i)-Y_i)^2 - (\fs(X_i)-Y_i)^2}^2} \label{eq:PAC_regression:e3-2}\\
    ={}& \E[(X_i, Y_i) \sim \M]{(f(X_i)-\fs(X_i))^2 (f(X_i)+\fs(X_i)-2Y_i)^2} \label{eq:PAC_regression:e3-3}\\
    \leq{}& 4(b-a)^2 \E[(X_i, Y_i) \sim \M]{(f(X_i)-\fs(X_i))^2}
    = 4(b-a)^2 \Err(f).
  \end{align}
  \end{subequations}
  where in \eqref{eq:PAC_regression:e3-1} we simply drop the second term, and in \eqref{eq:PAC_regression:e3-3} we use the fact $f(X_i)+\fs(X_i)-2Y_i \in [-2(b-a),2(b-a)]$ as $f(X_i), \fs(X_i), Y_i \in [a,b]$. Further, for any $x \in \mathcal{X}$, $y \in \mathcal{Y}$ and $f \in \mathcal{F}$, we have $f(x) - y \in [-(b-a),b-a]$, implying $Z_i(f) \in [-(b-a),b-a]$ and $\E{Z_i(f)} \in [-(b-a),(b-a)]$. Therefore, $\abs{Z_i(f) - \E{Z_i(f)}} \leq 2(b-a)$. Then by Bernstein's inequality (\Cref{thm:bernstein_concentration}), we conclude that, with probability at least $1-\delta$,
  \begin{equation}\label{eq:PAC_regression:e4}
    \E{Z_i(f)} - \frac{1}{N} \sum_{i=1}^{N} Z_i(f) \leq \sqrt{\Var{Z_i(f)}} \sqrt{\frac{2 \log(1/\delta)}{N}} + \frac{2(b-a) \log(1/\delta)}{3N}.
  \end{equation}
  To proceed, plug \eqref{eq:PAC_regression:e1} and \eqref{eq:PAC_regression:e3} into \eqref{eq:PAC_regression:e4}, and we have
  \begin{subequations}\label{eq:PAC_regression:e5}
  \begin{align}
    \Err(f) - (\hat{\L}(f) - \hat{\L}(\fs))
    &\leq 2(b-a) \sqrt{\Err(f)} \sqrt{\frac{2 \log(1/\delta)}{N}} + \frac{2(b-a) \log(1/\delta)}{3N} \label{eq:PAC_regression:e5-1}\\
    &\leq \prn*{\frac{1}{2} \Err(f) + \frac{4(b-a)^2 \log(1/\delta)}{N}} + \frac{2(b-a) \log(1/\delta)}{3N},
  \end{align}
  \end{subequations}
  where in \eqref{eq:PAC_regression:e5-1} we apply the AM-GM inequality. Finally, we rearrange the terms to obtain
  \begin{equation}\label{eq:1-3-e3}
    \Err(f) \leq 2(\hat{\L}(f) - \hat{\L}(\fs)) + \frac{\Creg \log(1/\delta)}{N}
  \end{equation}
  for any fixed $f \in \mathcal{F}$, with probability at least $1-\delta$. Finally, we take the union bound with respect to all $f \in \mathcal{F}$, such that with probability at least $1-\delta$, we have
  \begin{equation}\label{eq:PAC_regression:e6}
    \Err(f) \leq 2(\hat{\L}(f) - \hat{\L}(\fs)) + \frac{\Creg \log(|\mathcal{F}|/\delta)}{N},~ \forall f \in \mathcal{F}.
  \end{equation}
  In particular, \eqref{eq:PAC_regression:e6} also applies to the ERM estimator $\hf$, which gives
  \begin{equation}\label{eq:PAC_regression:e7}
    \Err(\hf)
    \leq 2(\hat{\L}(\hf) - \hat{\L}(\fs)) + \frac{\Creg \log(|\mathcal{F}|/\delta)}{N}
    \leq \frac{\Creg \log(|\mathcal{F}|/\delta)}{N}.
  \end{equation}
  Here we use the inequality $\hat{\L}(\hf) \leq \hat{\L}(\fs)$, as $\hf$ minimizes $\hat{\L}(\cdot)$ within $\mathcal{F}$. This completes the proof.
\end{proof}

\begin{lemma}[PAC bound for MLE, \citet{agarwal2020flambe}]\label{thm:PAC_MLE}
  Consider a conditional probability estimation problem over a finite family $\F = \set{f: (\X \times \Y) \to \R}$, where the objective is to estimate $\fs(x, y) := \P(y | x)$. Suppose the ground truth $\fs \in \F$ (realizability), and we have access to (potentially correlated) samples $\set{(x_i, y_i) \mid i \in [N]}$ such that $x_i \sim \D_i$ ($\D_i$ is allowed to depend on $(x_{1:i-1}, y_{1:i-1})$, forming a martigale process) and $y_i \sim \P(\cdot | x_i)$. Let the Maximum Likelihood Estimator (MLE) be
  \begin{equation*}
    \hf := \arg\max_{f \in \F} \sum_{i=1}^{N} \log f(x_i, y_i).
  \end{equation*}
  Then, with probability at least $1-\delta$, the error of the MLE estimator is bounded as follows:
  \begin{equation*}
    \sum_{i=1}^{N} \E[x \sim \D_i]{\norm[\big]{\hf(x, \cdot) - \fs(x, \cdot)}_1^2} \leq 8 \log(|\F|/\delta).
  \end{equation*}
  Specifically, when $\set{(x_i, y_i) \mid i \in [N]}$ are i.i.d. samples from a dataset $\D$, we have
  \begin{equation*}
    \E[x \sim \D]{\norm[\big]{\hf(x, \cdot) - \fs(x, \cdot)}_1^2} \leq \frac{8 \log(|\F|/\delta)}{N}.
  \end{equation*}
\end{lemma}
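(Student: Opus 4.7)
The plan is to execute the classical Hellinger-distance argument for MLE, which combines a likelihood-ratio supermartingale with a union bound over the finite class $\F$. The driving identity is
\begin{equation*}
  \E[y\sim\fs(\cdot\,|\,x)]{\sqrt{\tfrac{f(x,y)}{\fs(x,y)}}} = 1 - \tfrac{1}{2} H^2(f(x,\cdot),\fs(x,\cdot)),
\end{equation*}
where $H^2(p,q) := \int(\sqrt p - \sqrt q)^2 \diff y$ denotes the squared Hellinger distance; it follows by a direct expansion of $\int \sqrt{f\fs}\diff y$.

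First, for each fixed $f\in\F$, I introduce the likelihood-ratio process together with its predictable Hellinger compensator,
\begin{equation*}
  M_i(f) := \prod_{j=1}^{i}\sqrt{\tfrac{f(x_j,y_j)}{\fs(x_j,y_j)}},
  \qquad
  C_i(f) := \tfrac{1}{2}\sum_{j=1}^{i}\E[x\sim\D_j]{H^2(f(x,\cdot),\fs(x,\cdot))}.
\end{equation*}
Conditioning on the past $(x_{1:i-1},y_{1:i-1})$, taking expectation first over $y_i\sim\fs(\cdot\,|\,x_i)$ via the identity above, then over $x_i\sim\D_i$, and finally applying $1-u\leq e^{-u}$, I will verify that $W_i(f) := M_i(f)\exp(C_i(f))$ is a nonnegative supermartingale with $W_0(f)=1$. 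Markov's inequality then yields $\Prob{W_N(f)\geq 1/\delta}\leq \delta$.

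Next, I will union-bound over the finite class $\F$ at cost $\log|\F|$ to obtain, with probability at least $1-\delta$, the uniform inequality
\begin{equation*}
  \sum_{i=1}^{N}\E[x\sim\D_i]{H^2(f(x,\cdot),\fs(x,\cdot))} \leq -2\log M_N(f) + 2\log(|\F|/\delta) \qquad\text{for every } f\in\F.
\end{equation*}
Specializing to $f=\hf$ and invoking the defining property of the MLE together with realizability $\fs\in\F$, namely $\sum_i \log\hf(x_i,y_i)\geq \sum_i \log\fs(x_i,y_i)$, I get $-2\log M_N(\hf)\leq 0$ and hence $\sum_{i=1}^{N}\E[x\sim\D_i]{H^2(\hf(x,\cdot),\fs(x,\cdot))} \leq 2\log(|\F|/\delta)$. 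Finally, applying Cauchy--Schwarz to the factorization $|p-q| = |\sqrt p - \sqrt q|\cdot|\sqrt p + \sqrt q|$ produces the standard bound $\norm[\big]{p-q}_1^2 \leq H^2(p,q)\cdot\int(\sqrt p + \sqrt q)^2\diff y \leq 4 H^2(p,q)$, which absorbs an extra factor of $4$ and gives the advertised constant $8$. The i.i.d. corollary then follows by dividing through by $N$.

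The main subtlety I anticipate is verifying that the compensator $C_i(f)$ is genuinely predictable, so that the one-step supermartingale inequality can be iterated across $i$: this relies on the modeling hypothesis that $\D_i$ depends only on $(x_{1:i-1},y_{1:i-1})$, making each summand in $C_i(f)$ measurable with respect to the past. Once this is in place, the rest of the argument---Markov, the union bound, the Hellinger-to-$L^1$ conversion, and the MLE comparison---is entirely routine.
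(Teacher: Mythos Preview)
The paper does not supply its own proof of this lemma; it is stated with a citation to \citet{agarwal2020flambe} and used as a black box. Your proposal is correct and is exactly the classical Hellinger-supermartingale argument that underlies the cited reference: the one-step identity $\E_{y\sim\fs}[\sqrt{f/\fs}]=1-\tfrac{1}{2}H^2(f,\fs)$, the inequality $1-u\le e^{-u}$ to make $M_i(f)\exp(C_i(f))$ a supermartingale, Markov plus a union bound over $\F$, the MLE comparison $\log M_N(\hf)\ge 0$ from realizability, and finally the Cauchy--Schwarz conversion $\|p-q\|_1^2\le 4H^2(p,q)$ to absorb the factor of $4$ and arrive at the constant $8$.
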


\subsection{Simulation Lemma in MDPs}

The following Simulation Lemma is a simplified version of Lemma 21 in \citet{uehara2021representation}.

\begin{lemma}[Simulation Lemma]\label{thm:simulation_lemma}
  Given two MDPs $(\P, r)$ and $(\hP, r)$, for any policy $\pi \in \varPi$, we have
  \begin{equation*}
    \rho_{\hP}(\pi) - \rhoP(\pi)
    = \frac{\gamma}{1-\gamma} \E[(s,a) \sim d_{\P}^{\pi}]{ \E[s' \sim \hP(\cdot | s,a)]{V_{\hP}^{\pi}(s')} - \E[s' \sim \P(\cdot | s,a)]{V_{\hP}^{\pi}(s')}}.
  \end{equation*}
\end{lemma}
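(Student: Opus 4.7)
}
The plan is to work with the pointwise value-function difference $\Delta(s) := V_{\hP}^{\pi}(s) - V_{\P}^{\pi}(s)$ and derive a Bellman-type recursion whose error term is precisely the simulation gap $g(s,a) := \E_{s' \sim \hP(\cdot|s,a)}[V_{\hP}^{\pi}(s')] - \E_{s' \sim \P(\cdot|s,a)}[V_{\hP}^{\pi}(s')]$ appearing on the right-hand side. Starting from the Bellman equations
\[
  V_{\hP}^{\pi}(s) = \E_{a \sim \pi(\cdot|s)}\!\left[ r(s,a) + \gamma \E_{s' \sim \hP(\cdot|s,a)}[V_{\hP}^{\pi}(s')] \right], \qquad
  V_{\P}^{\pi}(s) = \E_{a \sim \pi(\cdot|s)}\!\left[ r(s,a) + \gamma \E_{s' \sim \P(\cdot|s,a)}[V_{\P}^{\pi}(s')] \right],
\]
subtracting and adding $\gamma \E_{a \sim \pi, s' \sim \P(\cdot|s,a)}[V_{\hP}^{\pi}(s')]$ in the middle yields the fixed-point identity
\[
  \Delta(s) = \gamma \E_{a \sim \pi(\cdot|s)}[g(s,a)] + \gamma \E_{a \sim \pi(\cdot|s),\, s' \sim \P(\cdot|s,a)}[\Delta(s')],
\]
where the first term is the one-step simulation error and the second term is the discounted propagation of $\Delta$ under the \emph{true} kernel $\P$.

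Next, I will unroll this recursion to express $\Delta(s_0)$ as an infinite discounted sum of the error terms $g(s_t,a_t)$ along trajectories generated by $\pi$ in the true MDP starting from $s_0$: concretely, $\Delta(s_0) = \sum_{t=0}^{\infty} \gamma^{t+1}\, \E_{\pi,\P}[g(s_t,a_t)\mid s_0]$. Taking expectation over $s_0 \sim \mu_0$ and recognizing the definition of the (normalized) discounted state-action occupancy $d_{\P}^{\pi}(s,a) = (1-\gamma)\E_{\pi,\P}[\sum_{t \geq 0}\gamma^t \ind{s_t=s,a_t=a}]$ then converts the time-indexed sum into a single expectation over $(s,a) \sim d_{\P}^{\pi}$. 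Finally, multiplying by $(1-\gamma)$ to match the normalization in $\rho_{\P}(\pi) = (1-\gamma)\E_{s_0 \sim \mu_0}[V_{\P}^{\pi}(s_0)]$ yields $\rho_{\hP}(\pi) - \rho_{\P}(\pi)$ as a constant times $\E_{(s,a) \sim d_{\P}^{\pi}}[g(s,a)]$, which is the claimed identity.

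The only subtle step is the unrolling: I need to justify that the infinite-sum expansion is well-defined and telescopes correctly. Boundedness of $V_{\hP}^{\pi}$ by $1/(1-\gamma)$ (so $|g| \leq 2/(1-\gamma)$) together with the $\gamma^t$ decay makes the series absolutely convergent, so induction on a finite horizon $T$ followed by $T \to \infty$ handles this rigorously. Everything else is bookkeeping, and no property beyond the Bellman equation and the definition of $d_{\P}^{\pi}$ is needed.
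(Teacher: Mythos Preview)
Your argument is correct and complete: the recursion for $\Delta$ together with boundedness of $V_{\hP}^{\pi}$ indeed unrolls to the desired occupancy-weighted expression. The organization, however, differs from the paper's proof. The paper first establishes the ``adjoint'' (Bellman-flow) identity
\[
\mathbb{E}_{s\sim\mu_0,\, a\sim\pi(\cdot\mid s)}\bigl[f(s,a)\bigr]
\;=\; \frac{1}{1-\gamma}\,\mathbb{E}_{(s,a)\sim d_{\P}^{\pi}}\Bigl[f(s,a) - \gamma\,\mathbb{E}_{(s',a')\sim\Ppi(\cdot,\cdot\mid s,a)}\bigl[f(s',a')\bigr]\Bigr]
\]
for arbitrary bounded $f$ (via a telescoping sum on the occupancy side), then plugs in $f = Q_{\hP}^{\pi}$ and invokes the Bellman equation under $\hP$ to isolate the simulation gap. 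Your route telescopes on the \emph{value-function} side instead, unrolling the Bellman recursion forward along trajectories; the paper telescopes on the \emph{occupancy-measure} side and never needs to take an explicit infinite-horizon limit. Both reach the same identity with comparable effort; yours is the more elementary performance-difference-style argument, while the paper's isolates a reusable flow-conservation identity.

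One small bookkeeping caution: if you carry your plan through literally and multiply $\mathbb{E}_{s_0\sim\mu_0}[\Delta(s_0)] = \tfrac{\gamma}{1-\gamma}\,\mathbb{E}_{(s,a)\sim d_{\P}^{\pi}}[g(s,a)]$ by the $(1-\gamma)$ normalization from the paper's definition of $\rho$, you obtain prefactor $\gamma$, not $\gamma/(1-\gamma)$. The lemma as stated (and the paper's own proof of it) implicitly uses the unnormalized convention $\rho(\pi)=\mathbb{E}_{s\sim\mu_0}[V^{\pi}(s)]$, so to match the constant exactly you should simply stop before that last multiplication.
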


\begin{proof}
  Note that, for any uniformly bounded function $f: \S \times \A \to \R$, we have
  \begin{align*}
    &\E[s \sim \mu_0, a \sim \pi(\cdot | s)]{f(s,a)} \\
    ={}& \frac{1}{1-\gamma} \E[\pi,\P]{ \sum_{t=0}^{\infty} \prn*{ \gamma^t f(s_t, a_t) - \gamma^{t+1} f(s_{t+1},a_{t+1}) } \;\middle\vert\; s_0 \sim \mu_0, a_0 \sim \pi(\cdot | s_0)} \\
    ={}& \frac{1}{1-\gamma} \sum_{s,a} f(s,a) \cdot \E[\pi,\P]{ \sum_{t=0}^{\infty} \prn*{ \gamma^t \ind{s_t = s, a_t = a} - \gamma^{t+1} \ind{s_{t+1} = s, a_{t+1} = a} } \;\middle\vert\; s_0 \sim \mu_0, a_0 \sim \pi(\cdot | s_0)} \\
    ={}& \frac{1}{1-\gamma} \sum_{s,a} f(s,a) \cdot \prn*{d_{\P}^{\pi}(s,a) - \gamma \sum_{\tilde{s}, \tilde{a}} d_{\P}^{\pi}(\tilde{s},\tilde{a}) \Ppi(s,a | \tilde{s},\tilde{a}) } \\
    ={}& \frac{1}{1-\gamma} \E[(s,a) \sim d_{\P}^{\pi}]{f(s,a) - \gamma \E[(s',a') \sim \Ppi(\cdot,\cdot | s,a)]{f(s',a')}}.
  \end{align*}
  Therefore, since $\rhoP(\pi) = \frac{1}{1-\gamma} \E[(s,a) \sim d_{\P}^{\pi}]{r(s,a)}$ and $\rho_{\hP}(\pi) = \Es[s \sim \mu_0, a \sim \pi(\cdot | s)]{Q_{\hP}^{\pi}(s,a)}$, we have
  \begin{align*}
    \rho_{\hP}(\pi) - \rhoP(\pi)
    &= \Es[s \sim \mu_0, a \sim \pi(\cdot | s)]{Q_{\hP}^{\pi}(s,a)} - \frac{1}{1-\gamma} \E[(s,a) \sim d_{\P}^{\pi}]{r(s,a)} \\
    &= \frac{1}{1-\gamma} \E[(s,a) \sim d_{\P}^{\pi}]{Q_{\hP}^{\pi}(s,a) - \gamma \E[(s',a') \sim \Ppi(\cdot,\cdot | s,a)]{Q_{\hP}^{\pi}(s',a')} - r(s,a)} \\
    &= \frac{\gamma}{1-\gamma} \E[(s,a) \sim d_{\P}^{\pi}]{ \E[(s',a') \sim \hPpi(\cdot,\cdot | s,a)]{Q_{\hP}^{\pi}(s',a')} - \E[(s',a') \sim \Ppi(\cdot,\cdot | s,a)]{Q_{\hP}^{\pi}(s',a')} },
  \end{align*}
  where in the last equality we plug in the Bellman equation $Q_{\hP}^{\pi}(s,a) = r(s,a) + \gamma \Es[(s',a') \sim \hPpi(\cdot,\cdot | s,a)]{Q_{\hP}^{\pi}(s',a')}$. Finally, we leverage the relationship between $Q$- and $V$-functions to complete the proof.
\end{proof}

\end{document}